\definecolor{xlWhite}{HTML}{FFFFFF}
\definecolor{xlGray}{HTML}{DBDBDB}
\definecolor{xlGreen}{HTML}{C6DEB5}
\definecolor{xlBlue}{HTML}{BDD7EE}
\definecolor{xlPink}{HTML}{F8CBAD}
\newcommand{\drift}{\xi}                  
\newcommand{\driftt}[1]{\drift_{#1}}      
\newcommand{\drifthat}{\widehat{\drift}}  
\newcommand{\drifthatT}[1]{\drifthat_{#1}} 
\theoremstyle{plain}
\newtheorem{theorem}{Theorem}[section]
\newtheorem{proposition}[theorem]{Proposition}
\newtheorem{lemma}[theorem]{Lemma}
\theoremstyle{definition}
\newtheorem{assumption}[theorem]{Assumption}
\theoremstyle{remark}
\newtheorem{remark}[theorem]{Remark}
\icmltitlerunning{Tracking Drift: Variation-Aware Entropy Scheduling for Non-Stationary Reinforcement Learning}
\begin{document}

\twocolumn[
  \icmltitle{Tracking Drift: Variation-Aware Entropy Scheduling for Non-Stationary Reinforcement Learning}



  \icmlsetsymbol{equal}{*}

  \begin{icmlauthorlist}
    \icmlauthor{Tongxi Wang}{seu}
    \icmlauthor{Zhuoyang Xia}{seu}
    \icmlauthor{Xinran Chen}{seu}
    \icmlauthor{Shan Liu}{seua}
  \end{icmlauthorlist}

  \icmlaffiliation{seu}{School of Future Technology, Southeast University, Nanjing, China}
  \icmlaffiliation{seua}{School of Automation, Southeast University, Nanjing, China}

  \icmlcorrespondingauthor{Shan Liu}{liushan22@seu.edu.cn}

  \icmlkeywords{Reinforcement learning, Entropy Regularization, Variation Aware, Exploration–Exploitation Trade-off}

  \vskip 0.3in
]



\printAffiliationsAndNotice{\textsuperscript{\(\ddagger\)} Code and supplementary material are available at: \href{https://github.com/WtxwNs/AES}{https://github.com/WtxwNs/AES}.}    

\begin{abstract}
Real-world reinforcement learning often faces environment drift, but most existing methods rely on static entropy coefficients/target entropy, causing over-exploration during stable periods and under-exploration after drift, and leaving unanswered the principled question of how exploration intensity should scale with drift magnitude.
We show that, under standard assumptions, entropy scheduling in non-stationary maximum-entropy RL can be cast as the dynamic-regret trade-off between tracking a drifting comparator and stabilizing updates, yielding a square-root scaling rule for the entropy weight in terms of a online non-stationarity proxy.
Building on this, we propose AES--Adaptive Entropy Scheduling--which adaptively adjusts the entropy coefficient/temperature online using observable drift proxies during training, requiring almost no structural changes and incurring minimal overhead.
Across 4 algorithm variants, 12 tasks, and 4 drift modes, AES significantly reduces the fraction of performance degradation caused by drift and accelerates recovery after abrupt changes.
\end{abstract}

\begin{figure}[ht]
  \centering
  \includegraphics[width=\linewidth]{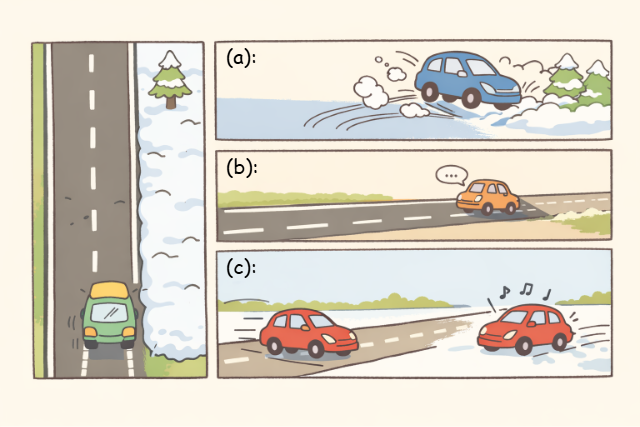}
  \caption{\textbf{Adaptive exploration for non-stationary RL.} Left: environment dynamics/objectives drift over time. Right: (a) too little exploration slows recovery after abrupt changes; (b) too much exploration harms performance in stable phases; (c) drift-aware scheduling adapts exploration to recover quickly while avoiding unnecessary randomness.}
\end{figure}

\section{Introduction}

Reinforcement learning (RL) has achieved remarkable success across a wide range of domains
\citep{sutton1998reinforcement, Mnih2015HumanlevelCTA}.
However, most existing theory and many practical algorithms are developed under a
\textbf{stationarity assumption}, namely that the reward function and environment dynamics remain
fixed over time.
In real-world applications, this assumption is often violated: robots encounter changing physical
conditions, autonomous driving systems must cope with evolving traffic patterns, and recommender
systems are required to continuously adapt to shifting user preferences.
In such scenarios, an agent effectively interacts not with a single stationary Markov decision
process (MDP), but with a \textbf{sequence of time-varying MDPs}
\citep{Uther2004MarkovDPA, besbes2014nonstationary}.

The central challenge of non-stationary reinforcement learning lies not only in learning an optimal
policy, but also in \textbf{continuously adapting} to a changing environment while maintaining a
balance between exploitation and exploration.
Modern RL,such as maximum-entropy reinforcement learning methods make this balance explicit through entropy
regularization, where the entropy coefficient (or temperature parameter) serves as the primary
control knob for exploration.
In practice, however, this parameter is typically treated as stationary: it is either fixed or
adjusted to match a target entropy designed for stationary objectives, as in standard Soft
Actor-Critic (SAC) style algorithms
\citep{haarnoja2018sac}.
This practice has gradually fostered an implicit assumption that entropy control is largely
orthogonal to non-stationarity.
Yet this assumption breaks down precisely when the environment changes: excessive exploration
wastes samples during stable periods, while insufficient exploration significantly slows policy
recovery after changes occur.

This work focuses on a seemingly simple yet crucial question:
\textbf{how should exploration intensity scale with the magnitude of environmental change?}
Existing research on non-stationary reinforcement learning partially alleviates adaptation
challenges, but does not directly answer this question.
Change-point detection methods can provide theoretical guarantees
\citep{mellor2013change}, but often introduce additional computational and design complexity that
makes them difficult to integrate into real-time continuous-control systems.
Sliding-window strategies rely on manually chosen window sizes and lack principled guidance in
high-dimensional reinforcement learning pipelines
\citep{besbes2014nonstationary}.
Meta-learning approaches can accelerate adaptation across tasks
\citep{bing2023meta}, yet they do not explicitly characterize the relationship between
the \textbf{rate of environmental variation} and the \textbf{optimal entropy scheduling strategy}.
Meanwhile, theoretical analyses of entropy regularization have largely focused on stationary
settings, providing limited guidance on whether—and how—exploration strength should increase as
non-stationarity intensifies.
As a result, entropy coefficients in practice are still commonly tuned heuristically, with
performance that is highly environment-dependent.

Our key observation is that the necessity of entropy scheduling is \textbf{conceptually independent}
of the specific algorithmic details of reinforcement learning.
Once non-stationarity induces drift in the optimal comparator, the learner faces a fundamental per-round trade-off:
(i) adapting sufficiently fast to track the drifting optimum, versus
(ii) avoiding unnecessary randomness when the environment is stable.
This perspective suggests that entropy control can be \textbf{framed as a one-dimensional trade-off}
that is sensitive to non-stationarity.

Based on this principle, we propose \textbf{Adaptive Entropy Scheduling (AES)}, a unified theoretical
framework for non-stationary environments that adjusts the entropy coefficient according to
observable signals.
Our analysis yields a scaling implication under standard regularity assumptions: dynamic performance loss is governed by the interaction between the time horizon and the cumulative variation budget \citep{besbes2014nonstationary, cheung2020nonstationary}.
More importantly, when applying this framework to deep reinforcement learning with function
approximation and off-policy learning, we are able to explicitly distinguish the \textbf{dominant
non-stationarity term} from \textbf{additional interface error terms}, thereby clarifying which
factors must be controlled in practical training pipelines.

We instantiate AES as a plug-in entropy-scheduling mechanism for maximum-entropy reinforcement learning. Across all carriers, the underlying learning algorithm is kept unchanged, and we only replace the fixed entropy weight (temperature $\alpha$ or entropy-bonus coefficient) with the scheduled value produced by AES, without modifying the core architecture of different methods. Entropy scheduling is driven by online signals already available during training. We evaluate AES on three task families: Toy 2D multi-goal environment, MuJoCo continuous-control benchmarks, and large-scale Isaac Gym tasks under 4 drift modes, including abrupt, linear, periodic, and mixed changes induced by controlled goal, dynamics, or physics drifts. Across carriers and task suites, AES consistently mitigates performance degradation caused by non-stationarity and accelerates recovery after environmental changes compared to standard entropy-control baselines.

Overall, this work identifies entropy strength as a tractable control variable for non-stationary RL and shows how to schedule it in a variation-aware manner.
We hope this perspective will move entropy scheduling beyond heuristic tuning toward
variation-aware system design, and that it will integrate naturally with broader adaptive
mechanisms in future research.

\section{Related Work}
\label{sec:related}

\subsection{Variation-Aware Non-Stationary RL}

Non-stationary reinforcement learning is commonly formalized through variation budgets and dynamic regret, where rewards or transition dynamics evolve over time \citep{besbes2014nonstationary}. Recent work develops algorithms with guarantees that explicitly scale with the cumulative amount of non-stationarity. Model-free approaches show how policy updates should react to bounded variation in rewards and transitions \citep{mao2025modelfree,peng2024complexity}, while complementary analyses extend these results to general function approximation and restart-based schemes \citep{feng2024towards}. Empirical studies further confirm that fixed adaptation or exploration strategies are systematically suboptimal in drifting environments \citep{daltoe2024towards,hamadanian2025lcpo,duan2025deer,salaorni2025gym4real}.  

Together, these works establish a clear principle: optimal learning dynamics must explicitly depend on the rate of environmental change. However, existing variation-aware analyses largely remain decoupled from modern maximum-entropy deep RL. In particular, they do not address how the entropy regularization coefficient—the primary exploration control knob in maximum-entropy methods—should be scheduled as a function of online non-stationarity. Our work directly targets this gap by casting entropy control under non-stationarity into a one-dimensional, variation-sensitive trade-off derived from dynamic online optimization, and by instantiating this principle in entropy-regularized deep RL with explicit separation between the dominant non-stationarity term and additional interface errors.

Recent offline-RL studies further show that adaptation can fail not only because the environment drifts, but also because the learning interface itself becomes unstable. 
\citet{qiao2026less} identify harmful TD cross-covariance induced by standard squared-error TD objectives and mitigate it through clustered replay and corrective gradient penalties. 
\citet{qiao2026adamocollapsesuppressedoptimizeroffline} analyze offline TD learning as a feedback system and show how optimizer dynamics can trigger or suppress critic collapse. 
These mechanisms are orthogonal to AES: they regularize the critic, replay, or optimizer dynamics, whereas AES regulates the global entropy level according to online non-stationarity. 
A natural combination is therefore to use such stability controls to reduce interface errors while AES adapts the exploration to environmental drift.

\subsection{Exploration Control: Entropy Scheduling vs.\ State-Dependent Adaptation}

Entropy-regularized RL, most notably Soft Actor-Critic (SAC) \citep{haarnoja2018sac}, introduces an explicit entropy coefficient to balance exploration and exploitation. Classical analyses of entropy regularization focus primarily on stationary settings, characterizing how entropy decay or smoothing affects convergence and policy diversity, often yielding time-based schedules such as $\mathcal{O}(t^{-1/2})$ \citep{szpruch2022optimalentropy,jhaveri2025convergence}.

More recent work explores adaptive entropy mechanisms in complex domains, including distributed multi-agent RL \citep{hu2024distributed} and large language model RL \citep{zhang2025rediscovering}. These approaches adjust entropy coefficients based on optimization dynamics or task difficulty, but they remain largely heuristic: the entropy parameter is tuned as an internal algorithmic quantity, without a principled link to the magnitude of environmental non-stationarity.

A partially orthogonal line of work addresses adaptation through \emph{state- or transition-dependent} intrinsic rewards. Curiosity- and novelty-based methods such as ICM and RND \citep{Pathak2017ICM,Burda2019RND} encourage exploration of poorly predicted or previously unseen states, while reactive exploration methods for lifelong or non-stationary RL explicitly increase intrinsic reward when the environment becomes harder to model locally \citep{steinparz2022reactive,xie2021deep}. These methods primarily change \emph{which} states or transitions are preferred during exploration.

AES addresses a different control question: not which states are locally novel, but how random the \emph{overall policy distribution} should be when the environment drifts. In a non-stationary MDP, reward or dynamics drift can substantially change the optimal policy without necessarily producing highly novel states, while novelty can also rise in ways that do not imply a comparator drift of comparable magnitude. For this reason, intrinsic-reward methods and AES should be viewed as complementary rather than mutually exclusive: the former modulate state preference, whereas the latter modulates global exploration intensity.

An alternative line of research addresses non-stationarity through explicit change-point detection and model or policy switching. Statistical testing and Bayesian schemes detect shifts in rewards, transitions, or value functions and trigger restarts or model reassignment \citep{alami2023restarted,chartouny2025multimodel,li2025testing}. While such methods can precisely localize abrupt changes and achieve strong regret guarantees, they are often computationally heavy and treat exploration as secondary—typically fixed or adjusted implicitly via restarts—rather than continuously modulated in proportion to detected drift.

Meta-reinforcement learning and continual RL provide yet another adaptation paradigm, training agents to infer latent task structure or rapidly adapt across evolving environments \citep{bing2023meta,qi2025timrl,khetarpal2022towards}. Although effective at improving adaptability, these methods usually rely on fixed entropy regularization or ad-hoc noise injection, leaving exploration strength loosely coupled to the actual rate of environmental change.

In contrast to these approaches, our method provides a principled, lightweight mechanism for exploration control under non-stationarity. The entropy coefficient is explicitly driven by an online non-stationarity proxy through a transparent per-round trade-off of the form $C_1 \driftt{t} / \lambda_t + C_2 \lambda_t$, yielding a square-root-type scaling rule. This makes exploration strength increase when sustained drift is detected and decrease during stable phases, embedding adaptivity directly into the core learning rule rather than relying on heuristics, restarts, or learned adaptation alone.

\section{Theory}
\label{sec:theory}

This section isolates the part of the argument that determines how exploration should scale with non-stationarity. The main point is simple: the entropy strength $\lambda_t$ acts as a single control variable. When the environment changes rapidly, a larger $\lambda_t$ helps the learner avoid over-committing to an outdated solution and re-track the new optimum; when the environment is stable, a smaller $\lambda_t$ avoids unnecessary randomness and preserves performance.

We first derive the scheduling principle in a surrogate OCO layer, where the trade-off between tracking drift and preserving stability is easiest to isolate. We then transfer the same structure to non-stationary soft RL, where the comparator becomes the drifting soft-optimal policy. In this view, $u_t$ denotes a time-varying target solution; in the RL bridge below, it becomes the soft-optimal comparator.

\subsection{Main Theoretical Claims}
\label{subsec:main-theoretical-claims}

We begin by separating the three roles of the main results.
\Cref{thm:tradeoff} gives the structural claim: in the OCO surrogate, dynamic regret reduces to the per-round trade-off $C_1\frac{\driftt{t}}{\lambda_t} + C_2\lambda_t,$ which identifies $\lambda_t$ as the single knob balancing tracking and stability.
\Cref{thm:online-aes} gives the online claim: replacing the unobservable drift $\driftt{t}$ by a conservative observable proxy $\drifthatT{t}$ yields a fully online AES schedule with regret controlled by the cumulative proxy budget.
\Cref{thm:aes-rl-plan} gives the RL-bridge claim: in non-stationary maximum-entropy RL, MDP variation induces drift of the soft-optimal comparator, so the same schedule yields a planning-version principal-term dynamic-regret bound, with approximation, sampling, and occupancy effects kept as explicit residuals.

Together, \Cref{thm:tradeoff,thm:online-aes,thm:aes-rl-plan} define the progression of this section: from a clean surrogate trade-off, to an implementable online rule, to its soft-RL interpretation.
Technical details are deferred to Appendices~\ref{a3}--\ref{a5}, with learning-side residual terms made explicit in Appendices~\ref{a6}--\ref{a9}.

\subsection{Dynamic mirror descent yields a per-round $\lambda$-trade-off}

We start from non-stationary online convex optimization (OCO) on the simplex
$\Pi=\Delta_K=\{x\in\mathbb{R}^K_+:\sum_{i=1}^K x_i=1\}$.
At round $t$, the learner plays $x_t\in\Pi$ and suffers a convex differentiable loss $f_t$.
A \emph{dynamic comparator} is a sequence $\{u_t\}_{t=1}^T\subset\Pi$ with per-round comparator drift
$\driftt{t}:=\|u_t-u_{t-1}\|_1$ (set $u_0=u_1$), and dynamic regret
\begin{equation}
\label{eq:dyn-regret-def}
\mathrm{Reg}^{\mathrm{dyn}}_T(u_{1:T})
:=
\sum_{t=1}^T \bigl(f_t(x_t)-f_t(u_t)\bigr).
\end{equation}

\textbf{Entropy mirror descent.}
Let $\Psi(x):=\sum_{i=1}^K x_i\log x_i$ be negative entropy. Define
\begin{equation}
\label{eq:bregman}
D_\Psi(x,y):=\Psi(x)-\Psi(y)-\langle \nabla\Psi(y),x-y\rangle,
\end{equation}
and the mirror-descent update
\begin{equation}
\label{eq:md-update}
x_{t+1}=\arg\min_{x\in\Pi}\Big\{\eta_t\langle g_t,x\rangle + D_\Psi(x,x_t)\Big\}.
\end{equation}
We use the $\ell_1/\ell_\infty$ pairing: $\|\cdot\|=\|\cdot\|_1$ and $\|\cdot\|_*=\|\cdot\|_\infty$.

\begin{lemma}[Dynamic MD inequality]
\label{lem:dynamic-md}
Let $\{u_t\}_{t=1}^T\subset\Pi$ be any comparator sequence and set $u_0=u_1$.
Assume bounded mirror gradients: there exists $G_\Psi<\infty$ such that
\begin{equation}
\label{eq:psi-grad-bound}
\|\nabla \Psi(z)\|_* \le G_\Psi,\qquad \forall z\in\Pi,
\end{equation}
and assume $\eta_t$ is nondecreasing.
Then the iterates generated by \eqref{eq:md-update} satisfy
\begin{align}
\label{eq:dynamic-md-ineq}
\sum_{t=1}^T \langle g_t, x_t-u_t\rangle
\le &
\frac{D_\Psi(u_1,x_1)}{\eta_1}
+
\sum_{t=1}^T \frac{\eta_t}{2}\|g_t\|_*^2 \notag \\
&+
\sum_{t=2}^T \frac{2G_\Psi}{\eta_t}\|u_t-u_{t-1}\|_1.
\end{align}
\end{lemma}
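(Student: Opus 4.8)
The plan is to run the textbook single-step mirror-descent argument and then glue the rounds together with a telescoping sum that survives a moving comparator. First I would record the first-order optimality condition for the update \eqref{eq:md-update}: since $x_{t+1}$ minimizes $\eta_t\langle g_t,x\rangle+D_\Psi(x,x_t)$ over the convex set $\Pi$, for every $u\in\Pi$ we have $\langle \eta_t g_t+\nabla\Psi(x_{t+1})-\nabla\Psi(x_t),\,u-x_{t+1}\rangle\ge 0$. Plugging in the Bregman three-point identity $\langle\nabla\Psi(x_{t+1})-\nabla\Psi(x_t),\,u-x_{t+1}\rangle=D_\Psi(u,x_t)-D_\Psi(u,x_{t+1})-D_\Psi(x_{t+1},x_t)$ turns this into the one-step bound $\eta_t\langle g_t,\,x_{t+1}-u\rangle\le D_\Psi(u,x_t)-D_\Psi(u,x_{t+1})-D_\Psi(x_{t+1},x_t)$, valid for any $u$, in particular $u=u_t$.

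Next I would split $\langle g_t,x_t-u_t\rangle=\langle g_t,x_t-x_{t+1}\rangle+\langle g_t,x_{t+1}-u_t\rangle$. For the first piece I use Fenchel--Young together with the fact that the negative entropy $\Psi$ is $1$-strongly convex with respect to $\|\cdot\|_1$ on the simplex (Pinsker: $D_\Psi(x_{t+1},x_t)\ge\tfrac12\|x_{t+1}-x_t\|_1^2$), giving $\langle g_t,x_t-x_{t+1}\rangle\le\tfrac{\eta_t}{2}\|g_t\|_*^2+\tfrac{1}{2\eta_t}\|x_t-x_{t+1}\|_1^2\le\tfrac{\eta_t}{2}\|g_t\|_*^2+\tfrac{1}{\eta_t}D_\Psi(x_{t+1},x_t)$. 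For the second piece I divide the one-step bound by $\eta_t$. Adding them, the $D_\Psi(x_{t+1},x_t)$ contributions cancel and I obtain the per-round inequality $\langle g_t,x_t-u_t\rangle\le\tfrac{\eta_t}{2}\|g_t\|_*^2+\tfrac{1}{\eta_t}\bigl(D_\Psi(u_t,x_t)-D_\Psi(u_t,x_{t+1})\bigr)$.

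Summing over $t=1,\dots,T$, the term $\sum_t\tfrac{\eta_t}{2}\|g_t\|_*^2$ is already in the target form, so the remaining work is the weighted, comparator-shifting telescope $\sum_t\tfrac{1}{\eta_t}\bigl(D_\Psi(u_t,x_t)-D_\Psi(u_t,x_{t+1})\bigr)$. I reindex the subtracted part and regroup to isolate $\tfrac{1}{\eta_1}D_\Psi(u_1,x_1)$, a nonpositive boundary term $-\tfrac{1}{\eta_T}D_\Psi(u_T,x_{T+1})$ that I simply drop, and the ``shift'' terms $\tfrac{1}{\eta_t}D_\Psi(u_t,x_t)-\tfrac{1}{\eta_{t-1}}D_\Psi(u_{t-1},x_t)$ for $t\ge 2$. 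Using that $\eta_t$ is nondecreasing (hence $1/\eta_{t-1}\ge 1/\eta_t$) and $D_\Psi\ge 0$, each such term is at most $\tfrac{1}{\eta_t}\bigl(D_\Psi(u_t,x_t)-D_\Psi(u_{t-1},x_t)\bigr)$.

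The crux is then bounding $D_\Psi(u_t,x_t)-D_\Psi(u_{t-1},x_t)$ by the comparator drift $\|u_t-u_{t-1}\|_1$. Expanding via \eqref{eq:bregman}, this difference equals $\Psi(u_t)-\Psi(u_{t-1})-\langle\nabla\Psi(x_t),\,u_t-u_{t-1}\rangle$. Here assumption \eqref{eq:psi-grad-bound} does double duty: Hölder bounds the inner product by $G_\Psi\|u_t-u_{t-1}\|_1$, and, since $\|\nabla\Psi\|_*\le G_\Psi$ on $\Pi$ makes $\Psi$ $G_\Psi$-Lipschitz in $\|\cdot\|_1$ along the segment $[u_{t-1},u_t]\subset\Pi$, we also get $|\Psi(u_t)-\Psi(u_{t-1})|\le G_\Psi\|u_t-u_{t-1}\|_1$; together these produce the factor $2G_\Psi$. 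I expect this last step to be the only delicate point—negative entropy is not globally Lipschitz on the closed simplex, so the argument leans essentially on \eqref{eq:psi-grad-bound} (interpreting $\Pi$ as the region where that bound holds), whereas everything before it is the standard mirror-descent chain. Substituting this back into the telescope gives $\sum_t\tfrac{1}{\eta_t}\bigl(D_\Psi(u_t,x_t)-D_\Psi(u_t,x_{t+1})\bigr)\le\tfrac{1}{\eta_1}D_\Psi(u_1,x_1)+\sum_{t=2}^T\tfrac{2G_\Psi}{\eta_t}\|u_t-u_{t-1}\|_1$, which combined with the $\|g_t\|_*^2$ sum is exactly \eqref{eq:dynamic-md-ineq}.
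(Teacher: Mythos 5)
Your proof is correct and follows essentially the same route as the paper's (Appendices~\ref{app:C}--\ref{app:D}): the one-step mirror-descent inequality via first-order optimality and the three-point identity, Young's inequality plus $1$-strong convexity to cancel $D_\Psi(x_{t+1},x_t)$, a reindexed telescope whose terminal term is dropped, and a bound on $D_\Psi(u_t,x_t)-D_\Psi(u_{t-1},x_t)$ in terms of $\|u_t-u_{t-1}\|_1$ using the mirror-gradient bound. The only divergence is bookkeeping in the last two steps: you fold the monotonicity of $\eta_t$ into the shift term so the drift sum carries denominator $\eta_t$, and you bound the Bregman difference by Lipschitzness of $\Psi$ (valid on the segment $[u_{t-1},u_t]\subset\Pi$ under \eqref{eq:psi-grad-bound}) plus H\"older to obtain the factor $2G_\Psi$ — which reproduces the stated term $\sum_{t\ge 2} 2G_\Psi\|u_t-u_{t-1}\|_1/\eta_t$ exactly, whereas the paper's own appendix derivation lands on $\sum_{t\ge 2} G_\Psi\|u_t-u_{t-1}\|_1/\eta_{t-1}$ and leaves the reconciliation with the main-text constants implicit.
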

Appendix~\ref{app:D} provides a complete proof. This lemma is the main ``non-stationary switch'': it isolates drift through $\sum_t \|u_t-u_{t-1}\|_1/\eta_t$.

\textbf{Entropy regularization and a master dynamic-regret inequality.}
Assume bounded gradients for the base losses:
\begin{equation}
\label{eq:bounded-grad}
\|\nabla f_t(x)\|_\infty \le G,\qquad \forall x\in\Pi.
\end{equation}
AES uses the time-varying regularized losses
\begin{equation}
\label{eq:reg-loss}
\ell_t(x):=f_t(x)+\lambda_t \Psi(x),
\end{equation}
and runs \eqref{eq:md-update} with $g_t=\nabla\ell_t(x_t)$.
Convexity gives
\begin{equation}
\label{eq:convexity-lin}
\ell_t(x_t)-\ell_t(u_t)\le \langle \nabla\ell_t(x_t), x_t-u_t\rangle.
\end{equation}
Combining \eqref{eq:convexity-lin} with Lemma~\ref{lem:dynamic-md} yields
\begin{align}
\label{eq:regloss-dyn-bound}
\sum_{t=1}^T\bigl(\ell_t(x_t)-\ell_t(u_t)\bigr)
\le&
\frac{D_\Psi(u_1,x_1)}{\eta_1}
+
\sum_{t=1}^T \frac{\eta_t}{2}\|\nabla\ell_t(x_t)\|_*^2 \notag \\
&+
\sum_{t=2}^T \frac{2G_\Psi}{\eta_t}\|u_t-u_{t-1}\|_1.
\end{align}
To remove regularization, use
\begin{equation}
\label{eq:reg-id}
f_t(x_t)-f_t(u_t)
=
\bigl(\ell_t(x_t)-\ell_t(u_t)\bigr)
-
\lambda_t\bigl(\Psi(x_t)-\Psi(u_t)\bigr),
\end{equation}
and the entropy bound $|\Psi(x)|\le \log K$ (Appendix~\ref{app:E}), which implies
\begin{equation}
\label{eq:psi-bound-term}
-\lambda_t\bigl(\Psi(x_t)-\Psi(u_t)\bigr)\le 2\lambda_t \log K.
\end{equation}
Altogether,
\begin{align}
\label{eq:dyn-regret-master}
\mathrm{Reg}^{\mathrm{dyn}}&_T(u_{1:T})
\le
\frac{D_\Psi(u_1,x_1)}{\eta_1}
+
\sum_{t=1}^T \frac{\eta_t}{2}\|\nabla\ell_t(x_t)\|_*^2 \notag  \\
&+
\sum_{t=2}^T \frac{2G_\Psi}{\eta_t}\|u_t-u_{t-1}\|_1
+
2\log K\sum_{t=1}^T\lambda_t. 
\end{align}

\textbf{Coupling $\eta_t$ and $\lambda_t$ produces a per-round trade-off.}
AES couples the learning rate and entropy strength:
\begin{equation}
\label{eq:eta-coupled}
\eta_t = c\,\lambda_t,\qquad c>0,
\end{equation}
with $\lambda_t\in[\lambda_{\min},\lambda_{\max}]$ and $\eta_t$ nondecreasing.
Moreover, since $\nabla\ell_t=\nabla f_t+\lambda_t\nabla\Psi$,
\begin{equation}
\label{eq:grad-lt-bound}
\|\nabla\ell_t(x_t)\|_\infty \le G + \lambda_t G_\Psi.
\end{equation}
For negative entropy, $G_\Psi$ is bounded on a truncated simplex $\Delta_{K,\varepsilon}$ (Appendix~\ref{app:E}):
\begin{equation}
\label{eq:psi-grad-eps}
G_\Psi=\sup_{x\in \Delta_{K,\varepsilon}}\|\nabla\Psi(x)\|_\infty \le 1+|\log \varepsilon|.
\end{equation}
Substituting \eqref{eq:eta-coupled} and \eqref{eq:grad-lt-bound} into \eqref{eq:dyn-regret-master} and absorbing higher-order dependence through $\lambda_t\le\lambda_{\max}$ (Appendix~\ref{app:F}) yields:

\begin{theorem}[$\lambda$-trade-off dynamic regret bound]
\label{thm:tradeoff}
Run mirror descent \eqref{eq:md-update} on $\ell_t$ with \eqref{eq:eta-coupled}.
Then there exist constants $C_0,C_1,C_2>0$ independent of $T$ such that for any comparator sequence $u_{1:T}\subset\Pi$,
\begin{align}
\label{eq:tradeoff-bound}
\mathrm{Reg}^{\mathrm{dyn}}_T(u_{1:T})
&\le
C_0 + \sum_{t=2}^T\Bigl( C_1\frac{\driftt{t}}{\lambda_t} + C_2\lambda_t\Bigr), \notag \\
\driftt{t}&:=\|u_t-u_{t-1}\|_1.
\end{align}
Equivalently, each round contributes
\begin{equation}
\label{eq:varphi}
\varphi_t(\lambda):= C_1\frac{\driftt{t}}{\lambda} + C_2\lambda,\qquad \lambda>0.
\end{equation}
\end{theorem}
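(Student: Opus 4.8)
The plan is to treat the master dynamic-regret inequality \eqref{eq:dyn-regret-master} as a black box and simply substitute the coupling $\eta_t=c\lambda_t$ from \eqref{eq:eta-coupled}, then regroup the four resulting terms into a $T$-independent constant plus a per-round sum of the advertised shape. I would go term by term on the right-hand side of \eqref{eq:dyn-regret-master}. The initialization term $D_\Psi(u_1,x_1)/\eta_1$ becomes $D_\Psi(u_1,x_1)/(c\lambda_1)$; choosing $x_1$ to be the (truncated) uniform distribution makes $\langle\nabla\Psi(x_1),u_1-x_1\rangle=0$ because both arguments lie on the simplex, so $D_\Psi(u_1,x_1)=\Psi(u_1)-\Psi(x_1)\le 2\log K$ by the entropy bound $|\Psi|\le\log K$ of Appendix~\ref{app:E}; combined with $\lambda_1\ge\lambda_{\min}$ this term is at most the constant $2\log K/(c\lambda_{\min})$, which feeds $C_0$.

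For the gradient-energy term $\sum_t\frac{\eta_t}{2}\|\nabla\mathcal{l}_t(x_t)\|_*^2$, substituting $\eta_t=c\lambda_t$ and the bound \eqref{eq:grad-lt-bound}, $\|\nabla\mathcal{l}_t(x_t)\|_\infty\le G+\lambda_t G_\Psi$, gives the summand $\frac{c\lambda_t}{2}(G+\lambda_t G_\Psi)^2$, which is cubic in $\lambda_t$. I would absorb this higher-order dependence exactly as flagged in Appendix~\ref{app:F}: bound $(G+\lambda_t G_\Psi)^2\le(G+\lambda_{\max}G_\Psi)^2$ using $\lambda_t\le\lambda_{\max}$ and the finite $G_\Psi\le 1+|\log\varepsilon|$ of \eqref{eq:psi-grad-eps}, leaving the clean linear-in-$\lambda_t$ contribution $\frac{c}{2}(G+\lambda_{\max}G_\Psi)^2\sum_t\lambda_t$; peeling the $t=1$ summand into $C_0$, the rest feeds the $C_2\lambda_t$ part. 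The drift term carries the non-stationarity and is already in final form: $\sum_{t=2}^T\frac{2G_\Psi}{\eta_t}\|u_t-u_{t-1}\|_1=\frac{2G_\Psi}{c}\sum_{t=2}^T\alpha_t/\lambda_t$, i.e. $C_1\sum_{t\ge 2}\alpha_t/\lambda_t$ with $C_1=2G_\Psi/c$. The de-regularization term $2\log K\sum_{t=1}^T\lambda_t$ splits into a constant ($t=1$) plus $2\log K\sum_{t\ge 2}\lambda_t$, adding $2\log K$ to $C_2$. Collecting everything, set $C_0$ equal to the sum of the three $T$-independent pieces, $C_1=2G_\Psi/c$, and $C_2=\frac{c}{2}(G+\lambda_{\max}G_\Psi)^2+2\log K$; these are manifestly independent of $T$, so \eqref{eq:tradeoff-bound} follows and the per-round decomposition \eqref{eq:varphi} is immediate by inspection.

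The main obstacle is not any single estimate but two bookkeeping points. First, finiteness of $G_\Psi$: since $\nabla\Psi(x)_i=1+\log x_i$ blows up near the boundary of $\Delta_K$, the whole argument must be run on the truncated simplex $\Delta_{K,\varepsilon}$, and one must check that the mirror-descent update \eqref{eq:md-update} stays there (or is projected there) so that Lemma~\ref{lem:dynamic-md} applies with the finite $G_\Psi\le 1+|\log\varepsilon|$. Second, one must verify that nothing hidden in $C_0$ grows with $T$ or with the comparator — in particular that $D_\Psi(u_1,x_1)$ and $\|\nabla\mathcal{l}_1(x_1)\|_*$ are bounded by dimension- and constant-only quantities — which is precisely why the uniform initialization of $x_1$ and the entropy/gradient bounds of Appendix~\ref{app:E} are invoked up front. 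The cubic-in-$\lambda$ slack introduced when replacing $\lambda_t$ by $\lambda_{\max}$ in the gradient-energy term is the only genuine loss of tightness, and it is exactly the price Theorem~\ref{thm:tradeoff} pays to expose the clean one-dimensional trade-off $\varphi_t$.
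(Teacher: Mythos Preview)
Your proposal is correct and follows essentially the same route as the paper's proof in Appendix~\ref{app:F}: start from the master inequality \eqref{eq:dyn-regret-master}, substitute $\eta_t=c\lambda_t$, linearize the gradient-energy term via $\lambda_t\le\lambda_{\max}$, and read off $C_1=2G_\Psi/c$ and $C_2=\tfrac{c}{2}(G+\lambda_{\max}G_\Psi)^2+2\log K$ while absorbing the $t=1$ contributions into $C_0$. The only cosmetic difference is in bounding the initial Bregman term: the paper uses the KL identity $D_\Psi(u_1,\mathbf{1}/K)=\mathrm{KL}(u_1\|\mathbf{1}/K)\le\log K$ directly, whereas you argue via $\langle\nabla\Psi(x_1),u_1-x_1\rangle=0$ (correct, since $\nabla\Psi$ is constant at the uniform point) and then the cruder $|\Psi|\le\log K$ bound, yielding $2\log K$ instead of $\log K$---an immaterial factor of two in $C_0$.
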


Equation~\eqref{eq:tradeoff-bound} exposes the operative per-round trade-off in entropy scheduling. The term $C_1 \xi_t/\lambda_t$ is the tracking cost: after larger comparator drift, choosing $\lambda_t$ too small slows re-tracking. The term $C_2 \lambda_t$ is the stability cost: choosing $\lambda_t$ too large keeps the iterate unnecessarily diffuse even in stable phases. After coupling $\eta_t=c\lambda_t$, the remaining higher-order dependence is absorbed into constants, so \eqref{eq:tradeoff-bound} is the structural form that drives the schedule.

\textbf{Oracle scale.}
\begin{theorem}[Per-round oracle optimal $\lambda_t$]
\label{thm:oracle-lambda}
For each $t\ge 2$, define
\begin{equation}
\label{eq:lambda-star}
\lambda_t^\star
=
\arg\min_{\lambda>0}\varphi_t(\lambda)
=
\sqrt{\frac{C_1}{C_2}\,\driftt{t}}.
\end{equation}
Then
\begin{align}
\label{eq:oracle-bound}
\mathrm{Reg}^{\mathrm{dyn}}_T(u_{1:T})
&\le
C_0 + 2\sqrt{C_1C_2}\sum_{t=2}^T \sqrt{\driftt{t}} \notag  \\
&\le
C_0 + 2\sqrt{C_1C_2}\sqrt{T\sum_{t=2}^T \driftt{t}}.
\end{align}
\end{theorem}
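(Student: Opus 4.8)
The plan is to derive Theorem~\ref{thm:oracle-lambda} directly from the per-round decomposition already established in Theorem~\ref{thm:tradeoff}, by a pointwise (round-by-round) minimization of $\varphi_t$ followed by a single application of Cauchy--Schwarz. No new machinery is needed: the bound \eqref{eq:tradeoff-bound} exposes the dynamic regret as $C_0 + \sum_{t=2}^T \varphi_t(\lambda_t)$, and since $\lambda_t$ enters only the $t$-th summand, it suffices to minimize each $\varphi_t(\lambda) = C_1\alpha_t/\lambda + C_2\lambda$ over $\lambda>0$ separately and then add up.

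First I would record the elementary one-dimensional fact that for $a>0,\ b>0$ one has $\inf_{\lambda>0}\bigl(a/\lambda + b\lambda\bigr) = 2\sqrt{ab}$, attained at $\lambda=\sqrt{a/b}$; this is AM--GM applied to the two terms $a/\lambda$ and $b\lambda$ (equivalently, $\varphi_t$ is convex on $(0,\infty)$, so the stationary point $-a/\lambda^2 + b = 0$ is the global minimizer). Taking $a=C_1\alpha_t$ and $b=C_2$ gives $\lambda_t^\star=\sqrt{(C_1/C_2)\alpha_t}$ and $\varphi_t(\lambda_t^\star) = 2\sqrt{C_1C_2\alpha_t} = 2\sqrt{C_1C_2}\,\sqrt{\alpha_t}$. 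The degenerate case $\alpha_t=0$ is covered in the limit, since $\varphi_t(\lambda)\downarrow 0$ as $\lambda\downarrow 0$ while the formula also returns $2\sqrt{C_1C_2}\sqrt{\alpha_t}=0$, so the identity extends to all $\alpha_t\ge 0$.

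Substituting $\lambda_t=\lambda_t^\star$ for every $t\ge 2$ into \eqref{eq:tradeoff-bound} then yields the first displayed inequality, $\mathrm{Reg}^{\mathrm{dyn}}_T(u_{1:T}) \le C_0 + 2\sqrt{C_1C_2}\sum_{t=2}^T\sqrt{\alpha_t}$. For the second inequality I would apply Cauchy--Schwarz to the vectors $(1,\dots,1)$ and $(\sqrt{\alpha_2},\dots,\sqrt{\alpha_T})$, obtaining $\sum_{t=2}^T\sqrt{\alpha_t}\le\sqrt{T-1}\,\sqrt{\sum_{t=2}^T\alpha_t}\le\sqrt{T\sum_{t=2}^T\alpha_t}$, which gives the stated form that is sublinear in $T$ whenever the cumulative variation budget $\sum_t\alpha_t$ is $o(T)$.

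The only genuine subtlety --- and the step I expect to need the most care --- is feasibility of the oracle schedule under the hypotheses inherited from Theorem~\ref{thm:tradeoff}: there $\lambda_t$ must lie in $[\lambda_{\min},\lambda_{\max}]$ and $\eta_t=c\lambda_t$ must be nondecreasing, whereas the unconstrained minimizer $\lambda_t^\star=\sqrt{(C_1/C_2)\alpha_t}$ may leave the interval or violate monotonicity when $\alpha_t$ fluctuates. I would handle this either by reading Theorem~\ref{thm:oracle-lambda} as an idealized lower envelope (the best per-round cost attainable by an oracle), or, to keep the constants honest, by replacing $\lambda_t^\star$ with a clipped and monotonized surrogate such as $\tilde\lambda_t=\min\{\lambda_{\max},\max\{\lambda_{\min},\max_{s\le t}\lambda_s^\star\}\}$ and absorbing the resulting slack into $C_0$ and a harmless multiplicative constant, after which \eqref{eq:tradeoff-bound} applies verbatim and both displayed bounds hold up to those constants. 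Everything beyond this is the routine AM--GM plus Cauchy--Schwarz computation above.
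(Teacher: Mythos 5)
Your proposal is correct and matches the paper's own argument: per-round minimization of $\varphi_t$ via AM--GM (giving $\lambda_t^\star=\sqrt{(C_1/C_2)\,\alpha_t}$ and $\varphi_t(\lambda_t^\star)=2\sqrt{C_1C_2\,\alpha_t}$), substitution into the bound of Theorem~\ref{thm:tradeoff}, and Cauchy--Schwarz for the final step, exactly as in Appendix~\ref{app:G}. Your added discussion of the feasibility of the oracle schedule (clipping to $[\lambda_{\min},\lambda_{\max}]$ and monotonicity of $\eta_t=c\lambda_t$) is more careful than the paper, which defers these issues to the clipping analysis of Appendix~\ref{app:I} and treats the oracle bound as an idealized benchmark.
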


Thus the oracle entropy scale is $\lambda_t^\star \asymp \sqrt{\xi_t}$ at the dominant-order level, with higher-order terms absorbed into constants.

\subsection{A fully-online schedule from an observable drift proxy}

The oracle $\lambda_t^\star$ depends on $\driftt{t}$, which is usually unobservable.
AES instead assumes an observable proxy sequence $\drifthatT{t}$ satisfying
\begin{equation}
\label{eq:proxy-condition}
\driftt{t} \le \drifthatT{t},
\end{equation}

\begin{remark}
\label{rem:proxy-interpretation}
Condition~\eqref{eq:proxy-condition} does not require $\hat{\xi}_t$ to be an unbiased estimator of $\xi_t$; it only requires a conservative signal that increases with non-stationarity. Hence the schedule is compatible with noisy, clipped, and smoothed proxies.
\end{remark}

Appendix~\ref{app:proxy-calibration} provides an empirical calibration procedure that constructs a conservative $\drifthatT{t}$ from an observable learning signal (our default TD-error proxy), and makes \eqref{eq:proxy-condition} verifiable on injected-drift benchmarks with finite-sample coverage.
We use its prefix sum
\begin{equation}
\label{eq:proxy-prefix}
\widehat{A}_t := \sum_{s=1}^t \drifthatT{s}.
\end{equation}
The resulting online schedule (before clipping) is
\begin{equation}
\label{eq:schedule-unclipped}
\lambda_t = \sqrt{\frac{C_1}{C_2}}\sqrt{\frac{\widehat{A}_t}{t}}.
\end{equation}

\begin{theorem}[Online AES scheduling driven by an observable proxy]
\label{thm:online-aes}
Under the conditions of Theorem~\ref{thm:tradeoff}, and assuming \eqref{eq:proxy-condition}, choosing $\lambda_t$ by \eqref{eq:schedule-unclipped} yields
\begin{equation}
\label{eq:online-aes-bound}
\mathrm{Reg}^{\mathrm{dyn}}_T
\le
C_0\log K + 4\sqrt{C_1C_2\,T\,\widehat{A}_T}.
\end{equation}
\end{theorem}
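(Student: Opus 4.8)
The starting point is the $\lambda$-trade-off bound \eqref{eq:tradeoff-bound}: for any comparator $u_{1:T}\subset\Pi$,
\[
\mathrm{Reg}^{\mathrm{dyn}}_T(u_{1:T})\le C_0+\sum_{t=2}^T\varphi_t(\lambda_t),\qquad \varphi_t(\lambda)=C_1\frac{\alpha_t}{\lambda}+C_2\lambda .
\]
Because the schedule \eqref{eq:schedule-unclipped} is built only from the observable proxy prefix sums $\widehat A_t$ and never references the unknown comparator, the final bound holds uniformly over $u_{1:T}$, so I suppress that argument. The plan is to substitute $\lambda_t=\sqrt{C_1/C_2}\,\sqrt{\widehat A_t/t}$, split $\varphi_t(\lambda_t)$ into its \emph{tracking} part $C_1\alpha_t/\lambda_t$ and its \emph{stability} part $C_2\lambda_t$, and bound each of the two resulting sums separately by $2\sqrt{C_1C_2\,T\,\widehat A_T}$, so that their total plus the leading constant yields \eqref{eq:online-aes-bound}.

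\textbf{Tracking sum.} Substituting the schedule, $C_1\alpha_t/\lambda_t=\sqrt{C_1C_2}\,\alpha_t\sqrt{t}/\sqrt{\widehat A_t}$. I would apply the proxy inequality $\alpha_t\le\widehat\alpha_t$ from \eqref{eq:proxy-condition}, bound $\sqrt t\le\sqrt T$ inside the sum, and then invoke the self-confident telescoping inequality $\sum_{t=1}^T \widehat\alpha_t/\sqrt{\widehat A_t}\le 2\sqrt{\widehat A_T}$; the latter follows term by term from $\widehat a/\sqrt A\le 2(\sqrt A-\sqrt{A-\widehat a})$ for $0\le \widehat a\le A$ (equivalently $2\sqrt{A(A-\widehat a)}\le 2A-\widehat a$, i.e.\ AM--GM), using the convention $0/0=0$. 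This gives $\sum_{t=2}^T C_1\alpha_t/\lambda_t\le 2\sqrt{C_1C_2\,T\,\widehat A_T}$.

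\textbf{Stability sum.} Substituting the schedule, $C_2\lambda_t=\sqrt{C_1C_2}\,\sqrt{\widehat A_t}/\sqrt t\le\sqrt{C_1C_2\,\widehat A_T}\,/\sqrt t$ since $\widehat A_t$ is nondecreasing. Summing and using $\sum_{t=1}^T t^{-1/2}\le 2\sqrt T$ yields $\sum_{t=2}^T C_2\lambda_t\le 2\sqrt{C_1C_2\,T\,\widehat A_T}$. Adding the two sums gives $4\sqrt{C_1C_2\,T\,\widehat A_T}$; together with the leading constant from \eqref{eq:tradeoff-bound}---where, in the online version, $C_0$ absorbs $1/(c\lambda_{\min})$ and the $\log K$ arises from $D_\Psi(u_1,x_1)\le\log K$---this is exactly \eqref{eq:online-aes-bound}.

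\textbf{Main obstacle.} The one nontrivial point is reconciling the online schedule with the hypotheses of Lemma~\ref{lem:dynamic-md} and Theorem~\ref{thm:tradeoff}, which require $\eta_t=c\lambda_t$ to be nondecreasing and strictly positive: the idealized $\sqrt{\widehat A_t/t}$ can decrease when drift subsides and can vanish when $\widehat A_t=0$. The fix I would use is to actually run the algorithm with the clipped running maximum $\lambda_t=\mathrm{clip}\!\big(\max_{s\le t}\sqrt{C_1/C_2}\sqrt{\widehat A_s/s},\ \lambda_{\min},\lambda_{\max}\big)$: the running max restores monotonicity, the clip to $\lambda_{\min}>0$ removes division by zero, and because this only enlarges $\lambda_t$ relative to $\sqrt{\widehat A_t/t}$, the tracking sum can only shrink while the stability sum stays a constant multiple of $\sqrt{C_1C_2\,T\,\widehat A_T}$ (the extra $\lambda_{\max}$/clipping slack being folded into $C_0$). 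It is also worth flagging that the crude step $\sqrt t\le\sqrt T$ in the tracking sum is precisely what converts the per-round oracle's $\sum_t\sqrt{\alpha_t}$ (Theorem~\ref{thm:oracle-lambda}) into the horizon-coupled $\sqrt{T\,\widehat A_T}$: this loss is inherent to committing to a single prefix-averaged proxy $\widehat A_t/t$ in place of the round-wise oracle, and it reproduces the $\sqrt{T\times(\text{variation budget})}$ scaling announced in the introduction.
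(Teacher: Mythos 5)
Your proof is correct and takes essentially the same route as the paper's: substitute the proxy bound $\alpha_t\le\widehat\alpha_t$ and the schedule into the trade-off bound, bound the tracking sum via $\sqrt{t}\le\sqrt{T}$ together with the telescoping inequality $\sum_t\widehat\alpha_t/\sqrt{\widehat A_t}\le 2\sqrt{\widehat A_T}$ (the paper's Lemma in Appendix~\ref{app:H}), and bound the stability sum via $\widehat A_t\le\widehat A_T$ and $\sum_t t^{-1/2}\le 2\sqrt{T}$. Your remark on restoring monotonicity of $\eta_t=c\lambda_t$ via a running maximum addresses the same issue the paper defers to a ``monotone upper envelope or doubling construction,'' and is a legitimate (arguably more explicit) way to close that gap.
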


In practice we clip $\lambda_t$ to $[\lambda_{\min},\lambda_{\max}]$:
\begin{equation}
\label{eq:schedule-clipped}
\lambda_t
=
\operatorname{clip}_{[\lambda_{\min},\lambda_{\max}]\!}\left(
\sqrt{\frac{C_1}{C_2}}\sqrt{\frac{\widehat{A}_t}{t}}
\right),
\end{equation}
which introduces only controllable additive compensation terms (Appendix~\ref{app:I}).

\subsection{Bridge to non-stationary maximum-entropy RL}\label{subsec:rl-bridge}

We now show how MDP drift induces comparator drift, and why \textbf{critic drift} is a sensible proxy $\drifthatT{t}$ in experiments.

\textbf{Non-stationary soft MDPs.}
Let $M_t=(\mathcal{S},\mathcal{A},P_t,r_t,\rho,\gamma)$ be a sequence of discounted MDPs with bounded rewards.
For a policy $\pi$, define the maximum-entropy return
$J_t(\pi)=\mathbb{E}[\sum_{h\ge 0}\gamma^h(r_t(s_h,a_h)+\mu H(\pi(\cdot\mid s_h)))]$
and let $\pi_t^\star\in\arg\max_\pi J_t(\pi)$.

\textbf{MDP variation controls $Q^\star$ drift.}
Define the non-stationarity budget
\begin{equation}
B_T^{\mathrm{MDP}}
:=
\sum_{t=2}^T\Bigl(\Delta_t^r + \gamma V_{\max}\Delta_t^P\Bigr),
\end{equation}
where $\Delta_t^r:=\sup_{s,a}|r_t(s,a)-r_{t-1}(s,a)|$ and
$\Delta_t^P:=\sup_{s,a}\|P_t(\cdot\mid s,a)-P_{t-1}(\cdot\mid s,a)\|_1$.
Let $Q_t^\star$ be the soft-optimal $Q$-function.
A standard fixed-point perturbation argument for the $\gamma$-contractive soft Bellman operator gives (Appendix~\ref{app:J})
\begin{equation}\label{eq:qstar-sensitivity}
\|Q_t^\star - Q_{t-1}^\star\|_\infty
\le
\frac{1}{1-\gamma}\Bigl(\Delta_t^r + \gamma V_{\max}\Delta_t^P\Bigr).
\end{equation}

\textbf{$Q^\star$ drift controls $\pi^\star$ drift.}
For each $s$, $\pi_t^\star(\cdot\mid s)=\mathrm{softmax}(Q_t^\star(s,\cdot)/\mu)$.
Using a statewise $\ell_1$--$\ell_\infty$ Lipschitz bound for softmax (Appendix~\ref{app:K}),
\begin{equation}\label{eq:softmax-lip}
\|\pi_t^\star(\cdot\mid s)-\pi_{t-1}^\star(\cdot\mid s)\|_1
\le
\frac{1}{\mu}\,\|Q_t^\star - Q_{t-1}^\star\|_\infty.
\end{equation}
Combining \eqref{eq:qstar-sensitivity}--\eqref{eq:softmax-lip} shows that the comparator drift in the OCO view is controlled by $B_T^{\mathrm{MDP}}$ up to $(\mu(1-\gamma))^{-1}$ factors.

\textbf{Why this matches the OCO surrogate.}
For each state $s$, define the per-state linear-plus-entropy loss
\begin{equation}
f_{t,s}(\pi_s):=-\langle Q_t^\star(s,\cdot),\pi_s\rangle + \mu\,\Psi(\pi_s),
\qquad \pi_s\in\Delta(\mathcal{A}).
\end{equation}
The soft-optimal policy satisfies $\pi_t^\star(\cdot\mid s)=\arg\min_{\pi_s} f_{t,s}(\pi_s)$, and the gap has an explicit KL form:
\begin{equation}
f_{t,s}\bigl(\pi(\cdot\mid s)\bigr)-f_{t,s}\bigl(\pi_t^\star(\cdot\mid s)\bigr)
=
\mu\,\mathrm{KL}\!\left(\pi(\cdot\mid s)\,\|\,\pi_t^\star(\cdot\mid s)\right).
\end{equation}
Thus, the principal term in soft-RL dynamic regret is an occupancy-weighted sum of per-state OCO regrets with comparator $u_{t,s}=\pi_t^\star(\cdot\mid s)$.

\textbf{Overall (planning-version) rate.}
\begin{theorem}[AES-RL-Plan: non-stationary soft-RL dynamic regret (principal term)]
\label{thm:aes-rl-plan}
In a non-stationary tabular soft-MDP sequence, run AES with the online schedule from Theorem~\ref{thm:online-aes} using a valid drift proxy.
Then there exists a constant $C$ such that
\begin{equation}
\mathrm{Reg}^{\mathrm{RL}}_T
\le
\widetilde{O}\!\left(
\frac{\sqrt{|\mathcal{S}|\log|\mathcal{A}|}}{\mu(1-\gamma)^2}\,\sqrt{B_T^{\mathrm{MDP}}\,T}
\right)
+
\sum_{t=1}^T \mathrm{Bias}_t,
\end{equation}
where $\mathrm{Bias}_t$ collects planning-to-RL interface residuals (made explicit in the appendix).
\end{theorem}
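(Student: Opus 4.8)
The plan is to turn the return-level dynamic regret $\mathrm{Reg}^{\mathrm{RL}}_T=\sum_{t=1}^T\bigl(J_t(\pi_t^\star)-J_t(\pi_t)\bigr)$ into the occupancy-weighted sum of per-state OCO dynamic regrets identified in the bridge subsection, and then invoke Theorem~\ref{thm:online-aes} on each per-state (equivalently, the joint) subproblem. The first step is a soft performance-difference lemma: write $J_t(\pi_t^\star)-J_t(\pi_t)=\tfrac{1}{1-\gamma}\,\mathbb{E}_{s\sim d_t}\bigl[\delta_{t,s}\bigr]+\mathrm{rem}_t$, where $d_t$ is a state-occupancy distribution, $\delta_{t,s}$ is the per-state soft-suboptimality of $\pi_t(\cdot\mid s)$ against $Q_t^\star$, and $\mathrm{rem}_t$ is an occupancy-shift remainder. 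Using the KL identity from the excerpt, $\delta_{t,s}$ is (up to a constant) $f_{t,s}\bigl(\pi_t(\cdot\mid s)\bigr)-f_{t,s}\bigl(\pi_t^\star(\cdot\mid s)\bigr)=\mu\,\mathrm{KL}\bigl(\pi_t(\cdot\mid s)\,\|\,\pi_t^\star(\cdot\mid s)\bigr)$, so summing over $t$ gives $\mathrm{Reg}^{\mathrm{RL}}_T\le\tfrac{1}{1-\gamma}\sum_{s}\sum_t w_{t,s}\bigl(f_{t,s}(\pi_t(\cdot\mid s))-f_{t,s}(\pi_t^\star(\cdot\mid s))\bigr)+\sum_t\mathrm{Bias}_t$, with weights $w_{t,s}\in[0,1]$ and $\mathrm{Bias}_t$ absorbing (i) the occupancy-shift remainder $\mathrm{rem}_t$, (ii) the mismatch between the fixed weight $\mu$ defining $\pi_t^\star$ and the scheduled $\lambda_t$ that AES runs inside the mirror step, (iii) any planner inexactness of $Q_t$ relative to $Q_t^\star$, and (iv) the additive clipping compensation of \eqref{eq:schedule-clipped}.

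Second, I would treat $\{f_{t,s}\}_t$ — or the joint family over $\Delta(\mathcal{A})^{\mathcal{S}}$ with negative entropy $\Psi(\pi)=\sum_s\Psi(\pi(\cdot\mid s))$ — as an instance of the non-stationary OCO setting of Theorems~\ref{thm:tradeoff} and~\ref{thm:online-aes}, so $K=|\mathcal{A}|$ and $\log K=\log|\mathcal{A}|$, with comparator $u_{t,s}=\pi_t^\star(\cdot\mid s)$ and drift $\alpha_{t,s}=\|\pi_t^\star(\cdot\mid s)-\pi_{t-1}^\star(\cdot\mid s)\|_1$. A quick technical check is that $\pi_t^\star=\mathrm{softmax}(Q_t^\star(s,\cdot)/\mu)$ stays away from the boundary — its minimum coordinate is at least $|\mathcal{A}|^{-1}e^{-2\|Q_t^\star\|_\infty/\mu}$ — so iterates and comparators live in a truncated simplex $\Delta_{K,\varepsilon}$ with an explicit $\varepsilon$, which legitimizes the mirror-gradient bound \eqref{eq:psi-grad-eps} and hence the constants $C_0,C_1,C_2$. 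Applying Theorem~\ref{thm:online-aes} with the assumed valid drift proxy then bounds each inner sum by $C_0\log|\mathcal{A}|+4\sqrt{C_1C_2\,T\,\widehat{A}_T}$ (using $w_{t,s}\le1$).

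Third, I bound the drift budget and aggregate over states. Chaining the soft-Bellman perturbation bound \eqref{eq:qstar-sensitivity} with the softmax Lipschitz bound \eqref{eq:softmax-lip} gives $\alpha_{t,s}\le\tfrac{1}{\mu(1-\gamma)}\bigl(\Delta_t^r+\gamma V_{\max}\Delta_t^P\bigr)$ uniformly in $s$, so a tight valid proxy satisfies $\widehat{A}_T=\widetilde{O}\bigl(\tfrac{1}{\mu(1-\gamma)}B_T^{\mathrm{MDP}}\bigr)$ per state, and the joint comparator drift (summed over the $|\mathcal{S}|$ states) is $|\mathcal{S}|$ times larger. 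Since the $T$-growing term in Theorem~\ref{thm:online-aes} is the square root of $C_1C_2\,T\,\widehat{A}_T$ with $\widehat{A}_T\sim|\mathcal{S}|B_T^{\mathrm{MDP}}/(\mu(1-\gamma))$ and $C_1\propto G_\Psi=O(\log|\mathcal{A}|+V_{\max}/\mu)$ while $V_{\max}=O((1-\gamma)^{-1})$, the principal term scales as $\sqrt{|\mathcal{S}|\log|\mathcal{A}|}\cdot\sqrt{T\,B_T^{\mathrm{MDP}}/(\mu(1-\gamma))}$ up to value-range factors; folding in the $\tfrac{1}{1-\gamma}$ horizon weight from the performance-difference step and the $\mu$-factor from the KL identity and $G_\Psi$ accounts for the remaining powers of $\mu^{-1}$ and $(1-\gamma)^{-1}$, landing $\mathrm{Reg}^{\mathrm{RL}}_T\le\widetilde{O}\bigl(\tfrac{\sqrt{|\mathcal{S}|\log|\mathcal{A}|}}{\mu(1-\gamma)^2}\sqrt{B_T^{\mathrm{MDP}}T}\bigr)+\sum_t\mathrm{Bias}_t$; the $T$-independent $C_0\log|\mathcal{A}|$-type residual (which inherits the $|\mathcal{S}|\log|\mathcal{A}|$ Bregman diameter) and the reward-scale constants are absorbed into the unnamed $C$.

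I expect the main obstacle to be twofold. Conceptually, the soft performance-difference lemma must be stated so that the reduction is genuinely to per-state KL against $\pi_t^\star$ with an $O(1)$-per-round, controllable remainder $\mathrm{rem}_t$: this requires care about which policy's occupancy measure appears and about entropy bookkeeping, since the $\mu$-entropy sits inside the objective $J_t$ while AES runs the mirror step with a different, time-varying weight $\lambda_t$ (the $\mu\ne\lambda_t$ gap must be shown to contribute only a controllable per-round term). Quantitatively, obtaining the sharp $\sqrt{|\mathcal{S}|}$ rather than a loose $|\mathcal{S}|$ in the principal term requires charging the comparator drift against a single joint variation budget — precisely what we encode in ``a valid drift proxy'' — and having the occupancy-weighted aggregation interact correctly with the square-root form of Theorem~\ref{thm:online-aes}; once those are in place, the remainder is assembly of \eqref{eq:qstar-sensitivity}, \eqref{eq:softmax-lip}, Theorem~\ref{thm:online-aes}, and the entropy/boundedness facts already established in the excerpt, together with routine tracking of the $|\mathcal{S}|,|\mathcal{A}|,\mu,(1-\gamma)$ factors.
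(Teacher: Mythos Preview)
Your proposal is correct and follows essentially the same route as the paper: soft performance-difference lemma $\to$ occupancy-weighted per-state surrogate gap (which equals $\mu\,\mathrm{KL}(\pi_t\|\pi_t^\star)$ by the Fenchel--Young identity) $\to$ apply the online AES dynamic-regret bound with comparator $u_{t,s}=\pi_t^\star(\cdot\mid s)$ $\to$ chain \eqref{eq:qstar-sensitivity} and \eqref{eq:softmax-lip} to bound the drift budget by $B_T^{\mathrm{MDP}}$ $\to$ collect interface residuals into $\mathrm{Bias}_t$.

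One clarification on your listed obstacle (ii): the ``$\mu\neq\lambda_t$ mismatch'' is not actually a bias term in the paper's setup. The baseline temperature $\mu$ is part of the \emph{base loss} $f_{t,s}(\pi_s)=-\langle Q_t^\star(s,\cdot),\pi_s\rangle+\mu\Psi(\pi_s)$, while $\lambda_t$ is the \emph{additional} AES regularization layered on top via $\mathcal{L}_t=F_t+\lambda_t\Psi_{\mathrm{tot}}$; the entropy-removal identity \eqref{eq:reg-id}--\eqref{eq:psi-bound-term} strips off exactly the $\lambda_t$ piece, and the dynamic regret is measured on $F_t$ (which already contains $\mu$). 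So $\mu$ and $\lambda_t$ play different roles by design, and no per-round compensation is needed for their gap. The paper's $\mathrm{Bias}_t$ consists only of the $Q^\star$-vs-$Q^{\pi_t}$ substitution, critic estimation error, and the occupancy mismatch $d_t^{\pi_t^\star}\to\widetilde d_t$; clipping compensation is tracked separately as an additive endpoint term.
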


\textbf{From planning to learning.}
\Cref{thm:aes-rl-plan} is a planning-version principal-term result: it isolates the dominant dependence on MDP variation in non-stationary soft RL. In deep off-policy RL, replacing $Q_t^\star$ and $d_t^{\pi_t^\star}$ by learned surrogates introduces additional approximation and occupancy terms, which are collected into $\mathrm{Bias}_t$ and made explicit in Appendices~\ref{app:M} and~\ref{app:N}. We therefore view \Cref{thm:aes-rl-plan} as the structural RL bridge for AES, rather than as a complete finite-sample theorem for deep off-policy learning.

\begin{table*}[t]
\caption{How AES is integrated into each algorithm carrier. We keep the base algorithms unchanged except for replacing the entropy weight with the scheduled value.}
\label{tab:aes_injection}
\centering
\small
\begin{tabular}{p{1.2cm} p{2.4cm} p{6.4cm} p{4.2cm}}
\toprule
\textbf{Carrier} & \textbf{Entropy weight} & \textbf{AES output and injection} & \textbf{Default proxy} \\
\midrule
SAC &
temperature $\alpha$ &
output $\alpha_t$; replace $\alpha$ in the actor objective term $\alpha_t \log \pi(a|s)$ (and the corresponding soft target/value computation, if applicable) &
$\mathrm{Quantile}_{0.9}(|\delta^{Q_1}|\cup|\delta^{Q_2}|)$ + smoothing \\
\addlinespace
PPO &
entropy coefficient $c_{\mathrm{ent}}$ &
output $c_{\mathrm{ent},t}$; replace $c_{\mathrm{ent}}$ in the policy loss term $-c_{\mathrm{ent},t} \, H(\pi(\cdot|s))$ &
$\mathrm{Quantile}_{0.9}(|\delta^{V}|)$ over rollout batch + smoothing \\
\addlinespace
SQL &
temperature $\alpha$ &
output $\alpha_t$; replace $\alpha$ in soft backup/value computation (soft Bellman target) &
$\mathrm{Quantile}_{0.9}(|\delta^{\mathrm{soft}Q}|)$ + smoothing \\
\addlinespace
MEow &
temperature $\alpha$ &
output $\alpha_t$; replace $\alpha$ wherever it appears in the clipped double-$Q$ targets / soft value computation; synchronize online/target modules &
$\mathrm{Quantile}_{0.9}(|\delta^{Q_1}|\cup|\delta^{Q_2}|)$ + smoothing (EMA recommended) \\
\bottomrule
\end{tabular}
\end{table*}

\begin{table*}[t]
  \centering
  \caption{\textbf{Overall performance under different non-stationarity patterns.} We report normalized AUC (higher is better) and performance drop area ratio (lower is better) for each method across five change patterns (Steady, Abrupt, Linear, Periodic, Mixed), aggregated within each task family. Normalization is performed relative to the Steady SAC baseline in the same family.}
  \label{tab:sheet1}
  \renewcommand{\arraystretch}{1.15}
  \begin{tabular}{|l|c|cccc|c|cccc|}
    \hline
    \cellcolor{xlWhite} & \multicolumn{5}{|c|}{\cellcolor{xlWhite}nAUC $\uparrow$} & \multicolumn{5}{|c|}{\cellcolor{xlWhite}Performance drop area ratio $\downarrow$} \\
    \hline
    \cellcolor{xlWhite}Toy & \cellcolor{xlWhite}Steady & \cellcolor{xlWhite}Abrupt & \cellcolor{xlWhite}Linear & \cellcolor{xlWhite}Periodic & \cellcolor{xlWhite}Mixed & \cellcolor{xlWhite}Steady & \cellcolor{xlWhite}Abrupt & \cellcolor{xlWhite}Linear & \cellcolor{xlWhite}Periodic & \cellcolor{xlWhite}Mixed \\
    \hline
    \cellcolor{xlWhite}SAC & \cellcolor{xlGray}1.00 & \cellcolor{xlPink}0.72 & \cellcolor{xlPink}0.80 & \cellcolor{xlPink}0.81 & \cellcolor{xlPink}0.73 & \cellcolor{xlGray}0.00 & \cellcolor{xlPink}0.28 & \cellcolor{xlPink}0.20 & \cellcolor{xlPink}0.19 & \cellcolor{xlPink}0.27 \\
    \cellcolor{xlWhite}PPO & \cellcolor{xlGray}0.89 & \cellcolor{xlPink}0.75 & \cellcolor{xlPink}0.79 & \cellcolor{xlPink}0.71 & \cellcolor{xlPink}0.67 & \cellcolor{xlGray}0.00 & \cellcolor{xlPink}0.16 & \cellcolor{xlPink}0.11 & \cellcolor{xlPink}0.20 & \cellcolor{xlPink}0.25 \\
    \cellcolor{xlWhite}SQL & \cellcolor{xlGray}0.90 & \cellcolor{xlPink}0.75 & \cellcolor{xlPink}0.82 & \cellcolor{xlPink}0.77 & \cellcolor{xlPink}0.68 & \cellcolor{xlGray}0.00 & \cellcolor{xlPink}0.17 & \cellcolor{xlPink}0.09 & \cellcolor{xlPink}0.14 & \cellcolor{xlPink}0.24 \\
    \cellcolor{xlWhite}MEow & \cellcolor{xlGray}0.90 & \cellcolor{xlPink}0.79 & \cellcolor{xlPink}0.87 & \cellcolor{xlPink}0.86 & \cellcolor{xlPink}0.77 & \cellcolor{xlGray}0.00 & \cellcolor{xlPink}0.12 & \cellcolor{xlPink}0.03 & \cellcolor{xlPink}0.04 & \cellcolor{xlPink}0.14 \\
    \hline
    \cellcolor{xlWhite}SAC+AES & \cellcolor{xlBlue}\textbf{1.13} & \cellcolor{xlGreen}0.88 & \cellcolor{xlGreen}0.94 & \cellcolor{xlGreen}0.94 & \cellcolor{xlGreen}\textbf{0.97} & \cellcolor{xlBlue}0.00 & \cellcolor{xlGreen}0.22 & \cellcolor{xlGreen}0.17 & \cellcolor{xlGreen}0.17 & \cellcolor{xlGreen}0.14 \\
    \cellcolor{xlWhite}PPO+AES & \cellcolor{xlBlue}0.89 & \cellcolor{xlGreen}0.92 & \cellcolor{xlGreen}0.92 & \cellcolor{xlGreen}0.89 & \cellcolor{xlGreen}0.79 & \cellcolor{xlBlue}0.00 & \cellcolor{xlGreen}-0.03 & \cellcolor{xlGreen}-0.03 & \cellcolor{xlGreen}0.00 & \cellcolor{xlGreen}0.11 \\
    \cellcolor{xlWhite}SQL+AES & \cellcolor{xlBlue}0.98 & \cellcolor{xlGreen}0.90 & \cellcolor{xlGreen}0.94 & \cellcolor{xlGreen}0.93 & \cellcolor{xlGreen}0.91 & \cellcolor{xlBlue}0.00 & \cellcolor{xlGreen}0.08 & \cellcolor{xlGreen}0.04 & \cellcolor{xlGreen}0.05 & \cellcolor{xlGreen}\textbf{0.07} \\
    \cellcolor{xlWhite}MEow+AES & \cellcolor{xlBlue}0.97 & \cellcolor{xlGreen}\textbf{1.03} & \cellcolor{xlGreen}\textbf{1.02} & \cellcolor{xlGreen}\textbf{1.01} & \cellcolor{xlGreen}0.83 & \cellcolor{xlBlue}0.00 & \cellcolor{xlGreen}\textbf{-0.06} & \cellcolor{xlGreen}\textbf{-0.05} & \cellcolor{xlGreen}\textbf{-0.04} & \cellcolor{xlGreen}0.14 \\
    \hline
    \cellcolor{xlWhite}MuJoCo & \cellcolor{xlWhite}Steady & \cellcolor{xlWhite}Abrupt & \cellcolor{xlWhite}Linear & \cellcolor{xlWhite}Periodic & \cellcolor{xlWhite}Mixed & \cellcolor{xlWhite}Steady & \cellcolor{xlWhite}Abrupt & \cellcolor{xlWhite}Linear & \cellcolor{xlWhite}Periodic & \cellcolor{xlWhite}Mixed \\
    \hline
    \cellcolor{xlWhite}SAC & \cellcolor{xlGray}1.00 & \cellcolor{xlPink}0.67 & \cellcolor{xlPink}0.76 & \cellcolor{xlPink}0.68 & \cellcolor{xlPink}0.65 & \cellcolor{xlGray}0.00 & \cellcolor{xlPink}0.33 & \cellcolor{xlPink}0.24 & \cellcolor{xlPink}0.32 & \cellcolor{xlPink}0.35 \\
    \cellcolor{xlWhite}PPO & \cellcolor{xlGray}0.88 & \cellcolor{xlPink}0.66 & \cellcolor{xlPink}0.64 & \cellcolor{xlPink}0.67 & \cellcolor{xlPink}0.57 & \cellcolor{xlGray}0.00 & \cellcolor{xlPink}0.25 & \cellcolor{xlPink}0.27 & \cellcolor{xlPink}0.24 & \cellcolor{xlPink}0.35 \\
    \cellcolor{xlWhite}SQL & \cellcolor{xlGray}0.80 & \cellcolor{xlPink}0.52 & \cellcolor{xlPink}0.53 & \cellcolor{xlPink}0.50 & \cellcolor{xlPink}0.44 & \cellcolor{xlGray}0.00 & \cellcolor{xlPink}0.35 & \cellcolor{xlPink}0.34 & \cellcolor{xlPink}0.38 & \cellcolor{xlPink}0.45 \\
    \cellcolor{xlWhite}MEow & \cellcolor{xlGray}0.92 & \cellcolor{xlPink}0.67 & \cellcolor{xlPink}0.71 & \cellcolor{xlPink}0.79 & \cellcolor{xlPink}0.63 & \cellcolor{xlGray}0.00 & \cellcolor{xlPink}0.27 & \cellcolor{xlPink}0.23 & \cellcolor{xlPink}0.14 & \cellcolor{xlPink}0.32 \\
    \hline
    \cellcolor{xlWhite}SAC+AES & \cellcolor{xlBlue}\textbf{1.24} & \cellcolor{xlGreen}0.87 & \cellcolor{xlGreen}\textbf{0.94} & \cellcolor{xlGreen}\textbf{0.94} & \cellcolor{xlGreen}\textbf{0.94} & \cellcolor{xlBlue}0.00 & \cellcolor{xlGreen}0.30 & \cellcolor{xlGreen}0.24 & \cellcolor{xlGreen}0.24 & \cellcolor{xlGreen}0.24 \\
    \cellcolor{xlWhite}PPO+AES & \cellcolor{xlBlue}0.94 & \cellcolor{xlGreen}0.69 & \cellcolor{xlGreen}0.79 & \cellcolor{xlGreen}0.82 & \cellcolor{xlGreen}0.77 & \cellcolor{xlBlue}0.00 & \cellcolor{xlGreen}0.27 & \cellcolor{xlGreen}0.16 & \cellcolor{xlGreen}0.13 & \cellcolor{xlGreen}0.18 \\
    \cellcolor{xlWhite}SQL+AES & \cellcolor{xlBlue}0.81 & \cellcolor{xlGreen}0.65 & \cellcolor{xlGreen}0.79 & \cellcolor{xlGreen}0.71 & \cellcolor{xlGreen}0.64 & \cellcolor{xlBlue}0.00 & \cellcolor{xlGreen}0.20 & \cellcolor{xlGreen}\textbf{0.02} & \cellcolor{xlGreen}0.12 & \cellcolor{xlGreen}0.21 \\
    \cellcolor{xlWhite}MEow+AES & \cellcolor{xlBlue}0.98 & \cellcolor{xlGreen}\textbf{0.95} & \cellcolor{xlGreen}0.88 & \cellcolor{xlGreen}0.87 & \cellcolor{xlGreen}0.89 & \cellcolor{xlBlue}0.00 & \cellcolor{xlGreen}\textbf{0.03} & \cellcolor{xlGreen}0.10 & \cellcolor{xlGreen}\textbf{0.11} & \cellcolor{xlGreen}\textbf{0.09} \\
    \hline
    \cellcolor{xlWhite}Isaac Gym & \cellcolor{xlWhite}Steady & \cellcolor{xlWhite}Abrupt & \cellcolor{xlWhite}Linear & \cellcolor{xlWhite}Periodic & \cellcolor{xlWhite}Mixed & \cellcolor{xlWhite}Steady & \cellcolor{xlWhite}Abrupt & \cellcolor{xlWhite}Linear & \cellcolor{xlWhite}Periodic & \cellcolor{xlWhite}Mixed \\
    \hline
    \cellcolor{xlWhite}SAC & \cellcolor{xlGray}1.00 & \cellcolor{xlPink}0.58 & \cellcolor{xlPink}0.69 & \cellcolor{xlPink}0.57 & \cellcolor{xlPink}0.51 & \cellcolor{xlGray}0.00 & \cellcolor{xlPink}0.42 & \cellcolor{xlPink}0.31 & \cellcolor{xlPink}0.43 & \cellcolor{xlPink}0.49 \\
    \cellcolor{xlWhite}PPO & \cellcolor{xlGray}0.82 & \cellcolor{xlPink}0.44 & \cellcolor{xlPink}0.59 & \cellcolor{xlPink}0.55 & \cellcolor{xlPink}0.48 & \cellcolor{xlGray}0.00 & \cellcolor{xlPink}0.46 & \cellcolor{xlPink}0.28 & \cellcolor{xlPink}0.33 & \cellcolor{xlPink}0.41 \\
    \cellcolor{xlWhite}SQL & \cellcolor{xlGray}0.76 & \cellcolor{xlPink}0.39 & \cellcolor{xlPink}0.47 & \cellcolor{xlPink}0.49 & \cellcolor{xlPink}0.38 & \cellcolor{xlGray}0.00 & \cellcolor{xlPink}0.49 & \cellcolor{xlPink}0.38 & \cellcolor{xlPink}0.36 & \cellcolor{xlPink}0.50 \\
    \cellcolor{xlWhite}MEow & \cellcolor{xlGray}0.99 & \cellcolor{xlPink}0.68 & \cellcolor{xlPink}0.76 & \cellcolor{xlPink}0.76 & \cellcolor{xlPink}0.54 & \cellcolor{xlGray}0.00 & \cellcolor{xlPink}0.31 & \cellcolor{xlPink}0.23 & \cellcolor{xlPink}0.23 & \cellcolor{xlPink}0.45 \\
    \hline
    \cellcolor{xlWhite}SAC+AES & \cellcolor{xlBlue}\textbf{1.13} & \cellcolor{xlGreen}0.73 & \cellcolor{xlGreen}\textbf{0.93} & \cellcolor{xlGreen}0.95 & \cellcolor{xlGreen}0.79 & \cellcolor{xlBlue}0.00 & \cellcolor{xlGreen}0.35 & \cellcolor{xlGreen}0.18 & \cellcolor{xlGreen}0.16 & \cellcolor{xlGreen}0.30 \\
    \cellcolor{xlWhite}PPO+AES & \cellcolor{xlBlue}0.87 & \cellcolor{xlGreen}0.62 & \cellcolor{xlGreen}0.67 & \cellcolor{xlGreen}0.70 & \cellcolor{xlGreen}0.57 & \cellcolor{xlBlue}0.00 & \cellcolor{xlGreen}0.29 & \cellcolor{xlGreen}0.23 & \cellcolor{xlGreen}0.20 & \cellcolor{xlGreen}0.34 \\
    \cellcolor{xlWhite}SQL+AES & \cellcolor{xlBlue}0.81 & \cellcolor{xlGreen}0.66 & \cellcolor{xlGreen}0.66 & \cellcolor{xlGreen}0.72 & \cellcolor{xlGreen}0.60 & \cellcolor{xlBlue}0.00 & \cellcolor{xlGreen}\textbf{0.19} & \cellcolor{xlGreen}0.19 & \cellcolor{xlGreen}0.11 & \cellcolor{xlGreen}0.26 \\
    \cellcolor{xlWhite}MEow+AES & \cellcolor{xlBlue}1.09 & \cellcolor{xlGreen}\textbf{0.81} & \cellcolor{xlGreen}0.90 & \cellcolor{xlGreen}\textbf{0.99} & \cellcolor{xlGreen}\textbf{0.92} & \cellcolor{xlBlue}0.00 & \cellcolor{xlGreen}0.26 & \cellcolor{xlGreen}\textbf{0.17} & \cellcolor{xlGreen}\textbf{0.09} & \cellcolor{xlGreen}\textbf{0.16} \\
    \hline
  \end{tabular}
\end{table*}

\begin{table*}[t]
  \centering
  \caption{\textbf{Adaptation speed after abrupt environmental changes.} Abrupt-change recovery time is reported as the fraction of total training steps required for performance to return to the pre-change level after each change point. Lower values indicate better adaptability.}
  \label{tab:sheet2}
  \renewcommand{\arraystretch}{1.15}
  \begin{tabular}{|l|l|cccc|cccc|}
    \hline
    \multicolumn{10}{|c|}{\cellcolor{xlWhite}Abrupt-change recovery time} \\
    \hline 
    \cellcolor{xlWhite}Family & \cellcolor{xlWhite}Task & \cellcolor{xlWhite}SAC & \cellcolor{xlWhite}PPO & \cellcolor{xlWhite}SQL & \cellcolor{xlWhite}MEow & \cellcolor{xlWhite}\makecell{SAC\\+AES} & \cellcolor{xlWhite}\makecell{PPO\\+AES} & \cellcolor{xlWhite}\makecell{SQL\\+AES} & \cellcolor{xlWhite}\makecell{MEow\\+AES} \\
    \hline
    \cellcolor{xlWhite}Toy & \cellcolor{xlWhite}2d multi-goal & \cellcolor{xlGray}7.8\% & \cellcolor{xlGray}7.6\% & \cellcolor{xlGray}9.4\% & \cellcolor{xlGray}6.1\% & \cellcolor{xlGreen}3.7\% & \cellcolor{xlGreen}3.7\% & \cellcolor{xlGreen}4.6\% & \cellcolor{xlGreen}3.2\% \\
    \hline
    \multirow{5}{*}{\cellcolor{xlWhite}MuJoCo} & \cellcolor{xlWhite}Hopper & \cellcolor{xlGray}12.2\% & \cellcolor{xlGray}9.4\% & \cellcolor{xlGray}12.7\% & \cellcolor{xlGray}10.9\% & \cellcolor{xlGreen}6.4\% & \cellcolor{xlGreen}6.1\% & \cellcolor{xlGreen}6.6\% & \cellcolor{xlGreen}4.7\% \\
     & \cellcolor{xlWhite}HalfCheetah & \cellcolor{xlGray}9.6\% & \cellcolor{xlGray}8.0\% & \cellcolor{xlGray}11.8\% & \cellcolor{xlGray}7.7\% & \cellcolor{xlGreen}5.1\% & \cellcolor{xlGreen}5.1\% & \cellcolor{xlGreen}5.8\% & \cellcolor{xlGreen}4.4\% \\
     & \cellcolor{xlWhite}Walker2d & \cellcolor{xlGray}10.9\% & \cellcolor{xlGray}9.0\% & \cellcolor{xlGray}12.2\% & \cellcolor{xlGray}9.0\% & \cellcolor{xlGreen}5.6\% & \cellcolor{xlGreen}5.5\% & \cellcolor{xlGreen}6.6\% & \cellcolor{xlGreen}4.7\% \\
     & \cellcolor{xlWhite}Ant & \cellcolor{xlGray}11.9\% & \cellcolor{xlGray}10.9\% & \cellcolor{xlGray}16.4\% & \cellcolor{xlGray}11.7\% & \cellcolor{xlGreen}6.7\% & \cellcolor{xlGreen}6.9\% & \cellcolor{xlGreen}8.3\% & \cellcolor{xlGreen}5.9\% \\
     & \cellcolor{xlWhite}Humanoid & \cellcolor{xlGray}14.8\% & \cellcolor{xlGray}13.0\% & \cellcolor{xlGray}16.3\% & \cellcolor{xlGray}15.1\% & \cellcolor{xlGreen}8.6\% & \cellcolor{xlGreen}10.3\% & \cellcolor{xlGreen}10.6\% & \cellcolor{xlGreen}7.5\% \\
    \hline
    \multirow{6}{*}{\cellcolor{xlWhite}Isaac Gym} & \cellcolor{xlWhite}Ant & \cellcolor{xlGray}14.1\% & \cellcolor{xlGray}11.2\% & \cellcolor{xlGray}17.5\% & \cellcolor{xlGray}11.6\% & \cellcolor{xlGreen}8.7\% & \cellcolor{xlGreen}9.7\% & \cellcolor{xlGreen}10.2\% & \cellcolor{xlGreen}7.0\% \\
     & \cellcolor{xlWhite}Humanoid & \cellcolor{xlGray}16.5\% & \cellcolor{xlGray}14.1\% & \cellcolor{xlGray}18.2\% & \cellcolor{xlGray}12.5\% & \cellcolor{xlGreen}9.8\% & \cellcolor{xlGreen}10.5\% & \cellcolor{xlGreen}13.9\% & \cellcolor{xlGreen}7.8\% \\
     & \cellcolor{xlWhite}Ingenuity & \cellcolor{xlGray}15.3\% & \cellcolor{xlGray}14.1\% & \cellcolor{xlGray}16.7\% & \cellcolor{xlGray}12.7\% & \cellcolor{xlGreen}8.2\% & \cellcolor{xlGreen}11.0\% & \cellcolor{xlGreen}11.6\% & \cellcolor{xlGreen}7.0\% \\
     & \cellcolor{xlWhite}ANYmal & \cellcolor{xlGray}13.5\% & \cellcolor{xlGray}13.8\% & \cellcolor{xlGray}17.7\% & \cellcolor{xlGray}12.5\% & \cellcolor{xlGreen}9.0\% & \cellcolor{xlGreen}8.7\% & \cellcolor{xlGreen}10.2\% & \cellcolor{xlGreen}6.5\% \\
     & \cellcolor{xlWhite}AllegroHand & \cellcolor{xlGray}21.5\% & \cellcolor{xlGray}18.4\% & \cellcolor{xlGray}21.3\% & \cellcolor{xlGray}14.6\% & \cellcolor{xlGreen}10.6\% & \cellcolor{xlGreen}12.1\% & \cellcolor{xlGreen}14.6\% & \cellcolor{xlGreen}9.8\% \\
     & \cellcolor{xlWhite}FrankaCabinet & \cellcolor{xlGray}19.4\% & \cellcolor{xlGray}15.9\% & \cellcolor{xlGray}18.5\% & \cellcolor{xlGray}14.6\% & \cellcolor{xlGreen}10.5\% & \cellcolor{xlGreen}11.6\% & \cellcolor{xlGreen}13.7\% & \cellcolor{xlGreen}8.5\% \\
    \hline
  \end{tabular}
\end{table*}

\subsection{Experiment-Theory Interface}
In theory, AES is driven by a conservative proxy for comparator drift. In practice, we instantiate this proxy by the upper quantile of TD errors, which tends to rise when environmental change makes previously accurate value predictions stale. It is also an inherent training parameter, obtainable without extra cost or operation. AES therefore uses the proxy as an online alarm for increasing or relaxing exploration, rather than as an exact estimator of comparator drift. Appendix~\ref{app:proxy-calibration} gives the calibration procedure, and other calibrated signals (e.g., model disagreement or critic-parameter drift) would also fit the same interface.

Section~\ref{sec:experiments} evaluates the structural prediction of the theory: exploration strength should increase under stronger non-stationarity and relax in stable phases. We therefore report behavioral metrics, rather than treating the experiments as a numerical verification of the regret bounds.

\section{Experiments}
\label{sec:experiments}

\subsection{Experiment Setup}
\label{subsec:exp_setup}

\textbf{Algorithm carriers.}
We evaluate AES on four entropy-regularized RL algorithms: SAC~\cite{haarnoja2018sac}, PPO~\cite{schulman2017proximal}, SQL~\cite{haarnoja2017reinforcement}, and MEow~\cite{chao2024maximum}. All methods contain an explicit entropy regularization term (temperature $\alpha$ for SAC/SQL/MEow, entropy bonus $c_{\mathrm{ent}}$ for PPO). AES is integrated without modifying the underlying actor--critic or value-learning structure, covering both off-policy and on-policy learning as well as distinct MaxEnt formulations. For the SAC baseline, we applied automatic temperature control.

AES acts as a plug-in exploration controller that schedules the entropy weight online. At each update, a scalar drift proxy is extracted from critic TD errors (off-policy) or value TD deltas (PPO), smoothed via a window or EMA, and fed into the variation-aware scheduler. AES outputs a scheduled temperature $\alpha_t$ (SAC/SQL/MEow) or entropy coefficient $c_{\mathrm{ent},t}$ (PPO), replacing the fixed entropy weight.
Our default proxy is the $q$-quantile of absolute TD errors,
\begin{equation}
\hat{v}_t = \mathrm{Quantile}_q\!\left(|\delta|\right), \qquad q = 0.9,
\label{eq:td_proxy}
\end{equation}
computed on the current update batch.
The scheduled entropy weight is clipped to a fixed range for numerical stability.
Carrier-specific injection points are summarized in Table~\ref{tab:aes_injection}.

\textbf{Tasks and non-stationarity.}
We evaluate AES on three task families: \textbf{Toy} (2D multi-goal), \textbf{MuJoCo}~\cite{todorov2012mujoco} (Hopper, HalfCheetah, Walker2d, Ant, Humanoid), and \textbf{Isaac Gym}~\cite{makoviychuk2021isaac} (Ant, Humanoid, Ingenuity, ANYmal, AllegroHand, FrankaCabinet). Each task is trained under five non-stationarity patterns: \emph{Steady}, \emph{Abrupt}, \emph{Linear}, \emph{Periodic}, and \emph{Mixed}. Drifts are injected via goal changes (Toy), dynamics or target variations (MuJoCo), and scalable physics or task perturbations (Isaac Gym). Change points are aligned by \emph{training progress} rather than wall-clock time: all methods encounter the same drift events at the same fractions of total environment interaction steps. This keeps the non-stationarity schedule comparable across methods even when their internal optimization dynamics differ.

\textbf{Evaluation protocol.}
All results are averaged over multiple random seeds with appropriate dispersion. Methods are aligned by environment interaction steps and evaluated under identical drift schedules and termination criteria. AES incurs negligible overhead, requiring only a scalar statistic of TD errors and a lightweight scheduler update per iteration.

Unlike change-point detection or restart-based methods, AES performs continuous entropy modulation and does not explicitly detect or localize change points.

\subsection{Evaluation Metrics}
\label{subsec:metrics}

We report three complementary metrics that jointly capture overall sample efficiency, robustness to non-stationarity, and direct post-change adaptation speed.

\textbf{(1) Normalized Area Under Curve (nAUC).}
For each method and task, we compute the area under the return--environment-steps curve
\begin{equation}
\mathrm{AUC} \;=\; \int_{0}^{T} R(t)\, \mathrm{d}t,
\end{equation}
where $R(t)$ is the expected return at environment step $t$ and $T$ is the total training steps.
To reduce scale differences across tasks, we normalize AUC by the \textbf{Steady SAC} baseline within the same task family $\mathcal{F}$:
\begin{equation}
\mathrm{nAUC}(\tau) \;=\;
\frac{\mathrm{AUC}(\tau)}{\mathrm{AUC}\!\left(\text{SAC},\,\text{Steady},\,\mathcal{F}\right)}.
\end{equation}

\textbf{(2) Performance drop area ratio.}
To quantify the proportion of performance loss induced by non-stationarity, we compare the AUC under a non-stationary pattern to the corresponding Steady AUC.
Let $\mathrm{AUC}^{\text{ns}}$ be the AUC under a non-stationary pattern and $\mathrm{AUC}^{\text{steady}}$ the AUC under Steady.
We define
\begin{equation}
\label{eq:drop_area_ratio}
\mathrm{DropRatio}
\;=\;
1 - \frac{\mathrm{AUC}^{\text{ns}}}{\mathrm{AUC}^{\text{steady}}}.
\end{equation}
Smaller values indicate less cumulative damage from drift. Note that $\mathrm{DropRatio}$ can be negative when $\mathrm{AUC}^{\text{ns}} > \mathrm{AUC}^{\text{steady}}$, meaning the non-stationary run outperformed the steady reference over the horizon.

\textbf{(3) Abrupt-change recovery time.}
Under abrupt pattern, we measure how quickly a method recovers after a change point.
Specifically, recovery time is defined as the number of environment steps from the change occurrence to the first time the performance returns to the pre-change level (using the same evaluation protocol as the learning curves).
We report recovery time \emph{normalized} by the total training steps, so the metric is comparable across tasks. Let $\{t_{\mathrm{change}}^{(i)}\}_{i=1}^{K}$ denote the set of $K$ change points during training, and let $t_{\mathrm{recover}}^{(i)}$ be the corresponding recovery time after the $i$-th change.
We define the normalized recovery time as
\begin{equation}
\label{eq:multi_recovery_time}
\mathrm{Rec}(\tau)
\;=\;
\frac{1}{T}
\sum_{i=1}^{K}
\left(
t_{\mathrm{recover}}^{(i)} - t_{\mathrm{change}}^{(i)}
\right),
\end{equation}
Lower values indicate stronger adaptability to abrupt non-stationarity.

\subsection{Overall Performance Across Non-stationarity Patterns}
\label{subsec:main_results_1}

We first evaluate overall performance under five patterns (Steady, Abrupt, Linear, Periodic, Mixed) across three task families (Toy, MuJoCo, Isaac Gym) and four algorithmic carriers (SAC, PPO, SQL, MEow).
Table~\ref{tab:sheet1} summarizes \textbf{normalized AUC} and \textbf{drop-area ratio}.

\textbf{Overall trends.}
Table~\ref{tab:sheet1} reveals a consistent benefit of AES across task families and carriers.
Under \textbf{non-stationary patterns} (Abrupt/Linear/Periodic/Mixed), AES generally \emph{increases} normalized AUC while \emph{reducing} the drop-area ratio, indicating improved sample efficiency and stronger robustness to changes.
Under the \textbf{Steady} pattern, we do not observe systematic degradation; in most settings the normalized AUC is comparable or slightly improved.
This suggests that AES is not a one-off patch specialized for change events, but rather a broadly applicable mechanism for \emph{adaptive calibration of exploration strength}.

\subsection{Abrupt-change Recovery Time}
\label{subsec:main_results_2}

We next focus on adaptation speed under abrupt non-stationarity.
Table~\ref{tab:sheet2} reports the \textbf{normalized abrupt-change recovery time} (percentage of total training steps) across all tasks and carriers.

\textbf{Recovery is consistently faster with AES.}
Across all tasks and carriers, AES reduces normalized recovery time.
Averaged over the 12 tasks, recovery time decreases from $\approx 13.96\%$ to $7.74\%$ for SAC, from $\approx 12.12\%$ to $8.43\%$ for PPO, from $\approx 15.73\%$ to $9.73\%$ for SQL, and from $\approx 11.58\%$ to $6.42\%$ for MEow.
Gains are especially large on high-dimensional tasks (e.g., \textsc{AllegroHand} and \textsc{FrankaCabinet}), where distribution shifts are more disruptive, consistent with the role of increased exploration after change points.

\subsection{Summary}
\label{subsec:exp_summary}

Combining Table~\ref{tab:sheet1} (normalized AUC and drop-area ratio) with Table~\ref{tab:sheet2} (abrupt-change recovery time), we observe three key characteristics of AES:
\textbf{(i) Cross-algorithm consistency:} all four carriers benefit, suggesting that AES functions as a \emph{plug-in exploration-strength control layer}.
\textbf{(ii) Stronger gains under more severe or composite changes:} improvements are most pronounced under Abrupt and Mixed patterns, where changes are sharper and harder to track, as reflected by larger reductions in drop-area ratio and recovery time.
\textbf{(iii) Good steady-state behavior:} under Steady training, we do not see systematic regressions, alleviating the common concern that better adaptation to non-stationarity must come at the cost of stable-phase performance. Ablation study is shown in Appendix~\ref{app:ab}.

\section{Conclusion}
This work frames entropy scheduling in non-stationary reinforcement learning as a one-dimensional trade-off and introduces AES, an online method that adapts the entropy coefficient based on observable drift proxies. Empirically, AES improves robustness to non-stationarity and speeds up recovery after abrupt changes across a range of algorithms and tasks. Future directions include improving drift proxy design, analyzing interactions with other adaptation mechanisms, and extending the framework to broader control settings.

\section*{Acknowledgement}
This study is supported by the National Natural Science Foundation of China [Grant 72401063],  Natural Science Foundation of Jiangsu Province [BK20241285], Zhishan Young Scholarship of Southeast University, and the Young Elite Scientist Sponsorship Program By CAST [YESS20240177].

\section*{Impact Statement}

This paper presents work whose goal is to advance the field of Machine
Learning. There are many potential societal consequences of our work, none
which we feel must be specifically highlighted here.

\bibliography{reference}  
\bibliographystyle{icml2026}

\newpage
\appendix
\onecolumn

\section{Theory Details and Proofs for Section~\ref{sec:theory}}
\label{app:theory}
\subsection{Guide to Appendix~A}
\label{app:part1-guide}

Appendix~\ref{app:theory} contains the theoretical material behind Section~\ref{sec:theory}. It is organized from the main argument to the supporting proofs. The first part of the appendix develops the full OCO-to-RL chain used in Section~\ref{sec:theory}; the later part records the technical ingredients, deferred proofs, and auxiliary inequalities used along that chain.

A first reading can proceed through Appendix~\ref{a2}--~\ref{a9}. These sections contain the main theory path: the unified problem setup, the OCO trade-off, the online schedule, the reduction to non-stationary soft RL, the proxy calibration step, and the planning-to-learning residual terms. Appendix~\ref{a10}--~\ref{a22} can then be consulted locally when a proof, constant, or auxiliary lemma is needed.

More specifically, each subsection serves the following role.

Appendix~\ref{a2} fixes the common language for the rest of the theory. It introduces the non-stationary OCO problem, the entropy mirror-descent update, the scheduled entropy coefficient, and the way the same control variable will later be transferred to non-stationary maximum-entropy RL.

Appendix~\ref{a3} derives the core per-round trade-off
\begin{equation}
\frac{C_1 \xi_t}{\lambda_t} + C_2 \lambda_t.
\end{equation}
It starts from dynamic mirror descent, removes the entropy regularization term, and turns the resulting bound into the one-dimensional form used in Theorem~\ref{thm:tradeoff}. It also gives the corresponding per-round oracle choice behind Theorem~3.3.

Appendix~\ref{a4} turns the oracle rule into a fully online schedule driven by an observable proxy. It derives the prefix-sum schedule, proves the corresponding regret bound, and states the clipped form used in implementation. This subsection is the direct support for Theorem~\ref{thm:online-aes}.

Appendix~\ref{a5} transfers the same structure to non-stationary soft RL. It defines the statewise entropy-regularized surrogate, shows how MDP variation controls $Q_t^\star$ drift and then policy drift, and states the planning-version principal-term result in Theorem~\ref{thm:aes-rl-plan}.

Appendix~\ref{a6} explains how the ideal planning objects are connected to practical RL quantities. It writes the learning-side residual terms explicitly and shows where critic substitution and occupancy mismatch enter the argument.

Appendix~\ref{a7} makes the observable-proxy condition operational. It relates comparator drift to an MDP-level drift surrogate, then calibrates an observable signal through a monotone upper envelope with conformal adjustment, yielding a conservative proxy suitable for the online schedule.

Appendix~\ref{a8} gives the exact decomposition of the planning-to-learning gap. It introduces the discounted occupancy notation, states the soft performance-difference identity, gives the per-state surrogate representation, and defines the bias and occupancy-mismatch terms formally.

Appendix~\ref{a9} turns the decomposition from Appendix~A.8 into a usable residual bound. It isolates a directly controllable bias term, bounds the occupancy mismatch, and records an on-policy strengthening that can be invoked when needed.

Appendix~\ref{a10} marks the transition from the main theory chain to the technical support material. It explains that the remaining subsections are collected so that the core argument in Appendix~A.2--A.9 can be read without interruption.

Appendix~\ref{a11} records the basic facts about the simplex, the $\ell_1/\ell_\infty$ pairing, negative entropy, Bregman identities, and the truncated simplex. These facts are used repeatedly in the mirror-descent analysis and in the constant tracking steps.

Appendix~\ref{a12} rewrites the AES--OCO update in equivalent forms. It shows the connection between the proximal mirror-descent form and the exponentiated-gradient form, and explains why coupling $\eta_t=c\lambda_t$ yields a single control knob.

Appendix~\ref{a13} proves a robust dynamic mirror-descent inequality in a general form. It gives the one-step bound, the exact telescoping decomposition for dynamic comparators, and the summed inequality that underlies Lemma~3.1.

Appendix~\ref{a14} specializes the previous subsection to give the full proof of Lemma~3.1. It is the place to read when one wants the complete dynamic-comparator handling used in Section~\ref{sec:theory}.

Appendix~\ref{a15} collects the entropy bounds used throughout the appendix. It proves that $|\Psi(x)|\le \log K$ on the simplex and that $|\nabla \Psi(x)|_\infty$ is controlled on the truncated simplex.

Appendix~\ref{a16} tracks the constants in Theorem~\ref{thm:tradeoff}. Starting from the intermediate regret inequality, it shows exactly how the drift term, the gradient term, and the entropy term are reorganized into the constants $C_0$, $C_1$, and $C_2$.

Appendix~\ref{a17} studies the best offline constant choice of $\lambda$. It derives the minimizer when $\lambda_t$ is held fixed, and relates that offline choice to the per-round oracle scale.

Appendix~\ref{a18} proves the two summation inequalities used in the online schedule analysis. These are the square-root potential bound and the bound on $\sum_t \sqrt{\widehat{A}_t/t}$ that appear in the proof of Theorem~\ref{thm:online-aes}.

Appendix~\ref{a19} analyzes the effect of clipping the schedule to $[\lambda_{\min},\lambda_{\max}]$. It shows that clipping changes the bound only through controllable additive compensation terms and explains why clipping is needed in practical RL implementations.

Appendix~\ref{a20} proves Eq.~\eqref{eq:qstar-sensitivity}. It develops the soft Bellman operator, shows that log-sum-exp is $1$-Lipschitz under $\|\cdot\|_\infty$, proves the contraction and operator-drift lemmas, and then derives the sensitivity bound for $Q_t^\star$.

Appendix~\ref{a21} proves Eq.~\eqref{eq:softmax-lip}. It gives the Jacobian calculation for softmax, establishes the $\ell_\infty \to \ell_1$ Lipschitz constant, and then transfers $Q^\star$ drift into policy drift in the statewise form used in Section~\ref{sec:theory}.

Appendix~\ref{a22} explains how to pass from squared drift terms to the linear variation budget without losing the target rate. It provides an explicit constant $C_Q$ and shows why this step preserves the intended $\widetilde{\mathcal O}(\sqrt{B_T^{\mathrm{MDP}}T})$ scaling.

For quick navigation, a reader interested mainly in the main claims can read Appendix~\ref{a2}--\ref{a5} first, then Appendix~\ref{a7}--\ref{a9}. A reader checking only the OCO derivation may focus on Appendix~\ref{a3} together with Appendix~\ref{a13}--\ref{a19}. A reader checking only the RL bridge may focus on Appendix~\ref{a5} together with Appendix~\ref{a20}--\ref{a22}.

\subsection{Preliminaries and Problem Setting: a Unified View of OCO and Non-stationary Maximum-Entropy RL}\label{a2}

This section studies adaptive exploration--exploitation in \textbf{non-stationary} environments. A fixed entropy regularization weight (temperature) may systematically mis-balance adaptation speed and excessive randomness when the environment varies over time. The analysis develops an \textbf{Adaptive Entropy Scheduling} (AES) principle: the entropy strength is scheduled online according to the magnitude of non-stationarity, yielding a unified theory chain from Online Convex Optimization (OCO) to non-stationary maximum-entropy reinforcement learning (soft-RL).

\subsubsection{Non-stationary Online Convex Optimization (OCO)}

The decision set is the $K$-dimensional simplex
\begin{equation}
\Pi = \Delta_K := \Bigl\{x\in\mathbb{R}_+^K:\sum_{i=1}^K x_i = 1\Bigr\}.
\end{equation}
At round $t=1,\dots,T$, the environment reveals a convex differentiable loss $f_t:\Pi\to[0,1]$. Assume bounded gradients: there exists $G>0$ such that for all $x\in\Pi$,
\begin{equation}
\|\nabla f_t(x)\|_\infty \le G.
\end{equation}
A \textbf{dynamic comparator} is any sequence $\{u_t\}_{t=1}^T\subset \Pi$. Define the per-round path increment
\begin{equation}
\Delta_t := \|u_t-u_{t-1}\|_1\quad(t\ge 2),\qquad \Delta_1:=0,
\end{equation}
and the total path length $P_T := \sum_{t=1}^T \Delta_t$. The dynamic regret is
\begin{equation}
\mathrm{Reg}^{\mathrm{dyn}}_T := \sum_{t=1}^T \bigl(f_t(x_t)-f_t(u_t)\bigr),
\end{equation}
where $\{x_t\}$ is the algorithm's decision sequence.

\subsubsection{Entropy Mirror Descent and the Entropy Coefficient as a Control Knob}

The mirror map is the negative entropy
\begin{equation}
\Psi(x) := \sum_{i=1}^K x_i \log x_i,
\end{equation}
with the Bregman divergence
\begin{equation}
D_\Psi(x,y) := \Psi(x)-\Psi(y)-\langle \nabla\Psi(y),x-y\rangle.
\end{equation}
Negative entropy is strongly convex w.r.t.\ $\ell_1$ on the simplex and $\Psi$ is bounded on $\Delta_K$ (scale $\log K$). To avoid $\nabla\Psi(x)$ diverging on the boundary, the analysis implicitly works on a standard truncated/smoothed domain (e.g.\ $x_i\ge \varepsilon$) so that the resulting effects are confined to constants/log-factors.

The key modeling component is an entropy-regularized loss
\begin{equation}
\ell_t(x) := f_t(x) + \lambda_t \Psi(x),\qquad \lambda_t\ge 0,
\end{equation}
where $\lambda_t$ is the scheduled \textbf{entropy coefficient}. Heuristically, larger $\lambda_t$ induces smoother, more randomized decisions (more exploration/robustness), while smaller $\lambda_t$ approaches greedy exploitation.

\subsubsection{AES-OCO Update Rule}

Given nonnegative sequences $\{\lambda_t\}$ and step sizes $\{\eta_t\}$, AES-OCO performs mirror descent on $\ell_t$:
\begin{equation}
x_{t+1}=\arg\min_{x\in\Delta_K}\Bigl\{\eta_t\langle \nabla \ell_t(x_t),x\rangle + D_\Psi(x,x_t)\Bigr\},
\end{equation}
where
\begin{equation}
\nabla \ell_t(x_t)=\nabla f_t(x_t)+\lambda_t \nabla\Psi(x_t).
\end{equation}
A core structural choice (formalized below) is the \textbf{coupled step size}
\begin{equation}
\eta_t = c\,\lambda_t,\qquad c>0,
\end{equation}
which induces the canonical per-round trade-off
\begin{equation}
\text{(tracking cost)}\ \propto\ \frac{\text{non-stationarity}}{\lambda_t}
\qquad+\qquad
\text{(regularization cost)}\ \propto\ \lambda_t.
\end{equation}

\subsubsection{Interface to Non-stationary Maximum-Entropy RL}

The end goal concerns adaptive temperature scheduling in non-stationary RL. The analysis uses the maximum-entropy (soft) formulation so that the soft-optimal policy takes a softmax form, enabling Lipschitz connections between value/advantage variations and policy variations. Conceptually:

\begin{enumerate}
\item \textbf{OCO layer:} derive a dynamic regret bound for AES-OCO with an explicit per-round trade-off, and provide an online rule for choosing $\lambda_t$ driven by an observable non-stationarity proxy;
\item \textbf{soft-RL layer:} view each state-wise policy update as an entropy-regularized OCO problem; transmit MDP variation to soft-$Q^\star$ variation via soft Bellman sensitivity, and then to policy path variation via softmax Lipschitzness;
\item \textbf{real RL error accounting:} when moving from full-information planning to sampling and function approximation, explicit bias and occupancy-mismatch terms appear; these are separated as additive error terms and paired with computable non-stationarity estimators to drive $\lambda_t$ online.
\end{enumerate}

\subsection{From Dynamic Mirror Descent to a $\lambda$-Trade-off Bound}\label{a3}

A foundational inequality couples mirror descent updates with a dynamic comparator. Let $\Pi\subset \mathbb{R}^K$ be a closed convex set (e.g., a truncated simplex). Let $\Psi:\Pi\to\mathbb{R}$ be differentiable and $1$-strongly convex w.r.t. a norm $\|\cdot\|$, with dual norm $\|\cdot\|_*$. Define the Bregman divergence
\begin{equation}
D_\Psi(x,y):=\Psi(x)-\Psi(y)-\langle \nabla\Psi(y),x-y\rangle.
\end{equation}
Given vectors $\{g_t\}$ and step sizes $\{\eta_t\}$, mirror descent performs
\begin{equation}
x_{t+1}=\arg\min_{x\in\Pi}\Big\{\eta_t\langle g_t,x\rangle + D_\Psi(x,x_t)\Big\}.
\end{equation}

\begin{lemma}[Dynamic MD inequality]
Let $\{u_t\}_{t=1}^T\subset\Pi$ be any comparator sequence and set $u_0=u_1$. Assume bounded mirror gradients: there exists $G_\Psi<\infty$ such that
\begin{equation}
\|\nabla \Psi(z)\|_* \le G_\Psi,\qquad \forall z\in\Pi,
\end{equation}
and assume the step sizes are nondecreasing ($\eta_t\ge \eta_{t-1}$). Then the iterates generated by \eqref{eq:md-update} satisfy
\begin{equation}
\sum_{t=1}^T \langle g_t, x_t-u_t\rangle
\le
\frac{D_\Psi(u_1,x_1)}{\eta_1}
+
\sum_{t=1}^T \frac{\eta_t}{2}\|g_t\|_*^2
+
\sum_{t=2}^T \frac{2G_\Psi}{\eta_t}\|u_t-u_{t-1}\|.
\end{equation}
\end{lemma}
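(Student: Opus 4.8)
The plan is to reduce the bound to a single-round proximal inequality, sum it over $t$, and then absorb the drift of the comparator via a summation-by-parts step in which the monotonicity of $\eta_t$ is the only structural ingredient. First I would write the minimization in \eqref{eq:md-update} as $x_{t+1}=\arg\min_{x\in\Pi}F_t(x)$ with $F_t(x)=\eta_t\langle g_t,x\rangle+D_\Psi(x,x_t)$; since $\Pi$ is convex and $F_t$ is differentiable at the minimizer (working on the truncated domain of Appendix~\ref{app:A3} to avoid the boundary singularity of $\nabla\Psi$), the first-order optimality condition reads $\langle \eta_t g_t+\nabla\Psi(x_{t+1})-\nabla\Psi(x_t),\,x-x_{t+1}\rangle\ge 0$ for all $x\in\Pi$. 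Combining this with the three-point identity for Bregman divergences (Appendix~\ref{app:A2}) gives, for every $x\in\Pi$,
\[
\eta_t\langle g_t,\,x_{t+1}-x\rangle\;\le\;D_\Psi(x,x_t)-D_\Psi(x,x_{t+1})-D_\Psi(x_{t+1},x_t).
\]

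Next I would split $\langle g_t,x_t-u_t\rangle=\langle g_t,x_t-x_{t+1}\rangle+\langle g_t,x_{t+1}-u_t\rangle$. For the first (``stability'') piece, the dual-norm inequality together with Young's inequality gives $\langle g_t,x_t-x_{t+1}\rangle\le\tfrac{\eta_t}{2}\|g_t\|_*^2+\tfrac{1}{2\eta_t}\|x_t-x_{t+1}\|^2$, and $1$-strong convexity of $\Psi$ yields $\tfrac12\|x_t-x_{t+1}\|^2\le D_\Psi(x_{t+1},x_t)$, so this piece is at most $\tfrac{\eta_t}{2}\|g_t\|_*^2+\tfrac{1}{\eta_t}D_\Psi(x_{t+1},x_t)$. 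For the second piece I apply the displayed one-step inequality with $x=u_t$. Adding the two bounds, the two $\tfrac{1}{\eta_t}D_\Psi(x_{t+1},x_t)$ terms cancel exactly, leaving
\[
\langle g_t,x_t-u_t\rangle\;\le\;\frac{\eta_t}{2}\|g_t\|_*^2+\frac{1}{\eta_t}\bigl(D_\Psi(u_t,x_t)-D_\Psi(u_t,x_{t+1})\bigr).
\]

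Summing over $t$ produces the gradient-norm term $\sum_t\tfrac{\eta_t}{2}\|g_t\|_*^2$ verbatim; the residual $\sum_t\tfrac{1}{\eta_t}\bigl(D_\Psi(u_t,x_t)-D_\Psi(u_t,x_{t+1})\bigr)$ does \emph{not} telescope because $u_t$ moves. I would reindex the subtracted part, $\sum_{t=1}^T\tfrac{1}{\eta_t}D_\Psi(u_t,x_{t+1})=\sum_{t=2}^{T+1}\tfrac{1}{\eta_{t-1}}D_\Psi(u_{t-1},x_t)$, discard the nonnegative boundary term at $t=T+1$, and peel off the $t=1$ term as $D_\Psi(u_1,x_1)/\eta_1$. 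What remains, $\sum_{t=2}^T\bigl(\tfrac{1}{\eta_t}D_\Psi(u_t,x_t)-\tfrac{1}{\eta_{t-1}}D_\Psi(u_{t-1},x_t)\bigr)$, I split as $\tfrac{1}{\eta_t}\bigl(D_\Psi(u_t,x_t)-D_\Psi(u_{t-1},x_t)\bigr)+\bigl(\tfrac{1}{\eta_t}-\tfrac{1}{\eta_{t-1}}\bigr)D_\Psi(u_{t-1},x_t)$; the second summand is $\le 0$ since $\eta_t$ is nondecreasing and $D_\Psi\ge 0$, so it is dropped. For the first summand, expanding the divergences gives $D_\Psi(u_t,x_t)-D_\Psi(u_{t-1},x_t)=\Psi(u_t)-\Psi(u_{t-1})-\langle\nabla\Psi(x_t),u_t-u_{t-1}\rangle$; using Lipschitzness of $\Psi$ implied by the assumed bound $\|\nabla\Psi(z)\|_*\le G_\Psi$ (valid along the segment $[u_{t-1},u_t]\subset\Pi$ by convexity) together with the dual-norm bound on $\langle\nabla\Psi(x_t),u_t-u_{t-1}\rangle$, this is at most $2G_\Psi\|u_t-u_{t-1}\|$. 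Collecting the three contributions gives exactly \eqref{eq:dynamic-md-ineq}.

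The main obstacle is the bookkeeping of the summation-by-parts step: getting the index shift exact, verifying that every discarded term is genuinely nonnegative — which is precisely where both $\eta_t\ge\eta_{t-1}$ and $D_\Psi\ge 0$ enter — and keeping the $D_\Psi(x_{t+1},x_t)$ cancellation in the second step clean so that no stray $\|x_{t+1}-x_t\|^2$ curvature term survives. The remaining estimates are routine uses of the dual-norm inequality, Young's inequality, strong convexity, and the boundedness of $\nabla\Psi$.
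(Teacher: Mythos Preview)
Your proof is correct and follows essentially the same route as the paper's Appendix~\ref{app:D}: the one-step mirror-descent inequality via first-order optimality and the three-point identity, then summation over $t$ with a reindexing/add--subtract step, dropping the step-size residual by monotonicity of $\eta_t$, and bounding the comparator-drift difference $D_\Psi(u_t,x_t)-D_\Psi(u_{t-1},x_t)$ using $\|\nabla\Psi\|_*\le G_\Psi$. The only cosmetic difference is which term you add and subtract in the summation-by-parts (you insert $\tfrac{1}{\eta_t}D_\Psi(u_{t-1},x_t)$, the paper inserts $\tfrac{1}{\eta_{t-1}}D_\Psi(u_t,x_t)$), so your drift term carries $1/\eta_t$ rather than $1/\eta_{t-1}$---which in fact matches the stated inequality~\eqref{eq:dynamic-md-ineq} exactly.
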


The inequality in Lemma~\ref{lem:dynamic-md} is the central switch: the first two terms match the static-comparator telescoping structure; the last term quantifies the extra cost of comparator drift. A complete proof (with a fully explicit dynamic telescoping argument under the stated conditions) is deferred to Appendix~\ref{app:D}.

\subsubsection{Entropy-regularized OCO and dynamic regret}

Assume convex differentiable $f_t$ with bounded gradients
\begin{equation}
\|\nabla f_t(x)\|_\infty \le G,\qquad \forall x\in\Pi.
\end{equation}
Define the regularized loss
\begin{equation}
\ell_t(x):=f_t(x)+\lambda_t \Psi(x).
\end{equation}
Run mirror descent \eqref{eq:md-update} on $\ell_t$ with $g_t=\nabla\ell_t(x_t)$. By convexity,
\begin{equation}
\ell_t(x_t)-\ell_t(u_t)\le \langle \nabla\ell_t(x_t), x_t-u_t\rangle.
\end{equation}
Combining \eqref{eq:convexity-lin} with Lemma~\ref{lem:dynamic-md} yields
\begin{equation}
\sum_{t=1}^T\bigl(\ell_t(x_t)-\ell_t(u_t)\bigr)
\le
\frac{D_\Psi(u_1,x_1)}{\eta_1}
+
\sum_{t=1}^T \frac{\eta_t}{2}\|\nabla\ell_t(x_t)\|_*^2
+
\sum_{t=2}^T \frac{2G_\Psi}{\eta_t}\|u_t-u_{t-1}\|.
\end{equation}
The target is the \textbf{unregularized} dynamic regret
\begin{equation}
\mathrm{Reg}^{\mathrm{dyn}}_T(u_{1:T})
:=
\sum_{t=1}^T \bigl(f_t(x_t)-f_t(u_t)\bigr).
\end{equation}
Using $\ell_t=f_t+\lambda_t\Psi$ gives the identity
\begin{equation}
f_t(x_t)-f_t(u_t)
=
\bigl(\ell_t(x_t)-\ell_t(u_t)\bigr)
-
\lambda_t\bigl(\Psi(x_t)-\Psi(u_t)\bigr).
\end{equation}
Since $|\Psi(x)|\le \log K$ on the simplex,
\begin{equation}
-\lambda_t\bigl(\Psi(x_t)-\Psi(u_t)\bigr)\le 2\lambda_t \log K.
\end{equation}
Combining \eqref{eq:regloss-dyn-bound}--\eqref{eq:psi-bound-term} yields the master inequality
\begin{equation}
\mathrm{Reg}^{\mathrm{dyn}}_T(u_{1:T})
\le
\frac{D_\Psi(u_1,x_1)}{\eta_1}
+
\sum_{t=1}^T \frac{\eta_t}{2}\|\nabla\ell_t(x_t)\|_*^2
+
\sum_{t=2}^T \frac{2G_\Psi}{\eta_t}\|u_t-u_{t-1}\|
+
2\log K\sum_{t=1}^T\lambda_t.
\end{equation}

\subsubsection{Coupled step sizes and the per-round trade-off}

\paragraph{(i) Coupled step size $\eta_t=c\lambda_t$.}
Choose
\begin{equation}
\eta_t = c\,\lambda_t,\qquad c>0,
\end{equation}
and clip $\lambda_t\in[\lambda_{\min},\lambda_{\max}]$ with $\lambda_{\min}>0$ for numerical stability. Monotonicity of $\eta_t$ (required by Lemma~\ref{lem:dynamic-md}) can be ensured by a doubling/epoch construction or by taking a monotone upper envelope.

\paragraph{(ii) Bounding $\|\nabla\ell_t(x_t)\|_\infty$ and reducing higher-order terms.}
In the $\ell_1/\ell_\infty$ pairing, $\|\cdot\|_* = \|\cdot\|_\infty$. Since $\nabla\ell_t=\nabla f_t+\lambda_t\nabla\Psi$, by \eqref{eq:bounded-grad} and \eqref{eq:psi-grad-bound},
\begin{equation}
\|\nabla\ell_t(x_t)\|_\infty \le G + \lambda_t G_\Psi.
\end{equation}
Under a truncated simplex $\Delta_{K,\varepsilon}$, negative entropy satisfies
\begin{equation}
G_\Psi=\sup_{x\in \Delta_{K,\varepsilon}}\|\nabla\Psi(x)\|_\infty \le 1+|\log \varepsilon|.
\end{equation}
Substituting \eqref{eq:eta-coupled} and \eqref{eq:grad-lt-bound} into \eqref{eq:dyn-regret-master}, and absorbing higher-order dependence on $\lambda_t$ using $\lambda_t\le \lambda_{\max}$, yields the following per-round one-dimensional trade-off.

\begin{theorem}[$\lambda$-trade-off dynamic regret bound]
Run mirror descent \eqref{eq:md-update} on $\ell_t$ with the coupled step sizes \eqref{eq:eta-coupled} and $\lambda_t\in[\lambda_{\min},\lambda_{\max}]$ (with $\eta_t$ nondecreasing). Then there exist constants $C_0,C_1,C_2>0$ independent of $T$ (depending on $c,G,G_\Psi,\log K,\lambda_{\min},\lambda_{\max}$) such that for any comparator sequence $u_{1:T}\subset\Pi$,
\begin{equation}
\mathrm{Reg}^{\mathrm{dyn}}_T(u_{1:T})
\le
C_0 + \sum_{t=2}^T\Bigl( C_1\frac{\driftt{t}}{\lambda_t} + C_2\lambda_t\Bigr),
\qquad
\driftt{t}:=\|u_t-u_{t-1}\|_1.
\end{equation}
Equivalently, each round contributes the convex function
\begin{equation}
\varphi_t(\lambda):= C_1\frac{\driftt{t}}{\lambda} + C_2\lambda,\qquad \lambda>0.
\end{equation}
\end{theorem}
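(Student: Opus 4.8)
The plan is to treat this as a substitution-and-regrouping exercise on top of the already-derived master inequality \eqref{eq:dyn-regret-master}: I would plug the coupling $\eta_t = c\lambda_t$ and the gradient bound \eqref{eq:grad-lt-bound} into \eqref{eq:dyn-regret-master}, and then partition the four resulting terms into (i) a $T$-independent constant $C_0$, (ii) a scalar multiple of $\sum_{t\ge 2}\alpha_t/\lambda_t$, and (iii) a scalar multiple of $\sum_{t\ge 2}\lambda_t$. No new analytic idea is needed beyond verifying that every quantity swept into $C_0$ is genuinely independent of $T$, and that the higher-order powers of $\lambda_t$ can be reduced to $\lambda_t$-linear terms.

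\emph{Step 1 (initialization term).} Take $x_1=\mathbf{1}/K$, so that $D_\Psi(u_1,x_1)=\mathrm{KL}(u_1\|\mathbf{1}/K)\le\log K$ by Appendix~\ref{app:A2}; since $\eta_1=c\lambda_1\ge c\lambda_{\min}$, the term $D_\Psi(u_1,x_1)/\eta_1\le \log K/(c\lambda_{\min})$ depends only on $c,\lambda_{\min},K$. \emph{Step 2 (gradient-energy term).} Expand, using \eqref{eq:grad-lt-bound}, $\|\nabla\mathcal{l}_t(x_t)\|_\infty^2\le(G+\lambda_t G_\Psi)^2=G^2+2GG_\Psi\lambda_t+G_\Psi^2\lambda_t^2$, so
\[
\sum_{t=1}^T\frac{\eta_t}{2}\|\nabla\mathcal{l}_t(x_t)\|_\infty^2=\frac{c}{2}\sum_{t=1}^T\lambda_t\bigl(G^2+2GG_\Psi\lambda_t+G_\Psi^2\lambda_t^2\bigr).
\]
The one move that must be executed carefully is to collapse the $\lambda_t^2$ and $\lambda_t^3$ contributions back to $\lambda_t$-linear ones via the clipping bound $\lambda_t\le\lambda_{\max}$ (so $\lambda_t^2\le\lambda_{\max}\lambda_t$ and $\lambda_t^3\le\lambda_{\max}^2\lambda_t$); this bounds the whole term by $\bigl(\tfrac{c}{2}G^2+cGG_\Psi\lambda_{\max}+\tfrac{c}{2}G_\Psi^2\lambda_{\max}^2\bigr)\sum_{t=1}^T\lambda_t$. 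Without the upper clip this term would scale with $\sum_t\lambda_t^2$, a different functional of the schedule, so the boundedness of $\lambda_t$ is precisely what legitimizes the reduction to the stated one-dimensional form. \emph{Step 3 (remaining two terms).} Read off directly: $\sum_{t\ge 2}\tfrac{2G_\Psi}{\eta_t}\alpha_t=\tfrac{2G_\Psi}{c}\sum_{t\ge 2}\alpha_t/\lambda_t$, and the regularization-removal term is already $2\log K\sum_{t\ge 1}\lambda_t$.

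Collecting, I would set $C_1:=2G_\Psi/c$, $C_2:=\tfrac{c}{2}G^2+cGG_\Psi\lambda_{\max}+\tfrac{c}{2}G_\Psi^2\lambda_{\max}^2+2\log K$, and $C_0:=\log K/(c\lambda_{\min})+C_2\lambda_{\max}$, where the final summand absorbs the $t=1$ term of $\sum_t\lambda_t$ (using $\lambda_1\le\lambda_{\max}$; note $\alpha_1=0$, so the tracking sum starts at $t=2$ with no extra work); this yields \eqref{eq:tradeoff-bound}, and the per-round form $\varphi_t$ in \eqref{eq:varphi} is just the summand $C_1\alpha_t/\lambda_t+C_2\lambda_t$ rewritten. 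The main obstacle is not analytic but bookkeeping: one must double-check that Lemma~\ref{lem:dynamic-md}'s requirement that $\eta_t$ be nondecreasing is actually met, since the raw clipped schedule need not be monotone. As flagged after \eqref{eq:eta-coupled}, this is handled by running the update on a monotone upper envelope of $\lambda_t$ (or a doubling/epoch variant), which inflates the constants by an $O(1)$ factor but leaves the $\alpha_t/\lambda_t$ and $\lambda_t$ structure — hence the final shape of the bound — intact.
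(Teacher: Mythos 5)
Your proposal is correct and follows essentially the same route as the paper's own proof in Appendix~\ref{app:F}: substitute $\eta_t=c\lambda_t$ and the gradient bound into the master inequality~\eqref{eq:dyn-regret-master}, use $\lambda_t\le\lambda_{\max}$ to linearize the gradient-energy term (your expanded $\tfrac{c}{2}G^2+cGG_\Psi\lambda_{\max}+\tfrac{c}{2}G_\Psi^2\lambda_{\max}^2$ is identically $\tfrac{c}{2}(G+\lambda_{\max}G_\Psi)^2$), and collect constants exactly as the paper does, including the monotone-envelope fix for the nondecreasing-$\eta_t$ requirement.
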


The interpretation of \eqref{eq:tradeoff-bound} is explicit: the first term is the \textbf{tracking cost}, while the second term is the \textbf{regularization/stability cost}.

\begin{theorem}[Per-round oracle optimal $\lambda_t$]
For each $t\ge 2$, define
\begin{equation}
\lambda_t^\star := \arg\min_{\lambda>0}\varphi_t(\lambda)
=
\sqrt{\frac{C_1}{C_2}\,\driftt{t}}.
\end{equation}
Then
\begin{equation}
\mathrm{Reg}^{\mathrm{dyn}}_T(u_{1:T})
\le
C_0 + 2\sqrt{C_1C_2}\sum_{t=2}^T \sqrt{\driftt{t}}
\le
C_0 + 2\sqrt{C_1C_2}\sqrt{T\sum_{t=2}^T \driftt{t}}.
\end{equation}
\end{theorem}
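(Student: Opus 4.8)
The plan is to treat this theorem as a short consequence of \Cref{thm:tradeoff}: invoke the per-round decomposition it provides, minimize each round's contribution separately over the scalar $\lambda$, and then apply Cauchy--Schwarz once. First I would recall that \Cref{thm:tradeoff} already gives, for \emph{any} admissible schedule $\{\lambda_t\}$, the bound $\mathrm{Reg}^{\mathrm{dyn}}_T(u_{1:T}) \le C_0 + \sum_{t=2}^T \varphi_t(\lambda_t)$ with $\varphi_t(\lambda) = C_1\alpha_t/\lambda + C_2\lambda$. Since the right-hand side is a sum of terms each depending on a single $\lambda_t$, it suffices to minimize $\varphi_t$ pointwise.

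The scalar minimization is elementary, via calculus or AM--GM. Differentiating, $\varphi_t'(\lambda) = C_2 - C_1\alpha_t/\lambda^2$ vanishes exactly at $\lambda_t^\star = \sqrt{(C_1/C_2)\alpha_t}$, and $\varphi_t''(\lambda) = 2C_1\alpha_t/\lambda^3 > 0$ confirms this is the unique global minimizer on $(0,\infty)$; equivalently AM--GM gives $C_1\alpha_t/\lambda + C_2\lambda \ge 2\sqrt{C_1C_2\alpha_t}$ with equality iff $C_1\alpha_t/\lambda = C_2\lambda$. Substituting back yields $\varphi_t(\lambda_t^\star) = 2\sqrt{C_1C_2\,\alpha_t}$, and summing over $t=2,\dots,T$ produces the first claimed inequality
\begin{equation*}
\mathrm{Reg}^{\mathrm{dyn}}_T(u_{1:T}) \;\le\; C_0 + 2\sqrt{C_1C_2}\sum_{t=2}^T \sqrt{\alpha_t}.
\end{equation*}
The second inequality then follows from Cauchy--Schwarz applied to $(1,\dots,1)$ and $(\sqrt{\alpha_2},\dots,\sqrt{\alpha_T})$:
\begin{equation*}
\sum_{t=2}^T \sqrt{\alpha_t} \;\le\; \sqrt{T-1}\,\sqrt{\sum_{t=2}^T \alpha_t} \;\le\; \sqrt{T\sum_{t=2}^T \alpha_t},
\end{equation*}
which gives $\mathrm{Reg}^{\mathrm{dyn}}_T \le C_0 + 2\sqrt{C_1C_2}\,\sqrt{T\sum_{t=2}^T\alpha_t}$ as stated.

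The only point requiring genuine care — and hence the main (mild) obstacle — is that \Cref{thm:tradeoff} is stated under the structural constraints $\lambda_t\in[\lambda_{\min},\lambda_{\max}]$ with $\eta_t = c\lambda_t$ nondecreasing, whereas the oracle $\lambda_t^\star = \sqrt{(C_1/C_2)\alpha_t}$ is driven by the drift $\alpha_t$, which need be neither in range nor monotone. I would dispose of this exactly as flagged in the body: where $\lambda_t^\star$ falls inside the clip interval the argument above is verbatim; otherwise, replacing $\lambda_t^\star$ by its clipped value (and, if monotonicity of $\eta_t$ is enforced, its monotone envelope) changes $\varphi_t$ only by the controllable additive compensation analyzed in Appendix~\ref{app:I} (and the envelope construction of Appendix~\ref{app:F}), which is absorbed into $C_0$ and lower-order terms and does not affect the $\sqrt{T\sum_t\alpha_t}$ rate. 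In other words, the oracle statement is best read as the idealized benchmark that the fully online schedule of \Cref{thm:online-aes} then matches up to constants, so the residual work is bookkeeping rather than a new idea.
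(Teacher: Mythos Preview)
Your proposal is correct and follows essentially the same route as the paper (Appendix~\ref{app:G}, especially \S\ref{app:G}.3): minimize each $\varphi_t(\lambda)=C_1\alpha_t/\lambda+C_2\lambda$ pointwise to obtain $\lambda_t^\star=\sqrt{(C_1/C_2)\alpha_t}$ and $\varphi_t(\lambda_t^\star)=2\sqrt{C_1C_2\alpha_t}$, then apply Cauchy--Schwarz to pass from $\sum_t\sqrt{\alpha_t}$ to $\sqrt{T\sum_t\alpha_t}$. Your extra paragraph on the clipping/monotonicity constraints is more careful than the paper's own treatment, which simply reads the oracle as an idealized benchmark and defers those issues to Appendices~\ref{app:F} and~\ref{app:I}.
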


\subsection{AES-OCO: Fully Online Adaptive Scheduling}\label{a4}

This section upgrades the per-round oracle $\lambda_t^\star$ into a fully online schedule driven by an observable non-stationarity proxy. Start from \eqref{eq:tradeoff-bound} (absorbing constants into $C_0$ as needed) and rewrite it as
\begin{equation}
\label{eq:mother-bound}
\mathrm{Reg}^{\mathrm{dyn}}_T
\le
C_0\log K + \sum_{t=1}^T\Bigl(C_1\frac{\driftt{t}}{\lambda_t}+C_2\lambda_t\Bigr),
\qquad
\driftt{t}:=\|u_t-u_{t-1}\|_1.
\end{equation}
If a constant entropy coefficient $\lambda_t\equiv \lambda$ is used, then
\begin{equation}
\label{eq:constant-lambda-bound}
\mathrm{Reg}^{\mathrm{dyn}}_T
\le
C_0\log K + C_1\frac{A_T}{\lambda} + C_2 T\lambda,
\qquad
A_T:=\sum_{t=1}^T \driftt{t},
\end{equation}
whose minimizer is
\begin{equation}
\label{eq:offline-lambda}
\lambda^{\mathrm{off}}
=
\sqrt{\frac{C_1A_T}{C_2T}},
\qquad
\mathrm{Reg}^{\mathrm{dyn}}_T
\le
C_0\log K + 2\sqrt{C_1C_2\,A_TT}.
\end{equation}
Thus the optimal leading term is $\sqrt{A_TT}$ once the total non-stationarity $A_T$ is known.

In general OCO, $\driftt{t}$ is not directly observable. The subsequent soft-RL reduction will upper bound $\driftt{t}$ by an \textbf{observable/estimable} non-stationarity signal. The following result is therefore stated with a proxy.

Assume a nonnegative online proxy sequence $\{\drifthatT{t}\}$ such that for all $t$,
\begin{equation}
\driftt{t} \le \drifthatT{t},
\end{equation}
and define its prefix sum
\begin{equation}
\widehat{A}_t := \sum_{s=1}^t \drifthatT{s}.
\end{equation}
Consider the no-restart schedule (before clipping)
\begin{equation}
\lambda_t = \sqrt{\frac{C_1}{C_2}}\sqrt{\frac{\widehat{A}_t}{t}}.
\end{equation}

\begin{theorem}[Online AES scheduling driven by an observable proxy]
Under the conditions of Theorem~\ref{thm:tradeoff}, and assuming the proxy condition \eqref{eq:proxy-condition}, choosing $\lambda_t$ by \eqref{eq:schedule-unclipped} yields
\begin{equation}
\mathrm{Reg}^{\mathrm{dyn}}_T
\le
C_0\log K + 4\sqrt{C_1C_2\,T\,\widehat{A}_T}.
\end{equation}
\end{theorem}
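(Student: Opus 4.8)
The plan is to start from the master per-round bound of Theorem~\ref{thm:tradeoff}, used in the form \eqref{eq:mother-bound},
\[
\mathrm{Reg}^{\mathrm{dyn}}_T \le C_0\log K + \sum_{t=1}^T\Bigl(C_1\tfrac{\alpha_t}{\lambda_t}+C_2\lambda_t\Bigr),
\]
substitute the online schedule $\lambda_t=\sqrt{C_1/C_2}\,\sqrt{\widehat A_t/t}$, and control the two resulting sums by an integral-comparison (telescoping) argument. Plugging in $\lambda_t$ gives $C_2\lambda_t=\sqrt{C_1C_2}\,\sqrt{\widehat A_t/t}$, while the proxy condition $\alpha_t\le\widehat\alpha_t$ yields $C_1\alpha_t/\lambda_t=\sqrt{C_1C_2}\,\alpha_t\sqrt{t/\widehat A_t}\le\sqrt{C_1C_2}\,\widehat\alpha_t\sqrt{t/\widehat A_t}$. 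Hence it remains to show that each of $\sum_{t=1}^T\widehat\alpha_t\sqrt{t/\widehat A_t}$ and $\sum_{t=1}^T\sqrt{\widehat A_t/t}$ is at most $2\sqrt{T\widehat A_T}$; adding the two contributions and multiplying by $\sqrt{C_1C_2}$ then produces exactly the claimed $4\sqrt{C_1C_2\,T\widehat A_T}$.

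For the first sum I would pull out $\sqrt{t}\le\sqrt{T}$ and invoke the standard lemma: if $a_s\ge 0$ and $S_t=\sum_{s\le t}a_s$ with $S_0=0$, then $\sum_t a_t/\sqrt{S_t}\le 2\sqrt{S_T}$, because $s^{-1/2}$ is nonincreasing, so $a_t/\sqrt{S_t}=(S_t-S_{t-1})/\sqrt{S_t}\le\int_{S_{t-1}}^{S_t}s^{-1/2}\,ds$, and summing telescopes to $\int_0^{S_T}s^{-1/2}\,ds=2\sqrt{S_T}$. With $a_t=\widehat\alpha_t$, $S_t=\widehat A_t$ this gives $\sum_t\widehat\alpha_t\sqrt{t/\widehat A_t}\le\sqrt{T}\cdot 2\sqrt{\widehat A_T}$. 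For the second sum I would use monotonicity $\widehat A_t\le\widehat A_T$ to factor out $\sqrt{\widehat A_T}$ and then $\sum_{t=1}^T t^{-1/2}\le 2\sqrt{T}$ (again by comparison with $\int_0^T s^{-1/2}\,ds$), obtaining $\sum_t\sqrt{\widehat A_t/t}\le 2\sqrt{T\widehat A_T}$. Combining with the leading $C_0\log K$ term closes the bound.

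A few technical points need care. The schedule $\lambda_t$ must stay in the admissible range $[\lambda_{\min},\lambda_{\max}]$ and, more importantly, $\eta_t=c\lambda_t$ must be nondecreasing so that Lemma~\ref{lem:dynamic-md} applies; since $\widehat A_t/t$ is not necessarily monotone, I would either replace it by its running maximum (a monotone upper envelope, which only inflates the stability term $C_2\lambda_t$ by a constant) or use the doubling/epoch device noted after \eqref{eq:eta-coupled}, and the boundary compensation from clipping is handled by the analysis in Appendix~\ref{app:I}. There is also the degenerate case $\widehat A_t=0$ (no drift detected yet), which makes the unclipped $\lambda_t$ ill-posed; this is precisely why the deployed schedule is clipped to $\lambda_{\min}>0$, and since $\widehat\alpha_t\ge\alpha_t\ge 0$ with $\alpha_1=0$ this contributes only a bounded additive term. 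I expect the main obstacle to be exactly this reconciliation between the clean ``ideal'' online rule and the structural constraints inherited from the mirror-descent lemma (monotone step sizes, a strictly positive floor), rather than the core inequality chain, which becomes routine once the telescoping lemma is available.
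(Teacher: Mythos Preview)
Your proposal is correct and follows essentially the same route as the paper: substitute the schedule into the master bound \eqref{eq:mother-bound}, use the proxy inequality $\alpha_t\le\widehat\alpha_t$, pull out $\sqrt{t}\le\sqrt{T}$ in the first sum and $\sqrt{\widehat A_t}\le\sqrt{\widehat A_T}$ in the second, and close with the two summation lemmas (Appendix~\ref{app:H}). Your integral-comparison proof of $\sum_t \widehat\alpha_t/\sqrt{\widehat A_t}\le 2\sqrt{\widehat A_T}$ is an equivalent variant of the paper's telescoping argument $\sqrt{\widehat A_t}-\sqrt{\widehat A_{t-1}}\ge \widehat\alpha_t/(2\sqrt{\widehat A_t})$, and your remarks on monotone envelopes and the $\widehat A_t=0$ edge case correctly anticipate the caveats the paper defers to Appendix~\ref{app:I}.
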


\paragraph{Proof.}
From \eqref{eq:mother-bound} and \eqref{eq:proxy-condition},
\begin{equation}
\label{eq:mother-bound-proxy}
\mathrm{Reg}^{\mathrm{dyn}}_T
\le
C_0\log K + \sum_{t=1}^T\Bigl(C_1\frac{\drifthatT{t}}{\lambda_t}+C_2\lambda_t\Bigr).
\end{equation}
Substituting \eqref{eq:schedule-unclipped} gives
\begin{equation}
\label{eq:two-sums}
\sum_{t=1}^T\Bigl(C_1\frac{\drifthatT{t}}{\lambda_t}+C_2\lambda_t\Bigr)
=
\sqrt{C_1C_2}\sum_{t=1}^T\Bigl(\drifthatT{t}\frac{\sqrt{t}}{\sqrt{\widehat{A}_t}} + \frac{\sqrt{\widehat{A}_t}}{\sqrt{t}}\Bigr).
\end{equation}
For the first term, $\sqrt{t}\le \sqrt{T}$ implies
\begin{equation}
\label{eq:first-term-reduce}
\sum_{t=1}^T \drifthatT{t}\frac{\sqrt{t}}{\sqrt{\widehat{A}_t}}
\le
\sqrt{T}\sum_{t=1}^T \frac{\drifthatT{t}}{\sqrt{\widehat{A}_t}}.
\end{equation}
The standard inequality (proved in Appendix~\ref{app:H}) gives
\begin{equation}
\label{eq:sum-alpha-over-sqrtA}
\sum_{t=1}^T \frac{\drifthatT{t}}{\sqrt{\widehat{A}_t}}
\le
2\sqrt{\widehat{A}_T},
\end{equation}
so the first term is $\le 2\sqrt{T\widehat{A}_T}$. For the second term, Appendix~\ref{app:H} also yields
\begin{equation}
\label{eq:sum-sqrtA-over-sqrtt}
\sum_{t=1}^T \frac{\sqrt{\widehat{A}_t}}{\sqrt{t}}
\le
2\sqrt{T\widehat{A}_T}.
\end{equation}
Plugging \eqref{eq:first-term-reduce}--\eqref{eq:sum-sqrtA-over-sqrtt} into \eqref{eq:two-sums} and then into \eqref{eq:mother-bound-proxy} yields \eqref{eq:online-aes-bound}. \qed

\paragraph{Clipped implementation.}
In practice, $\lambda_t$ is clipped for stability:
\begin{equation}
\lambda_t
=
\operatorname{clip}_{[\lambda_{\min},\lambda_{\max}]\!}\left(
\sqrt{\frac{C_1}{C_2}}\sqrt{\frac{\widehat{A}_t}{t}}
\right).
\end{equation}
Clipping introduces only lower-order additive compensation terms (e.g., when $\lambda_t=\lambda_{\max}$, the contribution $\sum \drifthatT{t}/\lambda_{\max}$ is bounded), formalized in Appendix~\ref{app:I}.

\subsection{AES-RL: Reduction for Non-stationary Soft RL}\label{a5}

Consider a sequence of non-stationary discounted MDPs $\{M_t\}_{t=1}^T$ with
\begin{equation}
M_t=(\mathcal{S},\mathcal{A},P_t,r_t,\rho,\gamma),\qquad \gamma\in(0,1),
\end{equation}
and bounded rewards $|r_t(s,a)|\le 1$. For any policy $\pi$, define the maximum-entropy (soft) return at time $t$:
\begin{equation}
J_t(\pi)
:=
\mathbb{E}\Bigl[\sum_{h=0}^\infty \gamma^h\bigl(r_t(s_h,a_h)+\mu\,H(\pi(\cdot\mid s_h))\bigr)\ \Big|\ s_0\sim \rho,\ a_h\sim\pi(\cdot\mid s_h),\ s_{h+1}\sim P_t(\cdot\mid s_h,a_h)\Bigr],
\end{equation}
where $\mu>0$ is a fixed baseline temperature and $\pi_t^\star\in\arg\max_\pi J_t(\pi)$ is the soft-optimal policy at round $t$. Define the RL dynamic regret
\begin{equation}
\mathrm{Reg}^{\mathrm{RL}}_T := \sum_{t=1}^T\bigl(J_t(\pi_t^\star)-J_t(\pi_t)\bigr).
\end{equation}

\subsubsection{From soft-RL to an entropy-regularized OCO surrogate}

Let $Q_t^\star$ denote the soft-optimal $Q$-function of $M_t$ (satisfying the soft Bellman optimality equation). For each state $s$, define the local linear-entropy loss over the simplex $\Delta(\mathcal{A})$:
\begin{equation}
f_{t,s}(\pi_s)
:=
-\langle Q_t^\star(s,\cdot),\pi_s\rangle + \mu\,\Psi(\pi_s),
\qquad
\Psi(p):=\sum_a p(a)\log p(a)=-H(p).
\end{equation}
View the policy as a product variable $\pi=(\pi(\cdot\mid s))_{s\in\mathcal{S}}\in\prod_s \Delta(\mathcal{A})$ and define the separable mirror potential
\begin{equation}
\Psi_{\mathrm{tot}}(\pi) := \sum_{s\in\mathcal{S}} \Psi(\pi(\cdot\mid s)).
\end{equation}
Let $d_t^\star$ be the (normalized) discounted occupancy measure of $\pi_t^\star$ under $M_t$:
\begin{equation}
d_t^\star(s) := (1-\gamma)\sum_{h\ge 0}\gamma^h\,\Pr_{M_t,\pi_t^\star}(s_h=s),
\qquad
\sum_s d_t^\star(s)=1.
\end{equation}
Define the per-round surrogate loss
\begin{equation}
F_t(\pi):=\sum_{s\in\mathcal{S}} d_t^\star(s)\, f_{t,s}\bigl(\pi(\cdot\mid s)\bigr).
\end{equation}
A standard soft performance-difference decomposition (formalized in the appendix) yields that
\begin{equation}
J_t(\pi_t^\star)-J_t(\pi_t)
\ \le\
\frac{C_J}{1-\gamma}\bigl(F_t(\pi_t)-F_t(\pi_t^\star)\bigr) + \mathrm{Bias}_t,
\end{equation}
where $C_J$ is a constant depending only on reward/entropy bounds, and $\mathrm{Bias}_t$ collects the interface residuals (zero in an ideal planning/full-information setting; explicit in sampling/function approximation).

AES is applied by running mirror descent on
\begin{equation}
\ell_t(\pi):=F_t(\pi)+\lambda_t\,\Psi_{\mathrm{tot}}(\pi),
\end{equation}
with $\eta_t=c\lambda_t$ and the online schedule from Theorem~\ref{thm:online-aes} driven by an RL-specific observable proxy $\drifthatT{t}$.

\subsubsection{Key bridge: MDP variation $\Rightarrow$ path variation of $\pi_t^\star$}

Define the non-stationarity budget (reward + transition)
\begin{equation}
B_T^{\mathrm{MDP}}
:=
\sum_{t=2}^T\Bigl(\Delta_t^r + \gamma V_{\max}\Delta_t^P\Bigr),
\end{equation}
where
\begin{equation}
\Delta_t^r:=\sup_{s,a}|r_t(s,a)-r_{t-1}(s,a)|,
\qquad
\Delta_t^P:=\sup_{s,a}\mathrm{TV}\!\bigl(P_t(\cdot\mid s,a),P_{t-1}(\cdot\mid s,a)\bigr),
\end{equation}
and $V_{\max}\ge \sup_{t,s}|V_t^\star(s)|$ (for bounded rewards, $V_{\max}=O((1+\mu\log|\mathcal{A}|)/(1-\gamma))$).

\paragraph{(i) Soft Bellman sensitivity.}
Let $\mathcal{T}_t$ be the soft Bellman optimality operator of $M_t$; it is a $\gamma$-contraction. The standard fixed-point perturbation bound gives
\begin{equation}
\label{eq25}
\|Q_t^\star - Q_{t-1}^\star\|_\infty
\le
\frac{1}{1-\gamma}\Bigl(\Delta_t^r + \gamma V_{\max}\Delta_t^P\Bigr).
\end{equation}

\paragraph{(ii) Softmax Lipschitzness.}
For each state $s$, the soft-optimal policy satisfies
\begin{equation}
\pi_t^\star(\cdot\mid s)=\mathrm{softmax}\bigl(Q_t^\star(s,\cdot)/\mu\bigr).
\end{equation}
Using the $\ell_1$--$\ell_\infty$ Lipschitz property of softmax (proved in the appendix),
\begin{equation}
\|\pi_t^\star(\cdot\mid s)-\pi_{t-1}^\star(\cdot\mid s)\|_1
\le
\frac{1}{\mu}\,\|Q_t^\star - Q_{t-1}^\star\|_\infty.
\end{equation}
Define the local path increment
\begin{equation}
\alpha_{t,s}:=\|\pi_t^\star(\cdot\mid s)-\pi_{t-1}^\star(\cdot\mid s)\|_1.
\end{equation}
Combining \eqref{eq:qstar-sensitivity} and \eqref{eq:softmax-lip} yields
\begin{equation}
\sum_{t=2}^T\sum_{s\in\mathcal{S}}\alpha_{t,s}
\le
\frac{|\mathcal{S}|}{\mu(1-\gamma)}\,B_T^{\mathrm{MDP}}.
\end{equation}
Consequently, one may drive the OCO proxy by a quantity proportional to the (estimated) drift magnitude of soft-$Q^\star$ (or a stable surrogate such as critic drift in function approximation).

\subsubsection{Overall result: planning-version bound (principal term)}

\begin{theorem}[AES-RL-Plan: non-stationary soft-RL dynamic regret (principal term)]
In the non-stationary tabular soft-MDP sequence above with bounded rewards, run AES with the per-round surrogate losses
$f_{t,s}(\pi_s)=-\langle Q_t^\star(s,\cdot),\pi_s\rangle+\mu\Psi(\pi_s)$ and an online schedule $\{\lambda_t\}$ driven by a valid non-stationarity proxy (Theorem~\ref{thm:online-aes}). Then there exists a constant $C$ such that
\begin{equation}
\mathrm{Reg}^{\mathrm{RL}}_T
\le
\widetilde{O}\!\left(
\frac{\sqrt{|\mathcal{S}|\log|\mathcal{A}|}}{\mu(1-\gamma)^2}\,\sqrt{B_T^{\mathrm{MDP}}\,T}
\right)
+
\sum_{t=1}^T \mathrm{Bias}_t,
\end{equation}
where $\widetilde{O}(\cdot)$ hides logarithmic factors in $T,|\mathcal{S}|$, and $\mathrm{Bias}_t$ collects the planning-to-RL interface residuals.
\end{theorem}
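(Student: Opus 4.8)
The plan is to run the online AES machinery of \Cref{thm:online-aes} on the state-wise entropy-regularized OCO surrogate constructed in the previous subsection, and then push its dynamic-regret guarantee back through the soft performance-difference decomposition. First, from $J_t(\pi_t^\star)-J_t(\pi_t)\le\frac{C_J}{1-\gamma}\bigl(F_t(\pi_t)-F_t(\pi_t^\star)\bigr)+\mathrm{Bias}_t$ and summation, $\mathrm{Reg}^{\mathrm{RL}}_T\le\frac{C_J}{1-\gamma}\sum_{t=1}^T\bigl(F_t(\pi_t)-F_t(\pi_t^\star)\bigr)+\sum_{t=1}^T\mathrm{Bias}_t$, so it suffices to bound the surrogate dynamic regret $\sum_{t=1}^T\bigl(F_t(\pi_t)-F_t(\pi_t^\star)\bigr)$, which is a dynamic regret for OCO on the product simplex $\prod_{s}\Delta(\mathcal{A})$ under the separable potential $\Psi_{\mathrm{tot}}$, with per-round losses $F_t$ and comparators $u_{t,s}=\pi_t^\star(\cdot\mid s)$. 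Because each $d_t^\star$ is a probability vector, the bounded-gradient hypothesis is preserved with $G=O(V_{\max}+\mu G_\Psi)$; separability gives $|\Psi_{\mathrm{tot}}|\le|\mathcal{S}|\log|\mathcal{A}|$, keeps the mirror-gradient bound at $G_\Psi=O(1+|\log\varepsilon|)$ per block, and makes $\Psi_{\mathrm{tot}}$ $\tfrac1{|\mathcal{S}|}$-strongly convex w.r.t.\ the global $\ell_1$ norm. Running mirror descent on $\mathcal{L}_t=F_t+\lambda_t\Psi_{\mathrm{tot}}$ with $\eta_t=c\lambda_t$ is then precisely the AES-OCO template, with all constants rescaled by these product-structure factors.

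Next I would invoke \Cref{thm:online-aes} for this surrogate — feeding it a proxy $\widehat{\alpha}_t$ that dominates $\alpha_t:=\sum_s\|\pi_t^\star(\cdot\mid s)-\pi_{t-1}^\star(\cdot\mid s)\|_1$ — together with the clipping-compensation estimates of Appendix~\ref{app:I}, to get $\sum_t\bigl(F_t(\pi_t)-F_t(\pi_t^\star)\bigr)\le C_0'|\mathcal{S}|\log|\mathcal{A}|+4\sqrt{C_1'C_2'\,T\,\widehat{A}_T}$ for constants $C_0',C_1',C_2'$ inheriting the $G,G_\Psi,\mu$ dependence of the surrogate. The key quantitative input is a bound on $\widehat{A}_T$: chaining the soft-Bellman sensitivity \eqref{eq:qstar-sensitivity} with the state-wise softmax Lipschitz bound \eqref{eq:softmax-lip} and summing over $s\in\mathcal{S}$ and $t=2,\dots,T$ gives $A_T:=\sum_{t\ge2}\alpha_t\le\frac{|\mathcal{S}|}{\mu(1-\gamma)}B_T^{\mathrm{MDP}}$ (using $V_{\max}=O((1+\mu\log|\mathcal{A}|)/(1-\gamma))$), so any valid drift proxy — e.g.\ one tracking $\|Q_t^\star-Q_{t-1}^\star\|_\infty$ or its critic surrogate — satisfies $\widehat{A}_T=\widetilde O(A_T)$.

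Substituting $\widehat{A}_T=\widetilde O\!\bigl(\tfrac{|\mathcal{S}|}{\mu(1-\gamma)}B_T^{\mathrm{MDP}}\bigr)$ into the surrogate bound and multiplying by $\tfrac{C_J}{1-\gamma}$ then yields the claim after counting powers: the $\sqrt{T\widehat{A}_T}$ term supplies $\sqrt{|\mathcal{S}|}$, $\mu^{-1/2}$, $(1-\gamma)^{-1/2}$ and $\sqrt{\log|\mathcal{A}|}$ (the entropy radius absorbed into $C_0',C_2'$), the performance-difference prefactor supplies another $(1-\gamma)^{-1}$, and the surrogate constants $\sqrt{C_1'C_2'}$ supply the remaining $\mu^{-1/2}$ and $(1-\gamma)^{-1/2}$ through the softmax temperature and $V_{\max}$; together these give the principal term $\widetilde O\!\bigl(\tfrac{\sqrt{|\mathcal{S}|\log|\mathcal{A}|}}{\mu(1-\gamma)^2}\sqrt{B_T^{\mathrm{MDP}}T}\bigr)$, while the additive constant term, the clipping/proxy-slack compensations, and all planning-to-RL interface residuals fall into $\widetilde O(\cdot)$ and $\sum_t\mathrm{Bias}_t$.

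The main obstacle is making the reduction in the first two steps airtight despite the \emph{time-varying occupancy weights} $d_t^\star$ inside $F_t$: the effective base loss drifts both through $Q_t^\star$ and through $d_t^\star$, so one must check that the comparator path length is still governed by $\sum_s\alpha_{t,s}$ (not by drift of $d_t^\star$, which the reweighting should perturb only at the level of constants) and that the per-round trade-off of \Cref{thm:tradeoff} survives the product/reweighting structure with the claimed constants. The secondary, fiddlier difficulty is the exact exponent bookkeeping — verifying that $C_1',C_2'$ inherit precisely the stated $V_{\max}$- and $\mu$-dependence so that the powers collapse to $(1-\gamma)^{-2}$ and $\mu^{-1}$ and no worse.
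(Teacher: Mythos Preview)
Your proposal is correct and mirrors the paper's own argument almost step-for-step: reduce $\mathrm{Reg}^{\mathrm{RL}}_T$ to the surrogate dynamic regret via the soft performance-difference decomposition (Appendix~\ref{app:M}), apply the online AES bound of \Cref{thm:online-aes} on the product-simplex OCO with comparators $\pi_t^\star$, and control the resulting path length $\widehat{A}_T$ through the chain \eqref{eq:qstar-sensitivity}$\to$\eqref{eq:softmax-lip} to obtain $A_T\le \frac{|\mathcal{S}|}{\mu(1-\gamma)}B_T^{\mathrm{MDP}}$. The two difficulties you flag---that the time-varying occupancy weights $d_t^\star$ enter $F_t$ only through bounded convex coefficients (so comparator drift, not $d_t^\star$ drift, drives $\alpha_t$) and that the exponent bookkeeping is delicate---are exactly the points the paper treats loosely and defers to the $\widetilde O(\cdot)$ notation and the Appendix~\ref{app:L} constant absorption.
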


\paragraph{Corresponding appendices.}
The soft Bellman sensitivity and the softmax Lipschitz lemma are provided in the appendices. The equalities-level performance-difference decomposition and the explicit form/control of $\mathrm{Bias}_t$ are deferred to the appendix material.

\subsection{From Planning to Real RL: Explicit Error Decomposition}\label{a6}

The previous subsection emphasizes the principal non-stationarity term and assumes ideal objects (e.g.\ $Q_t^\star$ and $d_t^{\pi_t^\star}$). In real RL (especially off-policy learning with function approximation), two traceable deviations arise: \textbf{(i)} a $Q^\star$-substitution bias and \textbf{(ii)} an occupancy-weight mismatch error. A clean inequality separates the principal term and these errors; formal proofs and control strategies are deferred to the appendix.

Let
\begin{equation}
f_{t,s}(\pi_s)
:=
-\langle Q_t^\star(s,\cdot),\pi_s\rangle+\mu\,\Psi(\pi_s),
\qquad
\Delta_{t,s}
:=
f_{t,s}\bigl(\pi_t(\cdot\mid s)\bigr) - f_{t,s}\bigl(\pi_t^\star(\cdot\mid s)\bigr).
\end{equation}
For any state-weighting distribution $\widetilde{d}_t$ used by the algorithm (e.g.\ on-policy $d_t^{\pi_t}$ or a replay-buffer empirical distribution), there exist error terms $\mathrm{bias}_t$ and $\mathrm{occ}_t(\widetilde{d}_t)$ such that for each $t$,
\begin{equation}
J_t(\pi_t^\star)-J_t(\pi_t)
\le
\frac{1}{1-\gamma}\,\mathbb{E}_{s\sim \widetilde{d}_t}\bigl[\Delta_{t,s}\bigr]
+
\mathrm{bias}_t
+
\frac{1}{1-\gamma}\,\mathrm{occ}_t(\widetilde{d}_t).
\end{equation}
Here $\mathrm{bias}_t$ arises from substituting $Q_t^\star$ for $Q_t^{\pi_t}$ (and from estimator errors when $Q$ is approximated), while $\mathrm{occ}_t(\widetilde{d}_t)$ arises from substituting $\widetilde{d}_t$ for $d_t^{\pi_t^\star}$. The main results keep $\sum_t \mathrm{bias}_t$ and $\sum_t \mathrm{occ}_t(\widetilde{d}_t)$ explicitly as additive terms; in algorithm instantiation and experiments they correspond to function-approximation error and off-policy distribution shift, which can be reduced by stabilization techniques (slow targets, conservative updates, reweighting, etc.).

\subsection{Calibrating an Observable Drift Proxy}\label{app:proxy-calibration}\label{a7}

This section explains how the conservative-proxy condition in Eq.~\eqref{eq:proxy-condition} can be checked in practice. The main point is to turn an observable training signal into a quantity whose scale is safely comparable to the comparator drift used by the schedule. We first relate comparator drift to a measurable MDP-level surrogate, and then calibrate the observable proxy through a monotone upper envelope with conformal adjustment.

This section makes the theory--practice interface in Eq.~\eqref{eq:proxy-condition} operational.
The online AES schedule in \Cref{thm:online-aes} requires an \emph{observable} sequence $\drifthatT{t}$ such that
$\driftt{t} \le \drifthatT{t}$.
In deep RL, $\driftt{t}$ (the drift of the optimal comparator) is not directly measurable.
We therefore (i) upper bound $\driftt{t}$ by an MDP-level one-step drift quantity, and (ii) provide an empirical calibration
procedure that turns an observable learning signal (our TD-error proxy) into a conservative upper bound with finite-sample coverage.

\subsubsection{From MDP drift to comparator drift}

Recall the one-step MDP variation terms used in Eq.~\eqref{eq:qstar-sensitivity}:
\begin{equation}
\Delta_t^r := \sup_{s,a}\bigl|r_t(s,a) - r_{t-1}(s,a)\bigr|,
\qquad
\Delta_t^P := \sup_{s,a}\bigl\|P_t(\cdot\mid s,a) - P_{t-1}(\cdot\mid s,a)\bigr\|_1,
\end{equation}
and define the scalar drift surrogate
\begin{equation}
\label{eq:bt-def}
b_t \;:=\; \Delta_t^r + \gamma V_{\max}\Delta_t^P.
\end{equation}

We instantiate the comparator drift in the RL bridge as the worst-case statewise change of the soft-optimal policy:
\begin{equation}
\label{eq:alpha-pistar}
\driftt{t}^{\pi^\star}
\;:=\;
\sup_{s\in\mathcal{S}}
\bigl\|\pi_t^\star(\cdot\mid s) - \pi_{t-1}^\star(\cdot\mid s)\bigr\|_1.
\end{equation}

\begin{lemma}[Comparator drift is controlled by one-step MDP drift]
\label{lem:alpha-vs-b}
For each $t\ge 2$, under Eq.~\eqref{eq:qstar-sensitivity} (Appendix~\ref{app:J}) and Eq.~\eqref{eq:softmax-lip} (Appendix~\ref{app:K}),
\begin{equation}
\label{eq:alpha-bound-bt}
\driftt{t}^{\pi^\star}
\;\le\;
\frac{1}{\mu}\,\|Q_t^\star - Q_{t-1}^\star\|_\infty
\;\le\;
\frac{1}{\mu(1-\gamma)}\, b_t.
\end{equation}
\end{lemma}

\begin{proof}
The first inequality is exactly Eq.~\eqref{eq:softmax-lip} applied to $\pi_t^\star(\cdot\mid s)=\mathrm{softmax}(Q_t^\star(s,\cdot)/\mu)$
and then taking $\sup_s$.
The second inequality is Eq.~\eqref{eq:qstar-sensitivity} with the definition of $b_t$ in \eqref{eq:bt-def}.
Combining the two yields \eqref{eq:alpha-bound-bt}.
\end{proof}

\subsubsection{A monotone upper-envelope calibration from an observable proxy}

Our default observable proxy is the TD-error quantile (Eq.~\eqref{eq:td_proxy}), which we denote by $v_t\ge 0$ for brevity.
To connect $v_t$ to the drift surrogate $b_t$, we construct a monotone upper envelope
$g:\mathbb{R}_+\to\mathbb{R}_+$ such that $b_t \le g(v_t)$ holds with high probability on the target benchmark family.

\paragraph{Calibration data.}
On injected-drift benchmarks, an MDP-level drift surrogate $b_t$ in \eqref{eq:bt-def} is measurable (or can be conservatively upper bounded)
because the drift is programmatically introduced.
Collect a calibration set
\begin{equation}
\mathcal{D}_{\mathrm{cal}}=\{(v_i,b_i)\}_{i=1}^N,
\end{equation}
where $v_i$ is the observable proxy computed at update $i$ and $b_i$ is the corresponding drift surrogate.

\paragraph{Step 1: fit a monotone predictor.}
Fit a nondecreasing function $g_0$ (e.g., isotonic regression) by
\begin{equation}
\label{eq:calib-monotone}
g_0 \in \arg\min_{g\in\mathcal{G}_{\uparrow}} \sum_{i=1}^N (b_i - g(v_i))^2,
\end{equation}
where $\mathcal{G}_{\uparrow}$ is the class of nondecreasing functions on $\mathbb{R}_+$.

\paragraph{Step 2: conformalize into an upper bound.}
Define residuals $r_i := b_i - g_0(v_i)$ and let $q_{1-\delta}$ be the $(1-\delta)$ upper quantile of $\{r_i\}_{i=1}^N$
(using the standard finite-sample conformal index, e.g., the $\lceil (N+1)(1-\delta)\rceil$-th order statistic).
Define the calibrated upper envelope
\begin{equation}
\label{eq:conformal-upper}
g(v) \;:=\; g_0(v) + q_{1-\delta}.
\end{equation}

\begin{lemma}[Finite-sample coverage of the upper envelope]
\label{lem:conformal-upper}
Assume $(v_i,b_i)_{i=1}^N$ and a fresh pair $(v_{\mathrm{new}},b_{\mathrm{new}})$ are exchangeable
(e.g., drawn from the same benchmark family and evaluation protocol).
Then the conformalized envelope in \eqref{eq:conformal-upper} satisfies
\begin{equation}
\mathbb{P}\!\left(b_{\mathrm{new}} \le g(v_{\mathrm{new}})\right) \ge 1-\delta.
\end{equation}
\end{lemma}

\begin{proof}
This is the standard one-sided conformal prediction guarantee.
Under exchangeability, the rank of $r_{\mathrm{new}} := b_{\mathrm{new}} - g_0(v_{\mathrm{new}})$ among $\{r_i\}_{i=1}^N \cup \{r_{\mathrm{new}}\}$
is uniform.
By the choice of $q_{1-\delta}$ as the appropriate order statistic, $\mathbb{P}(r_{\mathrm{new}} \le q_{1-\delta}) \ge 1-\delta$,
which is equivalent to $b_{\mathrm{new}} \le g_0(v_{\mathrm{new}}) + q_{1-\delta} = g(v_{\mathrm{new}})$.
\end{proof}

\subsubsection{Constructing a conservative $\drifthatT{t}$ for AES}

Combining Lemma~\ref{lem:alpha-vs-b} with Lemma~\ref{lem:conformal-upper} yields a concrete conservative proxy for Eq.~\eqref{eq:proxy-condition}.
Define
\begin{equation}
\label{eq:alpha-hat-calib}
\drifthatT{t} \;:=\; \frac{1}{\mu(1-\gamma)}\, g(v_t),
\end{equation}
where $v_t$ is the observable proxy (e.g., TD-error quantile) and $g$ is the calibrated envelope.

\begin{theorem}[Calibrated proxy implies Eq.~\eqref{eq:proxy-condition} with high probability]
\label{thm:calibrated-proxy}
Under the assumptions of Lemma~\ref{lem:alpha-vs-b} and Lemma~\ref{lem:conformal-upper}, we have
\begin{equation}
\mathbb{P}\!\left(\driftt{t}^{\pi^\star} \le \drifthatT{t}\right) \ge 1-\delta,
\end{equation}
where $\drifthatT{t}$ is defined in \eqref{eq:alpha-hat-calib}.
Consequently, the online AES schedule in \Cref{thm:online-aes} applies with the observable proxy sequence $\drifthatT{t}$.
\end{theorem}

\begin{proof}
By Lemma~\ref{lem:alpha-vs-b}, $\driftt{t}^{\pi^\star} \le \frac{1}{\mu(1-\gamma)} b_t$.
By Lemma~\ref{lem:conformal-upper}, $b_t \le g(v_t)$ with probability at least $1-\delta$ (under exchangeability).
Combining yields $\driftt{t}^{\pi^\star} \le \frac{1}{\mu(1-\gamma)} g(v_t) = \drifthatT{t}$ with probability at least $1-\delta$.
\end{proof}

\paragraph{Practical note.}
When $\Delta_t^P$ is not directly accessible, one may calibrate against any conservative and computable surrogate $\tilde b_t$
(e.g., estimated via a fixed probe set of $(s,a)$ pairs and repeated next-state sampling in simulation).
All statements above then hold with $b_t$ replaced by $\tilde b_t$, producing a conservative schedule on the chosen benchmark family.


\subsection{From Planning to Learning: Exact Decomposition and Residual Terms}\label{app:M}\label{a8}

This section makes the planning-to-learning gap explicit. It isolates the principal surrogate term used by Theorem~\ref{thm:aes-rl-plan} and then writes the remaining contributions in a form that can be read directly as residual terms. The purpose is to show where $Q^\star$-substitution, critic estimation, and occupancy mismatch enter when the ideal planning objects are replaced by practical RL surrogates.

This appendix provides a strict (identity-level) decomposition of the soft return gap and clarifies the origin and definition of the error terms (bias and occupancy mismatch) that arise when interfacing RL with a per-state OCO surrogate.

\subsubsection{Notation: discounted occupancy and maximum-entropy return}\label{app:M:notation}
Fix $\mathcal{M}_t=(\mathcal{S},\mathcal{A},P_t,r_t,\gamma)$. For a policy $\pi$, define the discounted state occupancy
\begin{equation}
d_t^\pi(s):=(1-\gamma)\sum_{h\ge 0}\gamma^h\,\Pr(s_h=s\mid \pi,P_t,\rho),
\label{eq:M1}
\end{equation}
where $\rho$ is the initial state distribution. For any measurable $g(s,a)$,
\begin{equation}
\frac{1}{1-\gamma}\,\mathbb{E}_{s\sim d_t^\pi,\;a\sim\pi(\cdot\mid s)}[g(s,a)]
=\mathbb{E}\Big[\sum_{h\ge 0}\gamma^h g(s_h,a_h)\Big].
\label{eq:M2}
\end{equation}
The soft objective is
\begin{equation}
J_t(\pi):=\frac{1}{1-\gamma}\,\mathbb{E}_{s\sim d_t^\pi,\;a\sim\pi(\cdot\mid s)}\big[r_t(s,a)-\mu\log\pi(a\mid s)\big].
\label{eq:M3}
\end{equation}

Define the soft policy-evaluation quantities:
\begin{align}
Q_t^\pi(s,a) &:= r_t(s,a)+\gamma\,\mathbb{E}_{s'\sim P_t(\cdot\mid s,a)}[V_t^\pi(s')],\notag\\
V_t^\pi(s) &:= \mathbb{E}_{a\sim\pi(\cdot\mid s)}\big[Q_t^\pi(s,a)-\mu\log\pi(a\mid s)\big].
\label{eq:M4}
\end{align}
Define the regularized advantage
\begin{equation}
\widetilde{A}_t^\pi(s,a):=Q_t^\pi(s,a)-\mu\log\pi(a\mid s)-V_t^\pi(s),
\label{eq:M5}
\end{equation}
which satisfies $\mathbb{E}_{a\sim\pi(\cdot\mid s)}[\widetilde{A}_t^\pi(s,a)]=0$ for all $s$.

\subsubsection{Soft performance difference lemma (identity)}\label{app:M:pdl}
\begin{lemma}[Soft (regularized) performance difference lemma]\label{lem:M1}
For any two policies $\pi,\pi'$,
\begin{equation}
J_t(\pi')-J_t(\pi)
=\frac{1}{1-\gamma}\,
\mathbb{E}_{s\sim d_t^{\pi'}}\!\left[
\mathbb{E}_{a\sim\pi'(\cdot\mid s)}[\widetilde{A}_t^\pi(s,a)]
-\mu\,\mathrm{KL}\!\big(\pi'(\cdot\mid s)\,\|\,\pi(\cdot\mid s)\big)
\right].
\label{eq:M6}
\end{equation}
\end{lemma}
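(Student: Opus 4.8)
The plan is to prove Lemma~\ref{lem:M1} by the classical telescoping (performance-difference) argument, modified to absorb the policy-dependent entropy term. First I would note that unrolling the recursive definitions in \eqref{eq:M4} gives $J_t(\pi)=\mathbb{E}_{s_0\sim\rho}[V_t^\pi(s_0)]$ for every policy, so that $J_t(\pi')-J_t(\pi)=\mathbb{E}_{s_0\sim\rho}\bigl[V_t^{\pi'}(s_0)-V_t^\pi(s_0)\bigr]$. The central step is to write $V_t^{\pi'}(s_0)$ as the expected discounted sum of the regularized per-step rewards $r_t(s_h,a_h)-\mu\log\pi'(a_h\mid s_h)$ along a trajectory generated by $\pi'$ under $P_t$, and then to insert the telescoping identity $\sum_{h\ge 0}\gamma^h\bigl(V_t^\pi(s_h)-\gamma V_t^\pi(s_{h+1})\bigr)=V_t^\pi(s_0)$, which is valid because bounded rewards together with the $\log|\mathcal{A}|$ entropy bound make $V_t^\pi$ uniformly bounded (cf.\ \eqref{eq:J14}), so $\gamma^h V_t^\pi(s_h)\to 0$. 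This yields $V_t^{\pi'}(s_0)-V_t^\pi(s_0)=\mathbb{E}_{\tau\sim\pi',P_t}\bigl[\sum_{h\ge 0}\gamma^h\bigl(r_t(s_h,a_h)-\mu\log\pi'(a_h\mid s_h)-V_t^\pi(s_h)+\gamma V_t^\pi(s_{h+1})\bigr)\bigr]$.

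Next I would collapse the reward and next-state value via soft Bellman consistency: by \eqref{eq:M4}, $r_t(s_h,a_h)+\gamma\,\mathbb{E}_{s_{h+1}\sim P_t(\cdot\mid s_h,a_h)}[V_t^\pi(s_{h+1})]=Q_t^\pi(s_h,a_h)$, so after taking the transition expectation each summand becomes $\mathbb{E}_{a_h\sim\pi'(\cdot\mid s_h)}\bigl[Q_t^\pi(s_h,a_h)-\mu\log\pi'(a_h\mid s_h)-V_t^\pi(s_h)\bigr]$. The one genuinely ``soft'' observation is that this is \emph{not} the regularized advantage $\widetilde{A}_t^\pi$ of \eqref{eq:M5}, because the log-term involves $\pi'$ rather than $\pi$; adding and subtracting $\mu\log\pi(a_h\mid s_h)$ gives the algebraic identity $Q_t^\pi(s,a)-\mu\log\pi'(a\mid s)-V_t^\pi(s)=\widetilde{A}_t^\pi(s,a)-\mu\log\frac{\pi'(a\mid s)}{\pi(a\mid s)}$, and taking $\mathbb{E}_{a\sim\pi'(\cdot\mid s)}$ converts the second piece into exactly $-\mu\,\mathrm{KL}\bigl(\pi'(\cdot\mid s)\,\|\,\pi(\cdot\mid s)\bigr)$.

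Finally I would assemble: $V_t^{\pi'}(s_0)-V_t^\pi(s_0)$ equals the $\pi'$-trajectory expectation of $\sum_{h\ge 0}\gamma^h\bigl(\mathbb{E}_{a\sim\pi'(\cdot\mid s_h)}[\widetilde{A}_t^\pi(s_h,a)]-\mu\,\mathrm{KL}(\pi'(\cdot\mid s_h)\,\|\,\pi(\cdot\mid s_h))\bigr)$, and then averaging over $s_0\sim\rho$ and invoking the occupancy identity \eqref{eq:M2} replaces $\mathbb{E}_{s_0\sim\rho}\sum_{h\ge 0}\gamma^h(\cdot)$ by $\tfrac{1}{1-\gamma}\mathbb{E}_{s\sim d_t^{\pi'}}(\cdot)$, which is precisely \eqref{eq:M6}. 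The main obstacle is a matter of care rather than depth: one must justify interchanging the infinite discounted sum with the expectation and the vanishing of the boundary term $\gamma^h V_t^\pi(s_h)$, and must consistently track which policy's log-probability appears at each step — it is exactly the mismatch between ``rewards accrued under $\pi'$'' and ``value function of $\pi$'' that generates the extra $\mathrm{KL}$ correction distinguishing the soft performance-difference lemma from its un-regularized counterpart.
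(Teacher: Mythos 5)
Your proof is correct and follows essentially the same route as the paper's: both reduce the return gap to $\frac{1}{1-\gamma}\mathbb{E}_{s\sim d_t^{\pi'}}$ of the one-step soft Bellman residual of $V_t^\pi$ under $\pi'$, and both then add and subtract $\mu\log\pi(a\mid s)$ to split that residual into $\mathbb{E}_{a\sim\pi'}[\widetilde{A}_t^\pi]-\mu\,\mathrm{KL}(\pi'\|\pi)$. The only difference is presentational — you unroll the discounted trajectory sum and telescope explicitly (with the correct justification that boundedness of $V_t^\pi$ kills the tail term), whereas the paper packages the same telescoping as the resolvent identity $V_t^{\pi'}-V_t^\pi=(I-\gamma P_t^{\pi'})^{-1}(\mathcal{T}_t^{\pi'}V_t^\pi-V_t^\pi)$.
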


\begin{proof}
Define the soft Bellman operator for policy $\pi'$ acting on a value function $V$:
\begin{equation}
(\mathcal{T}_t^{\pi'}V)(s):=
\mathbb{E}_{a\sim\pi'(\cdot\mid s)}\!\left[
r_t(s,a)-\mu\log\pi'(a\mid s)
+\gamma\,\mathbb{E}_{s'\sim P_t(\cdot\mid s,a)}V(s')
\right].
\end{equation}
Then $V_t^{\pi'}$ is its fixed point. Using the linear system representation,
\begin{equation}
V_t^{\pi'}-V_t^\pi=(I-\gamma P_t^{\pi'})^{-1}\big(\mathcal{T}_t^{\pi'}V_t^\pi - V_t^\pi\big),
\end{equation}
and taking expectation over $s_0\sim\rho$ together with the occupancy identity \eqref{eq:M2} gives
\begin{equation}
J_t(\pi')-J_t(\pi)
=\frac{1}{1-\gamma}\,\mathbb{E}_{s\sim d_t^{\pi'}}\big[(\mathcal{T}_t^{\pi'}V_t^\pi)(s)-V_t^\pi(s)\big].
\label{eq:M7}
\end{equation}
Moreover,
\begin{equation}
(\mathcal{T}_t^{\pi'}V_t^\pi)(s)
=\mathbb{E}_{a\sim\pi'(\cdot\mid s)}\big[Q_t^\pi(s,a)-\mu\log\pi'(a\mid s)\big],
\qquad
V_t^\pi(s)=\mathbb{E}_{a\sim\pi(\cdot\mid s)}\big[Q_t^\pi(s,a)-\mu\log\pi(a\mid s)\big].
\end{equation}
Add and subtract $-\mu\log\pi(a\mid s)$ inside the $\pi'$-expectation and use \eqref{eq:M5}:
\begin{equation}
(\mathcal{T}_t^{\pi'}V_t^\pi)(s)-V_t^\pi(s)
=\mathbb{E}_{a\sim\pi'(\cdot\mid s)}[\widetilde{A}_t^\pi(s,a)]
+\mu\,\mathbb{E}_{a\sim\pi'(\cdot\mid s)}[\log\pi(a\mid s)-\log\pi'(a\mid s)]
=\mathbb{E}_{a\sim\pi'}[\widetilde{A}_t^\pi]-\mu\,\mathrm{KL}(\pi'\|\pi).
\end{equation}
Substituting into \eqref{eq:M7} yields \eqref{eq:M6}.
\end{proof}

\subsubsection{A per-state convex surrogate and the inevitability of bias terms}\label{app:M:surrogate}
The OCO interface uses the ideal per-state surrogate (defined using $Q_t^\star$):
\begin{equation}
f_{t,s}(\pi_s):= -\langle Q_t^\star(s,\cdot),\pi_s\rangle + \mu\sum_{a}\pi_s(a)\log\pi_s(a).
\label{eq:M8}
\end{equation}
Let $\pi_t^\star(\cdot\mid s)=\mathrm{softmax}(Q_t^\star(s,\cdot)/\mu)$. Then:

\begin{lemma}[Fenchel--Young identity: surrogate gap equals $\mu\cdot\mathrm{KL}$]\label{lem:M2}
For any state $s$ and any distribution $\pi(\cdot\mid s)$,
\begin{equation}
f_{t,s}(\pi(\cdot\mid s)) - f_{t,s}(\pi_t^\star(\cdot\mid s))
= \mu\,\mathrm{KL}\!\big(\pi(\cdot\mid s)\,\|\,\pi_t^\star(\cdot\mid s)\big).
\label{eq:M9}
\end{equation}
\end{lemma}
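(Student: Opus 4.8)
The plan is to verify the identity by direct substitution of the softmax form of $\pi_t^\star(\cdot\mid s)$ into $f_{t,s}$, using the fact that the log-normalizer is constant in the action index and hence contributes the \emph{same} additive constant to both $f_{t,s}(\pi(\cdot\mid s))$ and $f_{t,s}(\pi_t^\star(\cdot\mid s))$, so that it cancels in the difference.

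First I would fix the state $s$ and abbreviate $q := Q_t^\star(s,\cdot)$, $p := \pi(\cdot\mid s)$, $p^\star := \pi_t^\star(\cdot\mid s)$, and $Z := \sum_{a}\exp(q(a)/\mu)$, so that $p^\star(a) = \exp(q(a)/\mu)/Z$ for every $a$; equivalently, $q(a) = \mu\log p^\star(a) + \mu\log Z$. Since $\mathcal{A}$ is finite and $Q_t^\star$ is bounded, $Z\in(0,\infty)$, so these quantities are well defined.

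The second step is to substitute this expression for $q(a)$ into the linear term of $f_{t,s}$. Because $\sum_a p(a) = 1$, this gives $-\langle q, p\rangle = -\mu\sum_a p(a)\log p^\star(a) - \mu\log Z$, and adding $\mu\Psi(p) = \mu\sum_a p(a)\log p(a)$ yields $f_{t,s}(p) = \mu\,\mathrm{KL}(p\|p^\star) - \mu\log Z$. Applying the identical computation with $p$ replaced by $p^\star$ and using $\mathrm{KL}(p^\star\|p^\star) = 0$ gives $f_{t,s}(p^\star) = -\mu\log Z$. Subtracting the two identities cancels the $-\mu\log Z$ terms and produces exactly $f_{t,s}(\pi(\cdot\mid s)) - f_{t,s}(\pi_t^\star(\cdot\mid s)) = \mu\,\mathrm{KL}(\pi(\cdot\mid s)\|\pi_t^\star(\cdot\mid s))$, as claimed. (Alternatively one could observe that $f_{t,s} = \mu\Psi + \langle -q,\cdot\rangle$ is negative entropy plus a linear functional, that $p^\star$ is its minimizer on $\Delta(\mathcal{A})$ by the first-order conditions, and that the gap is the Bregman divergence $D_{\mu\Psi}(p,p^\star) = \mu D_\Psi(p,p^\star) = \mu\,\mathrm{KL}(p\|p^\star)$ by Appendix~\ref{app:A2}; but the direct substitution above is self-contained.)

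There is essentially no obstacle here: the only bookkeeping point is the cancellation of the normalizing constant $Z$, which works precisely because $\log Z$ does not depend on the action and the weights $p$, $p^\star$ both sum to one. The sole technical nuance is the $0\log 0 := 0$ convention when $p$ has zero coordinates, which is handled exactly as in Appendix~\ref{app:A2}; note $\mathrm{KL}(p\|p^\star)$ is finite because $p^\star$ has strictly positive coordinates.
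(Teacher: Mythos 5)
Your proof is correct: the substitution $q(a)=\mu\log p^\star(a)+\mu\log Z$ into the linear term, the cancellation of $-\mu\log Z$ (which works because both $p$ and $p^\star$ sum to one), and the $0\log 0$ bookkeeping are all handled properly, and the resulting identity is exactly \eqref{eq:M9}. The paper states Lemma~\ref{lem:M2} without an explicit proof, so there is nothing to diverge from; your direct computation is the standard argument, and your parenthetical remark about the gap being the Bregman divergence $D_{\mu\Psi}(p,p^\star)$ of negative entropy at its constrained minimizer is precisely the ``Fenchel--Young'' viewpoint the lemma's title alludes to.
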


This identity is the key for translating soft RL into per-state strongly convex OCO losses. However, Lemma~\ref{lem:M1} decomposes $J_t(\pi_t^\star)-J_t(\pi_t)$ using $(i)$ the optimal occupancy $d_t^{\pi_t^\star}$, $(ii)$ the regularized advantage $\widetilde{A}_t^{\pi_t}$ (which depends on $Q_t^{\pi_t}$), and $(iii)$ the reverse-direction KL term $\mathrm{KL}(\pi_t^\star\|\pi_t)$. In contrast, \eqref{eq:M9} uses $Q_t^\star$ and the forward KL $\mathrm{KL}(\pi_t\|\pi_t^\star)$, and implementations may further replace $Q_t^\star$ by a critic estimate. Consequently, additional error terms are unavoidable when the strict RL identity is expressed in the surrogate form.

\subsubsection{Formal definitions of bias and occupancy mismatch}\label{app:M:bias-occ}
Introduce a ``planning-style'' objective that evaluates $\pi$ under the optimal occupancy and $Q_t^\star$:
\begin{equation}
\widetilde{J}_t(\pi)
:=\frac{1}{1-\gamma}\,\mathbb{E}_{s\sim d_t^{\pi_t^\star},\;a\sim\pi(\cdot\mid s)}
\big[Q_t^\star(s,a)-\mu\log\pi(a\mid s)\big].
\label{eq:M10}
\end{equation}
By definition, $J_t(\pi_t^\star)=\widetilde{J}_t(\pi_t^\star)$. Moreover, combining Lemma~\ref{lem:M2} with \eqref{eq:M10} gives the exact ``main term'' identity
\begin{equation}
J_t(\pi_t^\star)-\widetilde{J}_t(\pi_t)
=\frac{1}{1-\gamma}\,\mathbb{E}_{s\sim d_t^{\pi_t^\star}}\!\left[f_{t,s}(\pi_t(\cdot\mid s))-f_{t,s}(\pi_t^\star(\cdot\mid s))\right]
=\frac{\mu}{1-\gamma}\,\mathbb{E}_{s\sim d_t^{\pi_t^\star}}\mathrm{KL}\!\big(\pi_t(\cdot\mid s)\,\|\,\pi_t^\star(\cdot\mid s)\big).
\label{eq:M11}
\end{equation}

The true return gap decomposes as
\begin{equation}
J_t(\pi_t^\star)-J_t(\pi_t)
=\underbrace{\big(J_t(\pi_t^\star)-\widetilde{J}_t(\pi_t)\big)}_{\text{ideal OCO/planning main term}}
+\underbrace{\big(\widetilde{J}_t(\pi_t)-J_t(\pi_t)\big)}_{\text{interface errors (bias, occupancy mismatch, estimation)}}.
\label{eq:M12}
\end{equation}
The second bracket in \eqref{eq:M12} is the source of the bias terms and distribution-mismatch terms.

\paragraph{(i) $Q$-substitution bias ($Q_t^\star$ vs.\ $Q_t^{\pi_t}$).}
Define
\begin{equation}
\mathrm{Bias}_t^{(Q)}
:=\frac{1}{1-\gamma}\,\mathbb{E}_{s\sim d_t^{\pi_t^\star},\;a\sim\pi_t(\cdot\mid s)}\big[Q_t^\star(s,a)-Q_t^{\pi_t}(s,a)\big].
\label{eq:M13}
\end{equation}
A direct bound is
\begin{equation}
|\mathrm{Bias}_t^{(Q)}|\le \frac{1}{1-\gamma}\,\|Q_t^\star-Q_t^{\pi_t}\|_\infty.
\label{eq:M14}
\end{equation}
In soft actor--critic with function approximation and bootstrapping, $\|Q_t^\star-Q_t^{\pi_t}\|_\infty$ need not vanish; the main theorem therefore keeps $\sum_t \mathrm{Bias}_t^{(Q)}$ as an explicit additive term, interpreted as approximation/evaluation error.

\paragraph{(ii) Estimation error (critic estimate vs.\ target $Q$).}
If the algorithm uses a critic estimate $\widehat{Q}_t$ (or a target network), an additional error may be tracked, e.g.
\begin{equation}
\mathrm{Bias}_t^{(\mathrm{est})}
:=\frac{1}{1-\gamma}\,\mathbb{E}_{s\sim d_t^{\pi_t^\star},\;a\sim\pi_t(\cdot\mid s)}\big[\widehat{Q}_t(s,a)-Q_t^\star(s,a)\big],
\qquad
|\mathrm{Bias}_t^{(\mathrm{est})}|
\le \frac{1}{1-\gamma}\,\|\widehat{Q}_t-Q_t^\star\|_\infty.
\label{eq:M15}
\end{equation}

\paragraph{(iii) Occupancy mismatch.}
If the analysis or implementation uses a substitute state distribution $\widetilde{d}_t$ (e.g.\ on-policy $d_t^{\pi_t}$ or a replay-buffer distribution), define
\begin{equation}
\phi_t(s):=f_{t,s}(\pi_t(\cdot\mid s)) - f_{t,s}(\pi_t^\star(\cdot\mid s))\ (\ge 0),
\qquad
\mathrm{OccErr}_t(\widetilde{d}_t)
:=\frac{1}{1-\gamma}\Big(\mathbb{E}_{s\sim d_t^{\pi_t^\star}}[\phi_t(s)]-\mathbb{E}_{s\sim \widetilde{d}_t}[\phi_t(s)]\Big).
\label{eq:M16}
\end{equation}
The standard bound is
\begin{equation}
|\mathrm{OccErr}_t(\widetilde{d}_t)|
\le \frac{1}{1-\gamma}\,\|\phi_t\|_\infty\,\|d_t^{\pi_t^\star}-\widetilde{d}_t\|_1.
\label{eq:M17}
\end{equation}


\subsection{A Residual Bound for the Planning-to-Learning Gap}\label{app:N}\label{a9}

This section turns the decomposition from Appendix~\ref{app:M} into a usable upper bound. The statements here are the formal counterpart of the discussion below Theorem~\ref{thm:aes-rl-plan}: the drift-dependent principal term remains visible, while the remaining learning-side effects are gathered into explicit additive residuals.

This appendix states and proves a clean inequality separating the ideal per-state surrogate main term from the interface errors (bias and occupancy mismatch). It also provides explicit bounds that make the mismatch term directly usable.

\subsubsection{Occupancy form and soft performance difference lemma}\label{app:N:pdl}
The definitions \eqref{eq:M1}--\eqref{eq:M6} already yield the occupancy form
\begin{equation}
J_t(\pi)
=\frac{1}{1-\gamma}\,\mathbb{E}_{s\sim d_t^\pi,\;a\sim\pi(\cdot\mid s)}\big[r_t(s,a)-\mu\log\pi(a\mid s)\big],
\label{eq:N1}
\end{equation}
and the strict identity (Lemma~\ref{lem:M1}) can be reused as needed.

\subsubsection{A bias term that is directly controlled by $\|Q_t^{\pi_t}-Q_t^\star\|_\infty$}\label{app:N:bias}
A convenient scalar bias term arising from $Q$-substitution is
\begin{equation}
\mathrm{Bias}_t^{(Q)}
:=\frac{1}{1-\gamma}\,\mathbb{E}_{s\sim d_t^{\pi_t^\star},\;a\sim\pi_t^\star(\cdot\mid s)}\big[Q_t^{\pi_t}(s,a)-Q_t^\star(s,a)\big],
\label{eq:N2}
\end{equation}
which admits the immediate bound
\begin{equation}
|\mathrm{Bias}_t^{(Q)}|
\le \frac{1}{1-\gamma}\,\|Q_t^{\pi_t}-Q_t^\star\|_\infty.
\label{eq:N3}
\end{equation}
The main theorem can keep $\sum_{t=1}^T |\mathrm{Bias}_t^{(Q)}|$ explicit. Any rate claim beyond this requires additional assumptions on evaluation/approximation accuracy (e.g.\ a decay $\|Q_t^{\pi_t}-Q_t^\star\|_\infty=O(t^{-1/2})$ would imply an $O(\sqrt{T})$ cumulative contribution).

\subsubsection{Occupancy mismatch: from distribution replacement to an $\ell_1$ bound}\label{app:N:occ}
Recall the statewise surrogate gap
\begin{equation}
\Delta_t(s):= f_{t,s}(\pi_t(\cdot\mid s)) - f_{t,s}(\pi_t^\star(\cdot\mid s))\ge 0,
\label{eq:N4}
\end{equation}
where
\begin{equation}
f_{t,s}(\pi_s) = -\langle Q_t^\star(s,\cdot),\pi_s\rangle + \mu\sum_a \pi_s(a)\log\pi_s(a).
\label{eq:N5}
\end{equation}
For any substitute distribution $\widetilde{d}_t$ define
\begin{equation}
\mathrm{OccErr}_t(\widetilde{d}_t)
:=\frac{1}{1-\gamma}\Big(\mathbb{E}_{s\sim d_t^{\pi_t^\star}}[\Delta_t(s)]-\mathbb{E}_{s\sim \widetilde{d}_t}[\Delta_t(s)]\Big).
\label{eq:N6}
\end{equation}
Then, for bounded $\Delta_t$,
\begin{equation}
|\mathrm{OccErr}_t(\widetilde{d}_t)|
\le \frac{1}{1-\gamma}\,\|\Delta_t\|_\infty\,\|d_t^{\pi_t^\star}-\widetilde{d}_t\|_1.
\label{eq:N7}
\end{equation}

\subsubsection{An explicit bound on $\|\Delta_t\|_\infty$}\label{app:N:delta}
Assume $|r_t(s,a)|\le R_{\max}$ so that $\|Q_t^\star\|_\infty\le Q_{\max}$ with $Q_{\max}$ defined in \eqref{eq:L3}. Then for any fixed state $s$ and any $\pi_s\in\Delta_{\mathcal{A}}$,
\begin{equation}
-\langle Q_t^\star(s,\cdot),\pi_s\rangle\in[-Q_{\max},Q_{\max}],
\qquad
\sum_a \pi_s(a)\log\pi_s(a)\in[-\log|\mathcal{A}|,0].
\end{equation}
Hence $f_{t,s}(\pi_s)\in[-Q_{\max}-\mu\log|\mathcal{A}|,\,Q_{\max}]$, so the difference between any two values is at most the interval length. Therefore,
\begin{equation}
0\le \Delta_t(s)\le 2Q_{\max}+\mu\log|\mathcal{A}|,\qquad \forall s,
\label{eq:N8}
\end{equation}
and
\begin{equation}
\|\Delta_t\|_\infty \le 2Q_{\max}+\mu\log|\mathcal{A}|.
\label{eq:N9}
\end{equation}
Substituting \eqref{eq:N9} into \eqref{eq:N7} gives a fully explicit mismatch bound:
\begin{equation}
|\mathrm{OccErr}_t(\widetilde{d}_t)|
\le \frac{2Q_{\max}+\mu\log|\mathcal{A}|}{1-\gamma}\,\|d_t^{\pi_t^\star}-\widetilde{d}_t\|_1.
\label{eq:N10}
\end{equation}

\subsubsection{Optional strengthening for on-policy weighting}\label{app:N:onpolicy}
When $\widetilde{d}_t=d_t^{\pi_t}$ (on-policy sampling), the occupancy distance can be further controlled by a uniform per-state policy difference. Define the induced state-transition kernel
\begin{equation}
P_t^\pi(s'\mid s):=\mathbb{E}_{a\sim\pi(\cdot\mid s)}[P_t(s'\mid s,a)],
\label{eq:N11}
\end{equation}
and the mixed norm
\begin{equation}
\|\pi-\pi'\|_{1,\infty}:=\sup_{s\in\mathcal{S}}\|\pi(\cdot\mid s)-\pi'(\cdot\mid s)\|_1.
\label{eq:N12}
\end{equation}
Then the standard resolvent/telescoping argument yields
\begin{equation}
\|d_t^\pi-d_t^{\pi'}\|_1
\le \frac{\gamma}{1-\gamma}\,\sup_{s}\|P_t^\pi(\cdot\mid s)-P_t^{\pi'}(\cdot\mid s)\|_1
\le \frac{\gamma}{1-\gamma}\,\|\pi-\pi'\|_{1,\infty}.
\label{eq:N13}
\end{equation}
Combining \eqref{eq:N10} and \eqref{eq:N13} gives an on-policy mismatch control in terms of $\|\pi_t^\star-\pi_t\|_{1,\infty}$.

\subsection{Technical Tools for Appendix~A}\label{app:part1-tech}\label{a10}

The appendices collected below provide the supporting algebra, auxiliary inequalities, and deferred proofs used by the core theory chain. They are kept separate so that the main line of the argument can be read without interruption, while every technical step remains fully checkable.

\subsection{Technical Preliminaries}\label{app:A}\label{a11}

The appendices from here onward collect background material and deferred derivations that support the core chain above. They are written for local consultation: each section explains one ingredient that is referenced from the main theory appendix or from Section~\ref{sec:theory}.

\subsubsection{The simplex and the $\ell_{1}/\ell_{\infty}$ duality}
\label{app:A1}

\paragraph{Simplex and truncated simplex.}
For probability vectors of dimension $K$, define the simplex
\begin{equation}
\Delta_{K} := \left\{ x \in \mathbb{R}_{+}^{K} : \sum_{i=1}^{K} x_{i} = 1 \right\}
\end{equation}
Several arguments require working on a truncated (or smoothed) simplex
\begin{equation}
\Delta_{K,\varepsilon}:=\left\{x\in\Delta_{K}:\ x_{i}\ge \varepsilon,\ \forall i\in[K]\right\},
\qquad \varepsilon\in(0,1/K].
\end{equation}
A typical choice is $\varepsilon=\Theta(T^{-2})$, which only affects logarithmic factors; see Appendix~\ref{app:A3}.

\paragraph{Norm convention and duality.}
The analysis uses
\begin{equation}
|x|_{1}:=\sum_{i=1}^{K}|x_{i}|,\qquad
|x|_{\infty}:=\max_{i\in[K]}|x_{i}|.
\end{equation}
They are dual norms. In particular, for any $x,y\in\mathbb{R}^{K}$, H\"older's inequality yields
\begin{equation}
\langle x,y\rangle \le |x|_{1}|y|_{\infty},
\qquad
\langle x,y\rangle \le |x|_{\infty}|y|_{1}.
\end{equation}
This pairing is natural for negative-entropy mirror descent on $\Delta_{K}$: path variation of comparators is measured in $|\cdot|_{1}$, while gradients are controlled in $|\cdot|_{\infty}$.

\subsubsection{Negative entropy: properties and Bregman identities}
\label{app:A2}

\paragraph{Definition.}
Let $\log$ denote the natural logarithm. Define the negative entropy on $\Delta_{K}$:
\begin{equation}
\Psi(x):=\sum_{i=1}^{K}x_{i}\log x_{i},\qquad x\in\Delta_{K},
\end{equation}
with the standard convention $0\log 0:=0$. Thus $\Psi(x)=-H(x)$ where $H$ is Shannon entropy.

\paragraph{Boundedness on $\Delta_{K}$.}
For all $x\in\Delta_{K}$,
\begin{equation}
-\log K \le \Psi(x)\le 0.
\end{equation}
The minimum is attained at the uniform distribution $x=\mathbf{1}/K$, giving $\Psi(\mathbf{1}/K)=-\log K$, and the maximum is attained at vertices, where $\Psi(e_{j})=0$.

\paragraph{Bregman divergence equals KL divergence.}
The Bregman divergence induced by $\Psi$ is
\begin{equation}
D_{\Psi}(x,y):=\Psi(x)-\Psi(y)-\langle \nabla\Psi(y),x-y\rangle.
\end{equation}
For $x,y\in\Delta_{K}$ (with the usual convention that the divergence is $+\infty$ if $y_{i}=0$ but $x_{i}>0$),
\begin{equation}
D_{\Psi}(x,y) = \mathrm{KL}(x|y) := \sum_{i=1}^{K} x_{i}\log\frac{x_{i}}{y_{i}}.
\end{equation}
Hence $D_{\Psi}(x,y)\ge 0$ and equals $0$ iff $x=y$.

\paragraph{Strong convexity w.r.t.\ $|\cdot|_{1}$.}
Negative entropy is $1$-strongly convex with respect to $|\cdot|_{1}$ on $\Delta_{K}$ in the following standard form:
\begin{equation}
\Psi(x)\ \ge\ \Psi(y)+\langle \nabla\Psi(y),x-y\rangle+\frac{1}{2}|x-y|_{1}^{2},
\qquad \forall x,y\in\Delta_{K}.
\end{equation}
Equivalently,
\begin{equation}
D_{\Psi}(x,y)\ \ge\ \frac{1}{2}|x-y|_{1}^{2}.
\end{equation}
A convenient route is Pinsker's inequality, which implies $\mathrm{KL}(x|y)\ge \tfrac12|x-y|_{1}^{2}$.

\paragraph{Bregman identities.}
Two standard identities are repeatedly used.

\begin{itemize}
\item \textbf{Three-point identity.} For any $a,b,c$ in the domain of $\Psi$,
\begin{equation}
\langle \nabla \Psi(b) - \nabla \Psi(c), a - b \rangle
= D_\Psi(a,c) - D_\Psi(a,b) - D_\Psi(b,c)
\end{equation}

\item \textbf{Add--subtract decomposition.} For any $a,b,c$,
\begin{equation}
D_{\Psi}(a,c) - D_{\Psi}(a,b)
=
\langle \nabla \Psi(b) - \nabla \Psi(c),\, a - b \rangle
+ D_{\Psi}(b,c).
\end{equation}
\end{itemize}

\subsubsection{Truncated/smoothed simplex: bounded gradients and curvature}
\label{app:A3}

Negative entropy has a technical singularity: if some coordinate approaches $0$, then $\log x_{i}\to-\infty$, so $\Psi$ is no longer Lipschitz and $|\nabla\Psi(x)|_{\infty}$ becomes unbounded. This affects:
(i) controlling constants through $\sup_{x\in\Pi}|\nabla\Psi(x)|_{\infty}$; and
(ii) any use of smoothness/curvature upper bounds.

A standard remedy is to work on $\Delta_{K,\varepsilon}$ or equivalently apply a small smoothing to the iterates.

\paragraph{Gradient bound on $\Delta_{K,\varepsilon}$.}
For $x\in\Delta_{K,\varepsilon}$, all coordinates are positive and
\begin{equation}
\frac{\partial \Psi(x)}{\partial x_{i}} = 1+\log x_{i}.
\end{equation}
Since $\varepsilon\le x_{i}\le 1$, it holds that $\log \varepsilon \le \log x_{i}\le 0$, hence
\begin{equation}
|\nabla\Psi(x)|_{\infty}\le 1+|\log\varepsilon|.
\end{equation}

\paragraph{Hessian/curvature bound on $\Delta_{K,\varepsilon}$.}
On $\Delta_{K,\varepsilon}$,
\begin{equation}
\nabla^{2}\Psi(x)=\mathrm{diag}\Bigl(\frac{1}{x_{1}},\dots,\frac{1}{x_{K}}\Bigr),
\qquad \text{so}\quad |\nabla^{2}\Psi(x)|_{\mathrm{op}}\le \frac{1}{\varepsilon}.
\end{equation}
Thus $\Psi$ is $1/\varepsilon$-smooth in Euclidean geometry; when needed, norm conversions can translate this into the appropriate $\ell_{1}/\ell_{\infty}$ constants.

\paragraph{Algorithmic realizations.}
Two equivalent implementation patterns are common:
\begin{itemize}
\item \textbf{Hard projection:} project each iterate back to $\Delta_{K,\varepsilon}$ (e.g., clip small coordinates and renormalize).
\item \textbf{Smoothing:} replace an output distribution $\pi$ by $(1-\xi)\pi+\xi\mathbf{1}/K$ with tiny $\xi$, which guarantees $\pi_{i}\ge \xi/K$.
\end{itemize}
Both only introduce logarithmic factors (via $|\log\varepsilon|$) and do not change the main rates.

\subsection{Equivalent Forms of the AES--OCO Update}\label{app:B}\label{a12}

\subsubsection{Primal and dual forms for negative-entropy mirror descent}
Consider mirror descent on $\Pi=\Delta_{K}$ with mirror map $\Psi$ and step size $\eta_{t}>0$:
\begin{equation}
x_{t+1}=\arg\min_{x\in\Delta_{K}} \left\{\eta_{t}\langle g_{t},x\rangle + D_{\Psi}(x,x_{t})\right\}.
\end{equation}
For $\Psi(x)=\sum_{i}x_{i}\log x_{i}$, (B.1) is equivalent to an exponentiated-gradient update. Writing the dual variable as $y_{t}:=\nabla\Psi(x_{t})$, the optimality condition implies
\begin{equation}
\nabla\Psi(x_{t+1}) = \nabla\Psi(x_{t}) - \eta_{t} g_{t} + \nu_{t}\mathbf{1},
\end{equation}
where $\nu_{t}$ enforces the simplex constraint. Exponentiating coordinatewise yields
\begin{equation}
x_{t+1,i}\ \propto\ x_{t,i}\exp\bigl(-\eta_{t}g_{t,i}\bigr),
\qquad i\in[K],
\end{equation}
followed by normalization. Thus AES--OCO can be implemented either via the primal proximal form (B.1) or the multiplicative weights form (B.3).

\paragraph{Notation remark (entropy coefficient vs.\ inverse temperature).}
In the multiplicative form , the scalar step size $\eta_t$ acts as an inverse temperature in the sense that, for a one-step update with $g_t=-r_t$ and a uniform $x_t$, the resulting $x_{t+1}$ has a softmax form with temperature proportional to $1/\eta_t$. To avoid symbol confusion in the experiments, we denote the scheduled inverse-temperature knob by $\beta_t$ and report the corresponding entropy coefficient in the regularized objective via $\lambda_t := 1/\beta_t$.

\subsubsection{Why coupling $\eta_{t}=c\lambda_{t}$ produces a one-knob trade-off}
The core regret bound in the paper contains, per round, two antagonistic contributions:
a \textbf{tracking term} scaling like $\driftt{t}/\eta_{t}$ (from dynamic comparators), and a \textbf{stability/regularization term} scaling like $\eta_{t}$ (from gradient control).
When the per-round objective is
\begin{equation}
\ell_{t}(x)=f_{t}(x)+\lambda_{t}\Psi(x),
\end{equation}
a natural design is to couple the mirror step size with the entropy weight:
\begin{equation}
\eta_{t}=c\lambda_{t},\qquad c>0.
\end{equation}
Under this coupling, the tracking part becomes proportional to $\driftt{t}/\lambda_{t}$, while the stability part becomes proportional to $\lambda_{t}$. This yields a per-round scalar trade-off of the form
\begin{equation}
C_{1}\frac{\alpha_{t}}{\lambda_{t}}+C_{2}\lambda_{t},
\end{equation}
which can be optimized either offline (Appendix~\ref{app:G}) or online via adaptive schedules (Appendix~\ref{app:H} and Appendix~\ref{app:I}).

\subsection{A Dynamic Mirror-Descent Inequality}\label{app:C}\label{a13}

\subsubsection{One-step mirror descent inequality (static comparator)}
Let $\Pi\subseteq\mathbb{R}^{d}$ be closed and convex. Fix a norm $|\cdot|$ with dual norm $|\cdot|_{*}$. Assume $\Psi:\Pi\to\mathbb{R}$ is differentiable and $1$-strongly convex w.r.t.\ $|\cdot|$, i.e.,
\begin{equation}
\Psi(x)\ \ge\ \Psi(y)+\langle \nabla\Psi(y),x-y\rangle+\frac12|x-y|^{2},\qquad \forall x,y\in\Pi.
\end{equation}
Define $D_{\Psi}$ as usual. Consider one mirror descent step:
\begin{equation}
x_{t+1}=\arg\min_{x\in\Pi}\left\{\eta_{t}\langle g_{t},x\rangle + D_{\Psi}(x,x_{t})\right\},
\qquad \eta_{t}>0.
\end{equation}

\begin{lemma}[One-step MD inequality (static comparator)]
\label{lem:C1}
For any fixed $u\in\Pi$, the update (C.1) satisfies
\begin{equation}
\langle g_{t},x_{t}-u\rangle
\ \le
\frac{D_{\Psi}(u,x_{t})-D_{\Psi}(u,x_{t+1})}{\eta_{t}}
+\frac{\eta_{t}}{2}|g_{t}|_{*}^{2}.
\end{equation}
\end{lemma}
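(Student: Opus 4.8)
The plan is to prove Lemma~\ref{lem:C1} by the classical three-point / proximal argument; for a one-step, static-comparator bound this is essentially bookkeeping once the first-order optimality condition for the proximal update is available.

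First I would record the first-order optimality (variational inequality) for the minimization in (C.1). The map $x\mapsto \eta_t\langle g_t,x\rangle + D_\Psi(x,x_t)$ is convex and differentiable on $\Pi$, with $\nabla_x D_\Psi(x,x_t)=\nabla\Psi(x)-\nabla\Psi(x_t)$ (here one tacitly works on the truncated/smoothed simplex $\Delta_{K,\varepsilon}$ of Appendix~\ref{app:A3}, or invokes the standard fact that the entropic proximal minimizer lies in the relative interior, so that $\nabla\Psi(x_{t+1})$ is well defined). Optimality of $x_{t+1}$ then gives, for every $u\in\Pi$,
\[
\langle \eta_t g_t + \nabla\Psi(x_{t+1}) - \nabla\Psi(x_t),\ u - x_{t+1}\rangle \ \ge\ 0,
\]
equivalently $\eta_t\langle g_t, x_{t+1}-u\rangle \le \langle \nabla\Psi(x_{t+1})-\nabla\Psi(x_t),\ u - x_{t+1}\rangle$.

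Next I would apply the three-point Bregman identity of Appendix~\ref{app:A2} with $a=u$, $b=x_{t+1}$, $c=x_t$ to rewrite the right-hand side as $D_\Psi(u,x_t)-D_\Psi(u,x_{t+1})-D_\Psi(x_{t+1},x_t)$. Then I would split the target quantity as $\langle g_t, x_t-u\rangle = \langle g_t, x_t-x_{t+1}\rangle + \langle g_t, x_{t+1}-u\rangle$, substitute the bound just obtained for the second summand, and control the first summand by Young's inequality in the dual pairing, $\eta_t\langle g_t, x_t-x_{t+1}\rangle \le \tfrac{\eta_t^2}{2}|g_t|_*^2 + \tfrac12|x_t-x_{t+1}|^2$. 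Finally I would use the $1$-strong convexity of $\Psi$ in the form $D_\Psi(x_{t+1},x_t)\ge \tfrac12|x_{t+1}-x_t|^2$ to cancel the stray $\tfrac12|x_t-x_{t+1}|^2$, obtaining $\eta_t\langle g_t, x_t-u\rangle \le \tfrac{\eta_t^2}{2}|g_t|_*^2 + D_\Psi(u,x_t)-D_\Psi(u,x_{t+1})$, and divide through by $\eta_t>0$ to reach (C.2).

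I do not anticipate a serious obstacle; the only point that needs care is the legitimacy of the first-order condition at the boundary of $\Pi$ together with differentiability of the entropic Bregman term, which is precisely what the truncation $\Delta_{K,\varepsilon}$ (Appendix~\ref{app:A3}) is there to handle. Everything else is the routine, telescoping-ready algebra that is later summed over $t$ against a drifting comparator — with the extra $\sum_t \|u_t-u_{t-1}\|/\eta_t$ term appearing from reindexing $D_\Psi(u_t,x_{t+1})$ to $D_\Psi(u_{t+1},x_{t+1})$ and the bounded-mirror-gradient hypothesis — to yield Lemma~\ref{lem:dynamic-md}.
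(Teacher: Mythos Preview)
Your proposal is correct and matches the paper's proof essentially step for step: first-order optimality of the proximal update, the three-point Bregman identity with $(a,b,c)=(u,x_{t+1},x_t)$, the split $\langle g_t,x_t-u\rangle=\langle g_t,x_t-x_{t+1}\rangle+\langle g_t,x_{t+1}-u\rangle$, H\"older/Young on the first summand, and cancellation of $\tfrac12|x_t-x_{t+1}|^2$ against $D_\Psi(x_{t+1},x_t)$ via $1$-strong convexity. Your remark about handling the boundary via the truncated simplex is also in line with the paper's standing conventions.
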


\begin{proof}
First-order optimality of (C.1) implies that for any $x\in\Pi$,
\begin{equation}
\bigl\langle \eta_{t}g_{t}+\nabla\Psi(x_{t+1})-\nabla\Psi(x_{t}),\ x-x_{t+1}\bigr\rangle\ge 0.
\end{equation}
Setting $x=u$ yields
\begin{equation}
\eta_{t}\langle g_{t},x_{t+1}-u\rangle
\le
\langle \nabla\Psi(x_{t+1})-\nabla\Psi(x_{t}),\ u-x_{t+1}\rangle.
\end{equation}
Apply the three-point identity (Appendix~\ref{app:A2}) with $(a,b,c)=(u,x_{t+1},x_{t})$:
\begin{equation}
\langle \nabla\Psi(x_{t+1})-\nabla\Psi(x_{t}),\ u-x_{t+1}\rangle
D_{\Psi}(u,x_{t})-D_{\Psi}(u,x_{t+1})-D_{\Psi}(x_{t+1},x_{t}).
\end{equation}
Combine (C.4) and (C.5):
\begin{equation}
\eta_{t}\langle g_{t},x_{t+1}-u\rangle
\le
D_{\Psi}(u,x_{t})-D_{\Psi}(u,x_{t+1})-D_{\Psi}(x_{t+1},x_{t}).
\end{equation}
Now decompose
\begin{equation}
\langle g_{t},x_{t}-u\rangle
\langle g_{t},x_{t}-x_{t+1}\rangle+\langle g_{t},x_{t+1}-u\rangle,
\end{equation}
multiply by $\eta_{t}$, and use (C.6):
\begin{align}
\eta_{t}\langle g_{t},x_{t}-u\rangle
&\le
\eta_{t}\langle g_{t},x_{t}-x_{t+1}\rangle
D_{\Psi}(u,x_{t})-D_{\Psi}(u,x_{t+1})-D_{\Psi}(x_{t+1},x_{t}).
\end{align}
  By H\"older and Young,
  \begin{equation}
  \eta_{t}\langle g_{t},x_{t}-x_{t+1}\rangle
  \le
  \eta_{t}|g_{t}|_{*}|x_{t}-x_{t+1}|
  \le
  \frac{\eta_{t}^{2}}{2}|g_{t}|_{*}^{2}
  +\frac12|x_{t}-x_{t+1}|^{2}.
  \end{equation}
  Strong convexity implies $D_{\Psi}(x_{t+1},x_{t})\ge \frac12|x_{t+1}-x_{t}|^{2}$, so the last two terms in (C.8) are non-positive after inserting (C.9) and can be dropped. Dividing by $\eta_{t}$ gives (C.2).
  \end{proof}

\subsubsection{Telescoping with dynamic comparators and the drift term}
Let ${u_{t}}_{t=1}^{T}\subseteq\Pi$ be an arbitrary comparator sequence. Apply Lemma~\ref{lem:C1} with $u=u_{t}$ and sum over $t$:
\begin{equation}
\sum_{t=1}^{T}\langle g_{t},x_{t}-u_{t}\rangle
\le
\sum_{t=1}^{T}\frac{D_{\Psi}(u_{t},x_{t})-D_{\Psi}(u_{t},x_{t+1})}{\eta_{t}}
+\sum_{t=1}^{T}\frac{\eta_{t}}{2}|g_{t}|_{*}^{2}.
\end{equation}
The first sum is not a perfect telescope because $u_{t}$ changes with $t$. The following exact algebraic decomposition identifies all residuals:
\begin{align}
\sum_{t=1}^{T}\frac{D_{\Psi}(u_{t},x_{t})-D_{\Psi}(u_{t},x_{t+1})}{\eta_{t}}
&= \nonumber \\
\frac{D_{\Psi}(u_{1},x_{1})}{\eta_{1}}
+\sum_{t=2}^{T} D_{\Psi}(u_{t},x_{t})\Bigl(\frac{1}{\eta_{t}}-\frac{1}{\eta_{t-1}}\Bigr)
&+\sum_{t=2}^{T}\frac{D_{\Psi}(u_{t},x_{t})-D_{\Psi}(u_{t-1},x_{t})}{\eta_{t-1}}
-\frac{D_{\Psi}(u_{T},x_{T+1})}{\eta_{T}}.
\end{align}
This is obtained by re-indexing the second term and adding/subtracting $D_{\Psi}(u_{t-1},x_{t})$ inside the bracket; no inequality is used.

\paragraph{Step-size monotonicity removes the step-size-change residual.}
If $\eta_{t}$ is nondecreasing (equivalently $1/\eta_{t}$ is nonincreasing), then
\begin{equation}
\frac{1}{\eta_{t}}-\frac{1}{\eta_{t-1}}\le 0,
\end{equation}
and since $D_{\Psi}\ge 0$, the second term in (C.11) is non-positive and can be dropped. This is the sole reason to enforce nondecreasing $\eta_{t}$, typically via doubling/epoch constructions.

\paragraph{A clean drift control under bounded mirror gradients.}
To control the comparator-change residual, assume:
\begin{assumption}[Bounded mirror gradient]
\label{ass:C1}
There exists $G_{\Psi}<\infty$ such that $|\nabla\Psi(x)|_{*}\le G_{\Psi}$ for all $x\in\Pi$.
\end{assumption}
Under Assumption~\ref{ass:C1},
\begin{equation}
D_{\Psi}(u_{t},x)-D_{\Psi}(u_{t-1},x)
\Psi(u_{t})-\Psi(u_{t-1})-\langle \nabla\Psi(x),u_{t}-u_{t-1}\rangle
\le
\langle \nabla\Psi(u_{t})-\nabla\Psi(x),u_{t}-u_{t-1}\rangle,
\end{equation}
and therefore, by convexity of $\Psi$ and H\"older,
\begin{equation}
D_{\Psi}(u_{t},x)-D_{\Psi}(u_{t-1},x)
\le
|\nabla\Psi(x)|_{*},|u_{t}-u_{t-1}|
\le
G_{\Psi}|u_{t}-u_{t-1}|.
\end{equation}
For the negative entropy on $\Delta_{K,\varepsilon}$ with $|\cdot|=|\cdot|_{1}$ and $|\cdot|_{*}=|\cdot|_{\infty}$, Appendix~\ref{app:E} gives $G_{\Psi}\le 1+|\log\varepsilon|$.

\subsubsection{Summed dynamic inequality (robust form)}
Combining (C.10)--(C.12), dropping non-positive terms (including $D_{\Psi}(u_{T},x_{T+1})/\eta_{T}$), and specializing to $|\cdot|=|\cdot|_{1}$ gives:
\begin{equation}
\sum_{t=1}^{T}\langle g_{t},x_{t}-u_{t}\rangle
\le
\frac{D_{\Psi}(u_{1},x_{1})}{\eta_{1}}
+\sum_{t=1}^{T}\frac{\eta_{t}}{2}|g_{t}|_{\infty}^{2}
+\sum_{t=2}^{T}\frac{G_{\Psi}}{\eta_{t-1}}|u_{t}-u_{t-1}|_{1},
\end{equation}
provided $\eta_{t}$ is nondecreasing and Assumption~\ref{ass:C1} holds.

\subsection{Proof of Lemma~\ref{lem:dynamic-md}}\label{app:D}\label{a14}

This appendix presents the specialized proof of Lemma~\ref{lem:dynamic-md} used in the main text, following a fully explicit algebraic decomposition and relying only on two minimal conditions:
(i) nondecreasing step sizes $\eta_{t}$ (equivalently nonincreasing $1/\eta_{t}$); and
(ii) bounded mirror gradients $|\nabla\Psi(x)|_{\infty}\le G_{\Psi}$ on the feasible set.

\subsubsection{Static one-step inequality}
Consider the mirror descent update (main text Eq.~\eqref{eq:md-update}):
\begin{equation}
x_{t+1}=\arg\min_{x\in\Pi}\left\{\eta_{t}\langle g_{t},x\rangle + D_{\Psi}(x,x_{t})\right\},
\end{equation}
where $\Pi$ is closed and convex, $\Psi$ is differentiable and $1$-strongly convex w.r.t.\ $|\cdot|_{1}$, and the dual norm is $|\cdot|_{\infty}$.

\begin{lemma}[Static MD inequality]
\label{lem:D1}
For any fixed $u\in\Pi$, the iterate pair $(x_{t},x_{t+1})$ generated by (D.1) satisfies
\begin{equation}
\langle g_{t},x_{t}-u\rangle
\le
\frac{D_{\Psi}(u,x_{t})-D_{\Psi}(u,x_{t+1})}{\eta_{t}}
+\frac{\eta_{t}}{2}|g_{t}|_{\infty}^{2}.
\end{equation}
\end{lemma}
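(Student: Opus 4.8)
The plan is to reproduce the standard one-step mirror-descent argument, which is essentially Lemma~\ref{lem:C1} of Appendix~\ref{app:C} specialized to the $\ell_1/\ell_\infty$ pairing. The two ingredients are the first-order optimality condition for the proximal update \eqref{eq:md-update} and the Bregman three-point identity from Appendix~\ref{app:A2}.

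First I would invoke first-order optimality: since $x_{t+1}$ minimizes $x\mapsto \eta_t\langle g_t,x\rangle + D_\Psi(x,x_t)$ over the closed convex set $\Pi$, the gradient of this objective at $x_{t+1}$ (which equals $\eta_t g_t + \nabla\Psi(x_{t+1})-\nabla\Psi(x_t)$) satisfies the variational inequality
\[
\bigl\langle \eta_t g_t + \nabla\Psi(x_{t+1})-\nabla\Psi(x_t),\ x-x_{t+1}\bigr\rangle \ge 0,\qquad \forall x\in\Pi.
\]
Setting $x=u$ and rearranging gives $\eta_t\langle g_t, x_{t+1}-u\rangle \le \langle \nabla\Psi(x_{t+1})-\nabla\Psi(x_t),\ u-x_{t+1}\rangle$. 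Then I would rewrite the right-hand side with the three-point identity applied to $(a,b,c)=(u,x_{t+1},x_t)$, obtaining exactly $D_\Psi(u,x_t)-D_\Psi(u,x_{t+1})-D_\Psi(x_{t+1},x_t)$.

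Next I would split $\langle g_t,x_t-u\rangle = \langle g_t,x_t-x_{t+1}\rangle + \langle g_t,x_{t+1}-u\rangle$, multiply by $\eta_t$, and substitute the bound just derived. The cross term is handled by H\"older's inequality, $\langle g_t,x_t-x_{t+1}\rangle \le \|g_t\|_\infty\|x_t-x_{t+1}\|_1$, followed by Young's inequality, yielding $\eta_t\langle g_t,x_t-x_{t+1}\rangle \le \tfrac{\eta_t^2}{2}\|g_t\|_\infty^2 + \tfrac12\|x_t-x_{t+1}\|_1^2$. Finally, $1$-strong convexity of $\Psi$ w.r.t.\ $\|\cdot\|_1$ gives $D_\Psi(x_{t+1},x_t)\ge \tfrac12\|x_{t+1}-x_t\|_1^2$, so the $-D_\Psi(x_{t+1},x_t)$ term cancels the leftover quadratic; dividing by $\eta_t>0$ produces the claimed inequality \eqref{lem:D1}.

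There is no genuine obstacle here — the argument is entirely routine. The only point deserving a word of care is well-posedness of the proximal update: existence and uniqueness of $x_{t+1}$, and validity of the stated first-order condition, which rely on closedness of $\Pi$ and strict convexity of $D_\Psi(\cdot,x_t)$, together with working on the truncated/smoothed domain $\Delta_{K,\varepsilon}$ so that $\nabla\Psi$ stays finite (as discussed in Appendix~\ref{app:A3}).
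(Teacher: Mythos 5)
Your proposal is correct and follows exactly the paper's route: the paper proves Lemma~\ref{lem:D1} by citing Lemma~\ref{lem:C1} with the $\ell_1/\ell_\infty$ pairing, and the proof of Lemma~\ref{lem:C1} is precisely the argument you give (first-order optimality, three-point identity, the $\langle g_t,x_t-x_{t+1}\rangle+\langle g_t,x_{t+1}-u\rangle$ split, H\"older--Young, and $1$-strong convexity to absorb $-D_\Psi(x_{t+1},x_t)$). Your closing remark on well-posedness via the truncated simplex is a sensible addition consistent with Appendix~\ref{app:A3}.
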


\begin{proof}
This is Lemma~\ref{lem:C1} instantiated with $|\cdot|=|\cdot|_{1}$ and $|\cdot|_{*}=|\cdot|_{\infty}$.
\end{proof}

\subsubsection{Summation with dynamic comparators: exact decomposition}
Let ${u_{t}}_{t=1}^{T}\subseteq\Pi$ be any comparator sequence. Apply Lemma~\ref{lem:D1} with $u=u_{t}$ and sum:
\begin{equation}
\sum_{t=1}^{T}\langle g_{t},x_{t}-u_{t}\rangle
\le
\sum_{t=1}^{T}\frac{D_{\Psi}(u_{t},x_{t})-D_{\Psi}(u_{t},x_{t+1})}{\eta_{t}}
+\sum_{t=1}^{T}\frac{\eta_{t}}{2}|g_{t}|_{\infty}^{2}.
\end{equation}
Define
\begin{equation}
S:=\sum_{t=1}^{T}\frac{D_{\Psi}(u_{t},x_{t})-D_{\Psi}(u_{t},x_{t+1})}{\eta_{t}}.
\end{equation}
Write $S$ as a difference of two sums and re-index the second:
\begin{align}
S
&=
\sum_{t=1}^{T}\frac{D_{\Psi}(u_{t},x_{t})}{\eta_{t}}
-\sum_{t=1}^{T}\frac{D_{\Psi}(u_{t},x_{t+1})}{\eta_{t}}
\nonumber\
&=
\frac{D_{\Psi}(u_{1},x_{1})}{\eta_{1}}
+\sum_{t=2}^{T}\frac{D_{\Psi}(u_{t},x_{t})}{\eta_{t}}
-\sum_{t=2}^{T}\frac{D_{\Psi}(u_{t-1},x_{t})}{\eta_{t-1}}
-\frac{D_{\Psi}(u_{T},x_{T+1})}{\eta_{T}}.
\end{align}
Insert and subtract $\frac{D_{\Psi}(u_{t},x_{t})}{\eta_{t-1}}$ inside the middle bracket:
\begin{align}
\sum_{t=2}^{T}\frac{D_{\Psi}(u_{t},x_{t})}{\eta_{t}}
-\sum_{t=2}^{T}\frac{D_{\Psi}(u_{t-1},x_{t})}{\eta_{t-1}}
&=
\sum_{t=2}^{T}D_{\Psi}(u_{t},x_{t})\Bigl(\frac{1}{\eta_{t}}-\frac{1}{\eta_{t-1}}\Bigr)
\nonumber\
&\quad
+\sum_{t=2}^{T}\frac{D_{\Psi}(u_{t},x_{t})-D_{\Psi}(u_{t-1},x_{t})}{\eta_{t-1}}.
\end{align}
Up to this point, all steps are identities.

\subsubsection{Controlling residuals under minimal assumptions}

\paragraph{Step-size monotonicity eliminates the step-size-change term.}
If $\eta_{t}$ is nondecreasing, then $(1/\eta_{t}-1/\eta_{t-1})\le 0$, and since $D_{\Psi}\ge 0$,
\begin{equation}
\sum_{t=2}^{T}D_{\Psi}(u_{t},x_{t})\Bigl(\frac{1}{\eta_{t}}-\frac{1}{\eta_{t-1}}\Bigr)\le 0.
\end{equation}
Hence it can be dropped in an upper bound.

\paragraph{Bounding the comparator-drift term by a path variation.}
For any fixed $x\in\Pi$,
\begin{align}
D_{\Psi}(u_{t},x)-D_{\Psi}(u_{t-1},x)
&=
\Psi(u_{t})-\Psi(u_{t-1})-\langle \nabla\Psi(x),u_{t}-u_{t-1}\rangle
\nonumber \\
&\le
\langle \nabla\Psi(u_{t})-\nabla\Psi(x),u_{t}-u_{t-1}\rangle
\le
|\nabla\Psi(x)|_{\infty},|u_{t}-u_{t-1}|_{1}.
\end{align}
Assuming $|\nabla\Psi(x)|_{\infty}\le G_{\Psi}$ for all $x\in\Pi$ (guaranteed by truncation; Appendix~\ref{app:E}), it follows that
\begin{equation}
D_{\Psi}(u_{t},x)-D_{\Psi}(u_{t-1},x)\le G_{\Psi}|u_{t}-u_{t-1}|_{1}.
\end{equation}

\subsubsection{Conclusion: Lemma~\ref{lem:dynamic-md}}
Combining (D.3)--(D.8), dropping the non-positive term (D.6) and the terminal nonnegative term $D_{\Psi}(u_{T},x_{T+1})/\eta_{T}$, yields
\begin{equation}
\sum_{t=1}^{T}\langle g_{t},x_{t}-u_{t}\rangle
\le
\frac{D_{\Psi}(u_{1},x_{1})}{\eta_{1}}
+\sum_{t=1}^{T}\frac{\eta_{t}}{2}|g_{t}|_{\infty}^{2}
+\sum_{t=2}^{T}\frac{G_{\Psi}}{\eta_{t-1}}|u_{t}-u_{t-1}|_{1},
\end{equation}
which is the stated dynamic-comparator mirror-descent inequality used as Lemma~\ref{lem:dynamic-md} in the main text.

\subsection{Entropy Bounds on the Truncated Simplex}\label{app:E}\label{a15}

This appendix records two basic facts used to track constants:
(i) $|\Psi(x)|\le \log K$ on $\Delta_{K}$; and
(ii) $|\nabla\Psi(x)|_{\infty}\le 1+|\log\varepsilon|$ on $\Delta_{K,\varepsilon}$.

\subsubsection{Boundedness: $|\Psi(x)|\le \log K$}
Recall $\Psi(x)=\sum_{i=1}^{K}x_{i}\log x_{i}$ on $\Delta_{K}$.
\begin{proposition}
\label{prop:E1}
For any $x\in\Delta_{K}$,
\begin{equation}
-\log K \le \Psi(x)\le 0,
\qquad \text{hence}\quad |\Psi(x)|\le \log K.
\end{equation}
\end{proposition}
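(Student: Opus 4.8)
The plan is to establish the two inequalities separately, each by a one-line argument, and then combine them. For the upper bound $\Psi(x)\le 0$: fix any $x\in\Delta_K$ and note that each coordinate satisfies $0\le x_i\le 1$, so $\log x_i\le 0$ and hence $x_i\log x_i\le 0$ (with the convention $0\log 0:=0$, for which the inequality trivially holds). Summing over $i\in[K]$ gives $\Psi(x)=\sum_i x_i\log x_i\le 0$, with equality precisely when every $x_i\in\{0,1\}$, i.e.\ at the vertices $e_j$.

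For the lower bound $\Psi(x)\ge-\log K$, the cleanest route is to invoke non-negativity of the Kullback--Leibler divergence against the uniform distribution $\mathbf{1}/K$. Indeed, using the Bregman--KL identity from Appendix~\ref{app:A2},
\begin{equation}
0\le \mathrm{KL}\!\left(x\,\big\|\,\tfrac{1}{K}\mathbf{1}\right)=\sum_{i=1}^K x_i\log\frac{x_i}{1/K}=\sum_{i=1}^K x_i\log x_i+\log K\sum_{i=1}^K x_i=\Psi(x)+\log K,
\end{equation}
so $\Psi(x)\ge-\log K$, with equality iff $x=\mathbf{1}/K$. (An equivalent derivation applies Jensen's inequality to the concave function $\log$: restricting the sum to the support $\{i:x_i>0\}$, $H(x)=\sum_{i:x_i>0} x_i\log(1/x_i)\le\log\big(\sum_{i:x_i>0} x_i\cdot 1/x_i\big)=\log|\mathrm{supp}(x)|\le\log K$.) Combining the two bounds yields $-\log K\le\Psi(x)\le 0$, and since both endpoints lie in $[-\log K,0]$ we conclude $|\Psi(x)|\le\log K$.

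There is no real obstacle here; the statement is a textbook fact. The only point requiring a word of care is the treatment of coordinates with $x_i=0$: the convention $0\log 0=0$ makes $\Psi$ continuous on all of $\Delta_K$, and in the Jensen-based argument one must restrict attention to the support (the KL-based argument sidesteps this since $x_i=0$ terms contribute nothing to $\sum x_i\log x_i$ and the uniform reference has full support, so the divergence is finite). I would present the KL argument as the main one for this reason, relegating the Jensen alternative to a parenthetical remark.
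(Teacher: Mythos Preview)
Your proof is correct and essentially matches the paper's. The upper bound argument is identical; for the lower bound the paper applies Jensen directly to the convex function $\phi(z)=z\log z$ (with $\tfrac{1}{K}\sum_i x_i=1/K$), while you phrase the same inequality as non-negativity of $\mathrm{KL}(x\,\|\,\mathbf{1}/K)$, which is an equivalent reformulation.
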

\begin{proof}
Since $x_{i}\in[0,1]$, $\log x_{i}\le 0$ and thus $x_{i}\log x_{i}\le 0$, giving $\Psi(x)\le 0$.
For the lower bound, $\phi(z)=z\log z$ is convex on $z>0$. Under $\sum_{i}x_{i}=1$, Jensen (or KKT) shows the minimum is attained at $x_{i}=1/K$, giving $\Psi(\mathbf{1}/K)=-\log K$.
\end{proof}

A direct corollary used in the main text is: for any $\lambda\ge 0$ and $x,u\in\Delta_{K}$,
\begin{equation}
-\lambda\bigl(\Psi(x)-\Psi(u)\bigr)
\le \lambda|\Psi(x)|+\lambda|\Psi(u)|
\le 2\lambda\log K.
\end{equation}

\subsubsection{Gradient bound on $\Delta_{K,\varepsilon}$}
Consider $\Delta_{K,\varepsilon}={x\in\Delta_{K}:x_{i}\ge \varepsilon,\ \forall i}$ with $\varepsilon\in(0,1/K]$. For $x\in\Delta_{K,\varepsilon}$,
\begin{equation}
\frac{\partial \Psi(x)}{\partial x_{i}}=1+\log x_{i}.
\end{equation}

\begin{proposition}[Gradient $\ell_{\infty}$ bound]
\label{prop:E2}
For any $x\in\Delta_{K,\varepsilon}$,
\begin{equation}
|\nabla\Psi(x)|_{\infty}=\max_{i\in[K]}|1+\log x_{i}|
\le 1+|\log\varepsilon|.
\end{equation}
\end{proposition}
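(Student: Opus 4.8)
The plan is to reduce the statement to a one–dimensional estimate, coordinate by coordinate, and then take a maximum. First I would note that on $\Delta_{K,\varepsilon}$ every coordinate is strictly positive, so $\Psi$ is differentiable there, and the partial derivative computed just above the proposition, $\partial\Psi(x)/\partial x_i = 1+\log x_i$, is valid at every such $x$. Consequently $\|\nabla\Psi(x)\|_\infty = \max_{i\in[K]}\lvert 1+\log x_i\rvert$, so it is enough to bound $\lvert 1+\log x_i\rvert$ uniformly in $i\in[K]$ and in $x\in\Delta_{K,\varepsilon}$.

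Next I would invoke the two-sided box constraint $\varepsilon \le x_i \le 1$ that holds for every $x\in\Delta_{K,\varepsilon}$: the lower bound is the defining constraint of the truncated simplex, and the upper bound follows because $\sum_{j}x_j=1$ with all entries nonnegative. Monotonicity of $\log$ then gives $\log\varepsilon \le \log x_i \le 0$, hence $1+\log x_i \in [\,1+\log\varepsilon,\;1\,]$ for all such $x$ and $i$.

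Finally I would pass from this interval containment to the absolute-value bound. Since $\varepsilon\in(0,1/K]$ we have $\log\varepsilon\le 0$, i.e.\ $|\log\varepsilon| = -\log\varepsilon$; the two endpoints of the interval have moduli $|1| = 1$ and $|1+\log\varepsilon| = \lvert 1-|\log\varepsilon|\rvert \le 1+|\log\varepsilon|$ by the triangle inequality. Because $t\mapsto |t|$ is convex, its maximum over the closed interval $[\,1+\log\varepsilon,\,1\,]$ is attained at an endpoint, so $\lvert 1+\log x_i\rvert \le \max\{\,1,\;1+|\log\varepsilon|\,\} = 1+|\log\varepsilon|$. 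Taking the maximum over $i\in[K]$ yields $\|\nabla\Psi(x)\|_\infty \le 1+|\log\varepsilon|$, as claimed.

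I do not expect any genuine obstacle: the argument is entirely elementary. The only step deserving a line of care is the last one — producing a bound on the \emph{modulus} $\lvert 1+\log x_i\rvert$ rather than a one-sided bound — and this is dispatched by the observation that the positive endpoint contributes only $1$ while the negative endpoint is controlled by $1+|\log\varepsilon|$ via the triangle inequality.
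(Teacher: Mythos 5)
Your proof is correct and takes essentially the same route as the paper: both derive $\log\varepsilon\le\log x_i\le 0$ from the box constraint $\varepsilon\le x_i\le 1$ and then pass to the modulus. The paper's last step is marginally shorter (it notes $|\log x_i|\le|\log\varepsilon|$ and applies the triangle inequality directly, rather than your endpoint/convexity argument), but the content is identical.
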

\begin{proof}
Since $x_{i}\ge \varepsilon$, $\log x_{i}\ge \log\varepsilon$; since $x_{i}\le 1$, $\log x_{i}\le 0$. Hence $|\log x_{i}|\le |\log\varepsilon|$ and therefore $|1+\log x_{i}|\le 1+|\log\varepsilon|$ for all $i$.
\end{proof}

Define
\begin{equation}
G_{\Psi}:=\sup_{x\in\Delta_{K,\varepsilon}}|\nabla\Psi(x)|_{\infty}\le 1+|\log\varepsilon|.
\end{equation}
Choosing $\varepsilon=T^{-c}$ implies $|\log\varepsilon|=O(\log T)$, so truncation affects only logarithmic factors.

\subsection{Proof Details for Theorem~\ref{thm:tradeoff}}\label{app:F}\label{a16}

This appendix reorganizes the intermediate bound (main text Eq.~\eqref{eq:dyn-regret-master}) into the per-round trade-off form (main text Eq.~\eqref{eq:tradeoff-bound}), and makes the constants $C_{0},C_{1},C_{2}$ explicit.

\subsubsection{Starting point}
Recall the intermediate bound: for any comparator sequence $u_{1:T}\subset\Pi$,
\begin{align}
\mathrm{Reg}^{\mathrm{dyn}}_{T}(u_{1:T})
&\le
\frac{D_{\Psi}(u_{1},x_{1})}{\eta_{1}}
+\sum_{t=1}^{T}\frac{\eta_{t}}{2}|\nabla \ell_{t}(x_{t})|_{\infty}^{2}
+\sum_{t=2}^{T}\frac{2G_{\Psi}}{\eta_{t}}|u_{t}-u_{t-1}|_{1}
+2\log K\sum_{t=1}^{T}\lambda_{t},
\end{align}
where $\ell_{t}(x)=f_{t}(x)+\lambda_{t}\Psi(x)$, and $|\nabla f_{t}(x)|_{\infty}\le G$.

The target form is
\begin{equation}
\mathrm{Reg}^{\mathrm{dyn}}_{T}(u_{1:T})
\le
C_{0}+\sum_{t=2}^{T}\Bigl(C_{1}\frac{\driftt{t}}{\lambda_{t}}+C_{2}\lambda_{t}\Bigr),
\qquad \driftt{t}:=|u_{t}-u_{t-1}|_{1}.
\end{equation}

\subsubsection{Coupling $\eta_{t}=c\lambda_{t}$ yields $\driftt{t}/\lambda_{t}$}
Let
\begin{equation}
\eta_{t}=c\lambda_{t},\qquad c>0.
\end{equation}
Then the drift term in (F.1) becomes
\begin{equation}
\sum_{t=2}^{T}\frac{2G_{\Psi}}{\eta_{t}}|u_{t}-u_{t-1}|_{1}
\sum_{t=2}^{T}\frac{2G_{\Psi}}{c}\frac{\alpha_{t}}{\lambda_{t}},
\end{equation}
so one may take
\begin{equation}
C_{1}:=\frac{2G_{\Psi}}{c}.
\end{equation}

\subsubsection{Linearizing the gradient term via clipping}
Since $\nabla \ell_{t}=\nabla f_{t}+\lambda_{t}\nabla\Psi$, the triangle inequality gives
\begin{equation}
|\nabla \ell_{t}(x_{t})|_{\infty}\le |\nabla f_{t}(x_{t})|_{\infty}+\lambda_{t}|\nabla\Psi(x_{t})|_{\infty}
\le G+\lambda_{t}G_{\Psi}.
\end{equation}
Substitute (F.6) into (F.1) and use (F.3):
\begin{equation}
\sum_{t=1}^{T}\frac{\eta_{t}}{2}|\nabla \ell_{t}(x_{t})|_{\infty}^{2}
\le
\sum_{t=1}^{T}\frac{c\lambda_{t}}{2}\bigl(G+\lambda_{t}G_{\Psi}\bigr)^{2}.
\end{equation}
Assume a standard upper clipping:
\begin{equation}
0<\lambda_{t}\le \lambda_{\max},\qquad \forall t.
\end{equation}
Then for all $t$,
\begin{equation}
(G+\lambda_{t}G_{\Psi})^{2}
\le
\bigl(G+\lambda_{\max}G_{\Psi}\bigr)^{2}
=:\ M^{2}.
\end{equation}
Therefore,
\begin{equation}
\sum_{t=1}^{T}\frac{\eta_{t}}{2}|\nabla \ell_{t}(x_{t})|_{\infty}^{2}
\le
\sum_{t=1}^{T}\frac{c\lambda_{t}}{2}M^{2}
\frac{cM^{2}}{2}\sum_{t=1}^{T}\lambda_{t}.
\end{equation}

\subsubsection{Initial term and a lower clipping $\lambda_{\min}$}
The initial term in (F.1) is $D_{\Psi}(u_{1},x_{1})/\eta_{1}$. For negative entropy, if $x_{1}=\mathbf{1}/K$, then for any $u_{1}\in\Delta_{K}$,
\begin{equation}
D_{\Psi}(u_{1},x_{1})=\mathrm{KL}(u_{1}|\mathbf{1}/K)\le \log K.
\end{equation}
If a lower clipping is also enforced,
\begin{equation}
\lambda_{t}\ge \lambda_{\min}>0,\qquad \forall t,
\end{equation}
then $\eta_{1}=c\lambda_{1}\ge c\lambda_{\min}$ and hence
\begin{equation}
\frac{D_{\Psi}(u_{1},x_{1})}{\eta_{1}}
\le
\frac{\log K}{c\lambda_{\min}}.
\end{equation}

\subsubsection{Final constants}
Combine (F.4), (F.10), and the entropy compensation term $2\log K\sum_{t=1}^{T}\lambda_{t}$ in (F.1), and use (F.13) for the initial term:
\begin{align}
\mathrm{Reg}^{\mathrm{dyn}}_{T}(u_{1:T})
&\le
\underbrace{\frac{\log K}{c\lambda_{\min}}}_{=:C_{0}}
+\sum_{t=2}^{T}\underbrace{\frac{2G_{\Psi}}{c}}_{=:C_{1}}\frac{\alpha_{t}}{\lambda_{t}}
+\underbrace{\Bigl(\frac{c}{2}(G+\lambda_{\max}G_{\Psi})^{2}+2\log K\Bigr)}_{=:C_{2}}\sum_{t=1}^{T}\lambda_{t}.
\end{align}
Absorbing the $t=1$ term of $\sum_{t=1}^{T}\lambda_{t}$ into $C_{0}$ yields the desired per-round form (F.2). In particular, a concrete choice is:
\begin{equation}
\boxed{
C_{1}=\frac{2G_{\Psi}}{c},\qquad
C_{2}=\frac{c}{2}(G+\lambda_{\max}G_{\Psi})^{2}+2\log K,\qquad
C_{0}=\frac{\log K}{c\lambda_{\min}}+C_{2}\lambda_{1}.
}
\end{equation}
For negative entropy on $\Delta_{K,\varepsilon}$, Appendix~\ref{app:E} gives $G_{\Psi}\le 1+|\log\varepsilon|$.

\subsection{Optimal Offline Constant Choice of $\lambda$}\label{app:G}\label{a17}

Starting from the trade-off bound
\begin{equation}
\mathrm{Reg}^{\mathrm{dyn}}_{T}(u_{1:T})
\le
C_{0}+\sum_{t=2}^{T}\Bigl(C_{1}\frac{\alpha_{t}}{\lambda_{t}}+C_{2}\lambda_{t}\Bigr),
\qquad \alpha_{t}:=|u_{t}-u_{t-1}|_{1},
\end{equation}
this appendix derives the best offline constant choice $\lambda_{t}\equiv \lambda$, proves the offline constant minimizer (Section~\ref{sec:theory}), and relates it to the per-round oracle minimizer.

\subsubsection{Reduction under $\lambda_{t}\equiv \lambda$}
If $\lambda_{t}\equiv \lambda$,
\begin{equation}
\sum_{t=2}^{T}\frac{\alpha_{t}}{\lambda_{t}}
\frac{1}{\lambda}\sum_{t=2}^{T}\alpha_{t}
\frac{A_{T}}{\lambda},
\qquad
\sum_{t=2}^{T}\lambda_{t}=(T-1)\lambda,
\end{equation}
where $A_{T}:=\sum_{t=2}^{T}\alpha_{t}$. Plugging into (G.1) yields
\begin{equation}
\mathrm{Reg}^{\mathrm{dyn}}_{T}(u_{1:T})
\le
C_{0}+C_{1}\frac{A_{T}}{\lambda}+C_{2}(T-1)\lambda
\lesssim
C_{0}+C_{1}\frac{A_{T}}{\lambda}+C_{2}T\lambda.
\end{equation}

\subsubsection{One-dimensional minimization and (3.3)}
Consider $\phi(\lambda)=C_{1}A_{T}/\lambda + C_{2}T\lambda$ for $\lambda>0$. Then
\begin{equation}
\phi'(\lambda)=-C_{1}\frac{A_{T}}{\lambda^{2}}+C_{2}T,
\end{equation}
so the minimizer is
\begin{equation}
\lambda^{\mathrm{off}}=\sqrt{\frac{C_{1}}{C_{2}}\cdot\frac{A_{T}}{T}}.
\end{equation}
Substituting gives
\begin{equation}
\phi(\lambda^{\mathrm{off}})=2\sqrt{C_{1}C_{2},A_{T}T},
\end{equation}
and therefore
\begin{equation}
\mathrm{Reg}^{\mathrm{dyn}}_{T}(u_{1:T})
\le
C_{0}+2\sqrt{C_{1}C_{2},A_{T}T}.
\end{equation}

\subsubsection{Relation to the per-round oracle (Theorem~\ref{thm:oracle-lambda})}
The per-round oracle minimizes $C_{1}\alpha_{t}/\lambda + C_{2}\lambda$ and yields
\begin{equation}
\lambda_{t}^{\star}=\sqrt{\frac{C_{1}}{C_{2}}\alpha_{t}}.
\end{equation}
Its induced bound scales with $\sum_{t}\sqrt{\alpha_{t}}$, and Cauchy--Schwarz implies
\begin{equation}
\sum_{t=2}^{T}\sqrt{\alpha_{t}}
\le
\sqrt{(T-1)\sum_{t=2}^{T}\alpha_{t}}
\lesssim
\sqrt{A_{T}T}.
\end{equation}
Hence both $\lambda^{\mathrm{off}}$ and $\lambda_{t}^{\star}$ achieve the same principal order $\sqrt{A_{T}T}$, with $\lambda_{t}^{\star}$ adapting to non-uniform variation patterns.

\subsection{Two Summation Inequalities for Theorem~\ref{thm:online-aes}}\label{app:H}\label{a18}

Let ${\drifthatT{1}}^{T}$ be a nonnegative sequence (an online proxy of nonstationarity), and define prefix sums
\begin{equation}
\widehat{A}_{t}:=\sum_{i=1}^{t}\drifthatT{i},\qquad \widehat{A}_{0}:=0.
\end{equation}
To avoid division by zero, adopt the convention: if $\widehat{A}_{t}=0$, then $\drifthatT{t}=0$, and any expression $\drifthatT{t}/\sqrt{\widehat{A}_{t}}$ is interpreted as $0$.

\subsubsection{Square-root potential trick}
\begin{lemma}
\label{lem:H1}
For any nonnegative sequence ${\drifthatT{t}}$ and $\widehat{A}_{t}$ as in (H.1),
\begin{equation}
\sum_{t=1}^{T}\frac{\drifthatT{t}}{\sqrt{\widehat{A}_{t}}}
\le
2\sqrt{\widehat{A}_{T}}.
\end{equation}
\end{lemma}
\begin{proof}
For any $t$ with $\widehat{A}_{t}>0$,
\begin{equation}
\sqrt{\widehat{A}_{t}}-\sqrt{\widehat{A}_{t-1}}
\frac{\widehat{A}_{t}-\widehat{A}_{t-1}}{\sqrt{\widehat{A}_{t}}+\sqrt{\widehat{A}_{t-1}}}
\frac{\drifthatT{t}}{\sqrt{\widehat{A}_{t}}+\sqrt{\widehat{A}_{t-1}}}
\ge
\frac{\drifthatT{t}}{2\sqrt{\widehat{A}_{t}}}.
\end{equation}
Rearranging gives $\drifthatT{t}/\sqrt{\widehat{A}_{t}}\le 2(\sqrt{\widehat{A}_{t}}-\sqrt{\widehat{A}_{t-1}})$. Summing telescopes to $2\sqrt{\widehat{A}_{T}}$.
\end{proof}

\subsubsection{Bounding $\sum_{t}\sqrt{\widehat{A}_{t}/t}$}
\begin{lemma}
\label{lem:H2}
For any nonnegative sequence ${\drifthatT{t}}$ and $\widehat{A}_{t}$ as in (H.1),
\begin{equation}
\sum_{t=1}^{T}\sqrt{\frac{\widehat{A}_{t}}{t}}
\le
2\sqrt{T\widehat{A}_{T}}.
\end{equation}
\end{lemma}
\begin{proof}
Since $\widehat{A}_{t}$ is nondecreasing, $\widehat{A}_{t}\le \widehat{A}_{T}$ for all $t$, hence
\begin{equation}
\sum_{t=1}^{T}\sqrt{\frac{\widehat{A}_{t}}{t}}
\le
\sqrt{\widehat{A}_{T}}\sum_{t=1}^{T}\frac{1}{\sqrt{t}}.
\end{equation}
Using the integral comparison $\sum_{t=1}^{T}t^{-1/2}\le 1+\int_{1}^{T}x^{-1/2},dx = 1+2(\sqrt{T}-1)\le 2\sqrt{T}$ yields (H.3).
\end{proof}

\subsection{Effect of Clipping the Schedule}\label{app:I}\label{a19}

\subsubsection{Clipping introduces only controllable additive terms}
Suppose an online scheduler produces a ``raw'' temperature $\lambda_{t}^{\mathrm{raw}}>0$, and the implementation clips it to
\begin{equation}
\lambda_{t}:=\Pi_{[\lambda_{\min},\lambda_{\max}]}(\lambda_{t}^{\mathrm{raw}})
\min{\lambda_{\max},\max{\lambda_{\min},\lambda_{t}^{\mathrm{raw}}}}.
\end{equation}
This appendix shows that replacing $\lambda_{t}^{\mathrm{raw}}$ by $\lambda_{t}$ preserves the main trade-off bound, up to an additive endpoint compensation term.

\paragraph{Pointwise inequalities}
Since $\drifthatT{t}\ge 0$, the map $\lambda\mapsto \drifthatT{t}/\lambda$ is decreasing on $\lambda>0$, while $\lambda\mapsto \lambda$ is increasing. The following two pointwise bounds suffice:

\begin{lemma}
\label{lem:Ipoint}
For any $t$,
\begin{equation}\label{1}
\frac{\drifthatT{t}}{\lambda_{t}}
\le
\frac{\drifthatT{t}}{\lambda_{t}^{\mathrm{raw}}}
+\frac{\drifthatT{t}}{\lambda_{\max}},
\end{equation}
and
\begin{equation}\label{2}
\lambda_{t}\le \lambda_{t}^{\mathrm{raw}}+\lambda_{\min}.
\end{equation}
\end{lemma}

\begin{proof}
If $\lambda_{t}^{\mathrm{raw}}\le \lambda_{\max}$, then $\lambda_{t}\ge \lambda_{t}^{\mathrm{raw}}$, hence $\drifthatT{t}/\lambda_{t}\le \drifthatT{t}/\lambda_{t}^{\mathrm{raw}}$. If $\lambda_{t}^{\mathrm{raw}}>\lambda_{\max}$, then $\lambda_{t}=\lambda_{\max}$ and $\drifthatT{t}/\lambda_{t}=\drifthatT{t}/\lambda_{\max}\le \drifthatT{t}/\lambda_{t}^{\mathrm{raw}}+\drifthatT{t}/\lambda_{\max}$.

If $\lambda_{t}^{\mathrm{raw}}\ge \lambda_{\min}$, then $\lambda_{t}\le \lambda_{t}^{\mathrm{raw}}$, so $\lambda_{t}\le \lambda_{t}^{\mathrm{raw}}+\lambda_{\min}$. If $\lambda_{t}^{\mathrm{raw}}<\lambda_{\min}$, then $\lambda_{t}=\lambda_{\min}\le \lambda_{t}^{\mathrm{raw}}+\lambda_{\min}$.
\end{proof}

\paragraph{Summing yields the compensation term}
Multiply ~\ref{1}-~\ref{2} by constants $C_{1},C_{2}$ and sum:
\begin{align}
\sum_{t=1}^{T} C_{1}\frac{\drifthatT{t}}{\lambda_{t}}
&\le
\sum_{t=1}^{T} C_{1}\frac{\drifthatT{t}}{\lambda_{t}^{\mathrm{raw}}}
+\sum_{t=1}^{T} C_{1}\frac{\drifthatT{t}}{\lambda_{\max}}
\sum_{t=1}^{T} C_{1}\frac{\drifthatT{t}}{\lambda_{t}^{\mathrm{raw}}}
+\frac{C_{1}\widehat{A}_{T}}{\lambda_{\max}},
\\begin{equation}4mm]
\sum_{t=1}^{T} C_{2}\lambda_{t}
&\le
\sum_{t=1}^{T} C_{2}\lambda_{t}^{\mathrm{raw}}
+\sum_{t=1}^{T} C_{2}\lambda_{\min}
\sum_{t=1}^{T} C_{2}\lambda_{t}^{\mathrm{raw}}
+ C_{2}T\lambda_{\min}.
\end{align}
  Adding gives
  \begin{equation}
  \sum_{t=1}^{T}\Bigl(C_{1}\frac{\drifthatT{t}}{\lambda_{t}}+C_{2}\lambda_{t}\Bigr)
  \le
  \sum_{t=1}^{T}\Bigl(C_{1}\frac{\drifthatT{t}}{\lambda_{t}^{\mathrm{raw}}}+C_{2}\lambda_{t}^{\mathrm{raw}}\Bigr)
  +\frac{C_{1}\widehat{A}_{T}}{\lambda_{\max}}+C_{2}T\lambda_{\min}.
  \end{equation}
  Thus clipping preserves the main bound and introduces only the controllable additive endpoints
  $\frac{C_{1}\widehat{A}_{T}}{\lambda_{\max}}+C_{2}T\lambda_{\min}$.

If the main bound includes an initial term such as $D_{\Psi}(u_{1},x_{1})/\eta_{1}$, then $\lambda_{1}\ge \lambda_{\min}$ (hence $\eta_{1}=c\lambda_{1}\ge c\lambda_{\min}$) implies this initial term is also bounded by a constant, e.g., $\log K/(c\lambda_{\min})$ as in Appendix~\ref{app:F}.

\subsubsection{Why clipping is necessary in RL implementations (stability principles)}
In continuous-control off-policy algorithms, the proxy $\drifthatT{t}$ (e.g., critic drift, TD-residual-based surrogates) is noisy and can exhibit:
(i) \textbf{spikes}, which would drive $\lambda_{t}$ excessively large and destabilize both the actor entropy regularization and the critic target scaling; and
(ii) \textbf{near-zero plateaus}, which would collapse $\lambda_{t}$ toward $0$, making the policy nearly deterministic, harming exploration and adaptation when the environment changes.

Therefore, clipping is not an ad hoc heuristic: it is a stabilization step that turns an idealized scheduler into a numerically trainable system. The theory above formalizes that clipping changes only constants and adds a transparent endpoint compensation term.

\subsection{Proof of Eq.~\eqref{eq:qstar-sensitivity}}\label{app:J}\label{a20}

This appendix establishes a Lipschitz-type sensitivity bound for the soft-optimal action-value function under one-step MDP perturbations. Consider the discounted soft MDP at round $t$,
\begin{equation}
\mathcal{M}_t := (\mathcal{S},\mathcal{A},P_t,r_t,\gamma),\qquad \gamma\in(0,1),
\end{equation}
with entropy weight $\mu>0$. For notational simplicity, $\mathcal{A}$ is assumed finite in this appendix.

\subsubsection{Soft Bellman optimality operator (log-sum-exp form)}\label{app:J:bellman}
For any $Q:\mathcal{S}\times\mathcal{A}\to\mathbb{R}$, define the induced soft state-value
\begin{equation}
V_Q(s) := \mu\log\sum_{a\in\mathcal{A}}\exp\!\left(\frac{Q(s,a)}{\mu}\right).
\label{eq:J1}
\end{equation}
The soft Bellman optimality operator at round $t$ is
\begin{equation}
(\mathcal{T}_t Q)(s,a) := r_t(s,a)+\gamma\,\mathbb{E}_{s'\sim P_t(\cdot\mid s,a)}\!\big[V_Q(s')\big],\qquad \forall(s,a)\in\mathcal{S}\times\mathcal{A}.
\label{eq:J2}
\end{equation}
The soft-optimal action-value $Q_t^\star$ is the unique fixed point:
\begin{equation}
Q_t^\star = \mathcal{T}_t Q_t^\star.
\label{eq:J3}
\end{equation}

\subsubsection{A key lemma: log-sum-exp is $1$-Lipschitz in $\|\cdot\|_\infty$}\label{app:J:lse}
\begin{lemma}[Log-sum-exp is $1$-Lipschitz under $\|\cdot\|_\infty$]\label{lem:J1}
For any $x,y\in\mathbb{R}^{|\mathcal{A}|}$ and any $\mu>0$,
\begin{equation}
\Big|\mu\log\sum_{a}e^{x_a/\mu}-\mu\log\sum_{a}e^{y_a/\mu}\Big|
\le \|x-y\|_\infty.
\label{eq:J4}
\end{equation}
Consequently, for any $Q,Q'$ and any $s\in\mathcal{S}$,
\begin{equation}
|V_Q(s)-V_{Q'}(s)|\le \|Q(s,\cdot)-Q'(s,\cdot)\|_\infty \le \|Q-Q'\|_\infty.
\label{eq:J5}
\end{equation}
\end{lemma}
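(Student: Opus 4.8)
The plan is to prove the scalar inequality \eqref{eq:J4} directly via a monotone ``pointwise domination'' argument, and then read off \eqref{eq:J5} as an immediate corollary since $V_Q(s)$ is exactly the log-sum-exp functional applied to the row $Q(s,\cdot)$. An equivalent route, which I would mention as a remark, is to note that the gradient of $x\mapsto\mu\log\sum_a e^{x_a/\mu}$ is the softmax vector $\mathrm{softmax}(x/\mu)$, which lies in the simplex and therefore has $\ell_1$-norm exactly $1$; the $\ell_1/\ell_\infty$ duality recorded in Appendix~\ref{app:A1} then gives $1$-Lipschitzness in $\|\cdot\|_\infty$ via the mean value theorem. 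I will present the elementary version as the main proof because it avoids differentiability bookkeeping.

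\textbf{Key steps.} First set $\delta:=\|x-y\|_\infty$, so that $x_a\le y_a+\delta$ for every $a\in\mathcal{A}$. Exponentiating (the map $z\mapsto e^{z/\mu}$ is increasing for $\mu>0$) yields $e^{x_a/\mu}\le e^{\delta/\mu}e^{y_a/\mu}$; summing over $a$ and using $e^{\delta/\mu}>0$ gives $\sum_a e^{x_a/\mu}\le e^{\delta/\mu}\sum_a e^{y_a/\mu}$. Taking natural logarithms and multiplying by $\mu>0$ produces
\begin{equation}
\mu\log\sum_a e^{x_a/\mu}\le \delta+\mu\log\sum_a e^{y_a/\mu}.
\end{equation}
Swapping the roles of $x$ and $y$ (using $y_a\le x_a+\delta$) gives the reverse inequality, and combining the two establishes \eqref{eq:J4}. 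Then, since $V_Q(s)=\mu\log\sum_a e^{Q(s,a)/\mu}$ by \eqref{eq:J1}, applying \eqref{eq:J4} with $x=Q(s,\cdot)$ and $y=Q'(s,\cdot)$ gives $|V_Q(s)-V_{Q'}(s)|\le\|Q(s,\cdot)-Q'(s,\cdot)\|_\infty$; and since the per-state sup-norm of a row is bounded by the sup-norm over all $(s,a)$ pairs, this is $\le\|Q-Q'\|_\infty$, which is \eqref{eq:J5}.

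\textbf{Main obstacle.} There is no real obstacle here — the argument is a two-line monotonicity estimate — so the ``hard part'' is essentially bookkeeping: making sure the finiteness of $\mathcal{A}$ (assumed in this appendix) is invoked so the sums are finite, that the convention handling $\mu>0$ is used consistently, and that the passage from the per-state bound to the global $\|\cdot\|_\infty$ bound is stated cleanly. I would also note explicitly that this lemma is exactly what makes $\mathcal{T}_t$ in \eqref{eq:J2} a $\gamma$-contraction in $\|\cdot\|_\infty$, since $(\mathcal{T}_tQ-\mathcal{T}_tQ')(s,a)=\gamma\,\mathbb{E}_{s'\sim P_t(\cdot\mid s,a)}[V_Q(s')-V_{Q'}(s')]$ and \eqref{eq:J5} bounds the bracket uniformly by $\|Q-Q'\|_\infty$ — this is the hook used in the subsequent fixed-point perturbation argument for \eqref{eq:qstar-sensitivity}.
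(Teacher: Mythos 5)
Your proof is correct, but it takes a different (and more elementary) route than the paper's. The paper proves \eqref{eq:J4} by the gradient argument you relegate to a remark: it writes $g(z)=\mu\log\sum_a e^{z_a/\mu}$, observes that $\nabla g(z)=\mathrm{softmax}(z/\mu)$ lies in the simplex so $\|\nabla g(z)\|_1=1$, and then applies the mean value theorem together with H\"older's inequality to get $|g(x)-g(y)|\le\|\nabla g(\xi)\|_1\,\|x-y\|_\infty=\|x-y\|_\infty$. Your main argument instead uses pointwise domination: from $x_a\le y_a+\delta$ with $\delta=\|x-y\|_\infty$, monotonicity of $z\mapsto e^{z/\mu}$ gives $\sum_a e^{x_a/\mu}\le e^{\delta/\mu}\sum_a e^{y_a/\mu}$, and taking logs plus symmetry yields the bound. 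Both are complete and rigorous; yours avoids any appeal to differentiability and works verbatim even for countably infinite $\mathcal{A}$ with convergent sums, while the paper's version has the advantage of reusing the same softmax-gradient computation that reappears in the softmax Lipschitz lemma of Appendix~\ref{app:K}, keeping the two appendices stylistically parallel. Your derivation of \eqref{eq:J5} from \eqref{eq:J4} (statewise application followed by bounding the row sup-norm by the global sup-norm) matches the paper exactly.
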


\begin{proof}
Let $g(z):=\mu\log\sum_a e^{z_a/\mu}$. Then $\nabla g(z)=\mathrm{softmax}(z/\mu)$ has nonnegative coordinates summing to $1$, hence $\|\nabla g(z)\|_1=1$. By the mean value theorem,
$g(x)-g(y)=\langle \nabla g(\xi),x-y\rangle$ for some $\xi$ on the segment between $x$ and $y$. Hölder's inequality yields
$|g(x)-g(y)|\le \|\nabla g(\xi)\|_1\|x-y\|_\infty=\|x-y\|_\infty$.
Applying this pointwise to $x=Q(s,\cdot)$ and $y=Q'(s,\cdot)$ gives \eqref{eq:J5}.
\end{proof}

\subsubsection{$\mathcal{T}_t$ is a $\gamma$-contraction in $\|\cdot\|_\infty$}\label{app:J:contraction}
\begin{lemma}[$\gamma$-contraction]\label{lem:J2}
For any $t$ and any $Q,Q'$,
\begin{equation}
\|\mathcal{T}_tQ-\mathcal{T}_tQ'\|_\infty \le \gamma\|Q-Q'\|_\infty.
\label{eq:J6}
\end{equation}
\end{lemma}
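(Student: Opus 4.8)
The plan is to reduce the contraction claim directly to the log-sum-exp Lipschitz bound already established in Lemma~\ref{lem:J1}. First I would fix an arbitrary state-action pair $(s,a)\in\mathcal{S}\times\mathcal{A}$ and subtract the two Bellman images using the definition \eqref{eq:J2}: the reward term $r_t(s,a)$ is common to $\mathcal{T}_tQ$ and $\mathcal{T}_tQ'$, so it cancels, leaving
\[
(\mathcal{T}_tQ)(s,a)-(\mathcal{T}_tQ')(s,a)=\gamma\,\mathbb{E}_{s'\sim P_t(\cdot\mid s,a)}\big[V_Q(s')-V_{Q'}(s')\big].
\]

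Next I would take absolute values and move them inside the expectation via the triangle inequality for expectations, then apply the pointwise bound \eqref{eq:J5} from Lemma~\ref{lem:J1}, namely $|V_Q(s')-V_{Q'}(s')|\le\|Q-Q'\|_\infty$ for every $s'$. Since the integrand is uniformly bounded by the constant $\|Q-Q'\|_\infty$ and $P_t(\cdot\mid s,a)$ is a probability measure, the expectation is bounded by the same constant, so $|(\mathcal{T}_tQ)(s,a)-(\mathcal{T}_tQ')(s,a)|\le\gamma\|Q-Q'\|_\infty$. Taking the supremum over all $(s,a)$ on the left-hand side then yields \eqref{eq:J6}.

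The argument uses nothing beyond $\gamma\in(0,1)$ and the finiteness of $\mathcal{A}$ (both already in force), and the only nontrivial ingredient---that the soft value map $V_{(\cdot)}$ is nonexpansive in $\|\cdot\|_\infty$---is exactly Lemma~\ref{lem:J1}, so there is no genuine obstacle. The single point worth stating carefully is that the Lipschitz constant of the log-sum-exp/soft-max operator is \emph{exactly} $1$ (independent of $\mu$ and of $|\mathcal{A}|$); this is what ensures $V_Q$ does not inflate the discount factor, so that $\mathcal{T}_t$ is a genuine $\gamma$-contraction and not merely a $\gamma'$-contraction for some $\gamma'>\gamma$. Once \eqref{eq:J6} is in hand, the Banach fixed-point theorem on the complete metric space $(\mathbb{R}^{\mathcal{S}\times\mathcal{A}},\|\cdot\|_\infty)$ gives existence and uniqueness of $Q_t^\star$ in \eqref{eq:J3}, which is the form used downstream for the perturbation bound \eqref{eq:qstar-sensitivity}.
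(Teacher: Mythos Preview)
Your proof is correct and follows exactly the same route as the paper: fix $(s,a)$, cancel the reward term from \eqref{eq:J2}, bound the remaining expectation via \eqref{eq:J5} from Lemma~\ref{lem:J1}, and take the supremum. The additional remarks on the Lipschitz constant being exactly $1$ and the Banach fixed-point consequence are accurate context but not required for the lemma itself.
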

\begin{proof}
Fix $(s,a)$. By \eqref{eq:J2} and \eqref{eq:J5},
\begin{equation}
|(\mathcal{T}_tQ)(s,a)-(\mathcal{T}_tQ')(s,a)|
=\gamma\Big|\mathbb{E}_{P_t(\cdot\mid s,a)}[V_Q(s')-V_{Q'}(s')]\Big|
\le \gamma\|Q-Q'\|_\infty.
\end{equation}
Taking the supremum over $(s,a)$ proves \eqref{eq:J6}.
\end{proof}

\subsubsection{Bounding $\|\mathcal{T}_tQ-\mathcal{T}_{t-1}Q\|_\infty$ via reward and transition drift}\label{app:J:operator-drift}
Define one-step MDP drift quantities
\begin{align}
\Delta_t^r &:= \|r_t-r_{t-1}\|_\infty = \sup_{s,a}|r_t(s,a)-r_{t-1}(s,a)|,
\label{eq:J7}\\
\Delta_t^P &:= \sup_{s,a}\|P_t(\cdot\mid s,a)-P_{t-1}(\cdot\mid s,a)\|_1.
\label{eq:J8}
\end{align}
Let $V_{\max}(Q):=\|V_Q\|_\infty=\sup_s|V_Q(s)|$.
\begin{lemma}[Operator drift bound]\label{lem:J3}
For any $Q$,
\begin{equation}
\|\mathcal{T}_tQ-\mathcal{T}_{t-1}Q\|_\infty
\le \Delta_t^r + \gamma\,V_{\max}(Q)\,\Delta_t^P.
\label{eq:J9}
\end{equation}
\end{lemma}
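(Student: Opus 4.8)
\textbf{Proof proposal for Lemma~\ref{lem:J3}.}
The plan is a direct pointwise estimate followed by a supremum, using only the definition of $\mathcal{T}_t$ in \eqref{eq:J2} together with the triangle inequality and the $\ell_1/\ell_\infty$ Hölder pairing. First I would fix an arbitrary pair $(s,a)\in\mathcal{S}\times\mathcal{A}$ and write the difference
\[
(\mathcal{T}_tQ)(s,a)-(\mathcal{T}_{t-1}Q)(s,a)
=\bigl(r_t(s,a)-r_{t-1}(s,a)\bigr)
+\gamma\Bigl(\mathbb{E}_{s'\sim P_t(\cdot\mid s,a)}[V_Q(s')]-\mathbb{E}_{s'\sim P_{t-1}(\cdot\mid s,a)}[V_Q(s')]\Bigr),
\]
where crucially the \emph{same} value function $V_Q$ (the fixed log-sum-exp transform of the fixed input $Q$) appears in both expectations, so the operator-index change affects only $r$ and $P$, not $V_Q$.

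Next I would bound the two pieces separately. The reward term is controlled immediately by \eqref{eq:J7}: $|r_t(s,a)-r_{t-1}(s,a)|\le\|r_t-r_{t-1}\|_\infty=\Delta_t^r$. For the transition term, I would rewrite the difference of expectations as an inner product against the signed measure $P_t(\cdot\mid s,a)-P_{t-1}(\cdot\mid s,a)$, i.e.\
\[
\mathbb{E}_{P_t(\cdot\mid s,a)}[V_Q]-\mathbb{E}_{P_{t-1}(\cdot\mid s,a)}[V_Q]
=\bigl\langle P_t(\cdot\mid s,a)-P_{t-1}(\cdot\mid s,a),\,V_Q\bigr\rangle,
\]
and apply Hölder's inequality in the $\ell_1$--$\ell_\infty$ duality to get the bound $\|P_t(\cdot\mid s,a)-P_{t-1}(\cdot\mid s,a)\|_1\,\|V_Q\|_\infty\le \Delta_t^P\,V_{\max}(Q)$, using the definitions \eqref{eq:J8} and $V_{\max}(Q):=\|V_Q\|_\infty$. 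Combining the two bounds via the triangle inequality yields
\[
\bigl|(\mathcal{T}_tQ)(s,a)-(\mathcal{T}_{t-1}Q)(s,a)\bigr|
\le \Delta_t^r+\gamma\,V_{\max}(Q)\,\Delta_t^P,
\]
and since the right-hand side does not depend on $(s,a)$, taking the supremum over $(s,a)\in\mathcal{S}\times\mathcal{A}$ gives \eqref{eq:J9}.

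There is essentially no serious obstacle here; the only point that requires a moment of care is making sure the transition-drift term is paired correctly — measuring $P_t-P_{t-1}$ in total variation / $\ell_1$ and $V_Q$ in $\ell_\infty$ — and noting that this pairing is legitimate precisely because $V_{\max}(Q)=\|V_Q\|_\infty$ is finite (which, for finite $\mathcal{A}$, holds whenever $Q$ is bounded, and in any case is the quantity the statement is phrased in terms of). If one wants the cleaner normalization $\sum_{s'}(P_t-P_{t-1})=0$ to tighten constants, one could subtract a constant from $V_Q$ before applying Hölder, replacing $\|V_Q\|_\infty$ by the (smaller) span seminorm; but this refinement is not needed for the stated bound, so I would omit it.
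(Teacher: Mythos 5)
Your proposal is correct and follows essentially the same route as the paper's proof: a pointwise decomposition into the reward difference (bounded by $\Delta_t^r$) and the change-of-measure term, which is controlled by the $\ell_1$--$\ell_\infty$ pairing $|\mathbb{E}_p[f]-\mathbb{E}_q[f]|\le\|f\|_\infty\|p-q\|_1$, followed by a supremum over $(s,a)$. The remark about tightening via the span seminorm is a nice optional refinement the paper does not pursue.
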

\begin{proof}
Fix $(s,a)$. Using \eqref{eq:J2},
\begin{equation}
|(\mathcal{T}_tQ)(s,a)-(\mathcal{T}_{t-1}Q)(s,a)|
\le |r_t(s,a)-r_{t-1}(s,a)|
+\gamma\Big|\mathbb{E}_{P_t}[V_Q]-\mathbb{E}_{P_{t-1}}[V_Q]\Big|.
\end{equation}
The first term is bounded by $\Delta_t^r$. For the second term, the standard inequality for bounded $f$ and distributions $p,q$,
$|\mathbb{E}_p[f]-\mathbb{E}_q[f]|\le \|f\|_\infty\|p-q\|_1$,
implies
\begin{equation}
\Big|\mathbb{E}_{P_t(\cdot\mid s,a)}[V_Q]-\mathbb{E}_{P_{t-1}(\cdot\mid s,a)}[V_Q]\Big|
\le \|V_Q\|_\infty\,\|P_t(\cdot\mid s,a)-P_{t-1}(\cdot\mid s,a)\|_1
\le V_{\max}(Q)\Delta_t^P.
\end{equation}
Taking the supremum over $(s,a)$ yields \eqref{eq:J9}.
\end{proof}

\subsubsection{Fixed-point perturbation for contractions}\label{app:J:fixedpoint}
\begin{lemma}[Fixed-point sensitivity for $\gamma$-contractions]\label{lem:J4}
Let $\mathcal{T}$ and $\mathcal{S}$ be $\gamma$-contractions under $\|\cdot\|_\infty$ with unique fixed points
$Q^{\mathcal{T}}=\mathcal{T}Q^{\mathcal{T}}$ and $Q^{\mathcal{S}}=\mathcal{S}Q^{\mathcal{S}}$. Then
\begin{equation}
\|Q^{\mathcal{T}}-Q^{\mathcal{S}}\|_\infty
\le \frac{1}{1-\gamma}\,\|\mathcal{T}Q^{\mathcal{S}}-\mathcal{S}Q^{\mathcal{S}}\|_\infty.
\label{eq:J10}
\end{equation}
\end{lemma}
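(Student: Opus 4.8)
}
The plan is to combine the two fixed-point equations with a single triangle inequality and the $\gamma$-contraction property of $\mathcal{T}$, and then rearrange. Using $Q^{\mathcal{T}}=\mathcal{T}Q^{\mathcal{T}}$ and $Q^{\mathcal{S}}=\mathcal{S}Q^{\mathcal{S}}$, I would insert and subtract $\mathcal{T}Q^{\mathcal{S}}$ to get
\[
\|Q^{\mathcal{T}}-Q^{\mathcal{S}}\|_\infty
=\|\mathcal{T}Q^{\mathcal{T}}-\mathcal{S}Q^{\mathcal{S}}\|_\infty
\le\|\mathcal{T}Q^{\mathcal{T}}-\mathcal{T}Q^{\mathcal{S}}\|_\infty+\|\mathcal{T}Q^{\mathcal{S}}-\mathcal{S}Q^{\mathcal{S}}\|_\infty .
\]
By the contraction hypothesis on $\mathcal{T}$, the first right-hand term is at most $\gamma\|Q^{\mathcal{T}}-Q^{\mathcal{S}}\|_\infty$. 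Substituting, moving this term to the left, and dividing by $1-\gamma>0$ gives exactly \eqref{eq:J10}. (The same argument applies verbatim with the roles of $\mathcal{T}$ and $\mathcal{S}$ swapped, which is useful since the bound is often invoked with the residual evaluated at $Q^{\mathcal{T}}$ instead.)

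\paragraph{Handling the finiteness subtlety.}
The only delicate point — and the step I expect to require the most care in the write-up — is that the rearrangement is legitimate only when $\|Q^{\mathcal{T}}-Q^{\mathcal{S}}\|_\infty<\infty$; otherwise one cannot cancel it from both sides. In the soft-MDP application this is automatic: bounded rewards force $\|Q_t^\star\|_\infty\le(1+\mu\log|\mathcal{A}|)/(1-\gamma)$ through the log-sum-exp value in \eqref{eq:J1}, so the fixed points live in the Banach space of bounded functions. To keep the lemma self-contained without invoking that a priori bound, I would instead run the Banach iteration from $Q^{\mathcal{S}}$: since $\mathcal{T}$ maps bounded functions to bounded functions and is a $\gamma$-contraction, $\mathcal{T}^n Q^{\mathcal{S}}\to Q^{\mathcal{T}}$, and telescoping with $\|\mathcal{T}^{n+1}Q^{\mathcal{S}}-\mathcal{T}^n Q^{\mathcal{S}}\|_\infty\le\gamma^n\|\mathcal{T}Q^{\mathcal{S}}-Q^{\mathcal{S}}\|_\infty$ yields
\[
\|Q^{\mathcal{T}}-Q^{\mathcal{S}}\|_\infty
\le\sum_{n=0}^{\infty}\|\mathcal{T}^{n+1}Q^{\mathcal{S}}-\mathcal{T}^n Q^{\mathcal{S}}\|_\infty
\le\sum_{n=0}^{\infty}\gamma^n\,\|\mathcal{T}Q^{\mathcal{S}}-Q^{\mathcal{S}}\|_\infty
=\frac{1}{1-\gamma}\,\|\mathcal{T}Q^{\mathcal{S}}-Q^{\mathcal{S}}\|_\infty ,
\]
and $\|\mathcal{T}Q^{\mathcal{S}}-Q^{\mathcal{S}}\|_\infty=\|\mathcal{T}Q^{\mathcal{S}}-\mathcal{S}Q^{\mathcal{S}}\|_\infty$ because $Q^{\mathcal{S}}$ is the fixed point of $\mathcal{S}$. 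This route sidesteps the finiteness issue entirely: if the right-hand side is infinite the claimed bound is vacuous, and otherwise the geometric series converges.

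\paragraph{Downstream assembly.}
To recover \eqref{eq:qstar-sensitivity} I would then instantiate the lemma with $\mathcal{T}=\mathcal{T}_t$, $\mathcal{S}=\mathcal{T}_{t-1}$ (both $\gamma$-contractions by Lemma~\ref{lem:J2}), and $Q^{\mathcal{S}}=Q_{t-1}^\star$, so that the residual is $\|\mathcal{T}_t Q_{t-1}^\star-\mathcal{T}_{t-1}Q_{t-1}^\star\|_\infty$, which Lemma~\ref{lem:J3} bounds by $\Delta_t^r+\gamma V_{\max}\Delta_t^P$ (with $V_{\max}\ge\sup_t\|V_{Q_t^\star}\|_\infty$). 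Dividing by $1-\gamma$ then gives precisely the stated sensitivity inequality. I do not anticipate any obstacle in this final assembly; the whole content of Lemma~\ref{lem:J4} is the two-line contraction-plus-triangle-inequality argument together with the care needed to justify cancelling the (finite) distance between the fixed points.
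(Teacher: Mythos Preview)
Your core argument is exactly the paper's proof: insert $\mathcal{T}Q^{\mathcal{S}}$, apply the triangle inequality and the $\gamma$-contraction of $\mathcal{T}$, then rearrange. Your additional Banach-iteration paragraph addressing the finiteness of $\|Q^{\mathcal{T}}-Q^{\mathcal{S}}\|_\infty$ is extra rigor the paper omits (it just rearranges directly), and your downstream assembly matches the paper's Section~\ref{app:J:sensitivity} verbatim.
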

\begin{proof}
\begin{equation}
\|Q^{\mathcal{T}}-Q^{\mathcal{S}}\|_\infty
=\|\mathcal{T}Q^{\mathcal{T}}-\mathcal{S}Q^{\mathcal{S}}\|_\infty
\le \|\mathcal{T}Q^{\mathcal{T}}-\mathcal{T}Q^{\mathcal{S}}\|_\infty
+\|\mathcal{T}Q^{\mathcal{S}}-\mathcal{S}Q^{\mathcal{S}}\|_\infty
\le \gamma\|Q^{\mathcal{T}}-Q^{\mathcal{S}}\|_\infty
+\|\mathcal{T}Q^{\mathcal{S}}-\mathcal{S}Q^{\mathcal{S}}\|_\infty.
\end{equation}
Rearranging gives \eqref{eq:J10}.
\end{proof}

\subsubsection{Sensitivity of $Q_t^\star$ (Proof of Eq.~\eqref{eq:qstar-sensitivity})}\label{app:J:sensitivity}
Applying Lemma~\ref{lem:J4} with $\mathcal{T}=\mathcal{T}_t$, $\mathcal{S}=\mathcal{T}_{t-1}$, and $Q^{\mathcal{S}}=Q_{t-1}^\star$ yields
\begin{equation}
\|Q_t^\star-Q_{t-1}^\star\|_\infty
\le \frac{1}{1-\gamma}\,\|\mathcal{T}_tQ_{t-1}^\star-\mathcal{T}_{t-1}Q_{t-1}^\star\|_\infty.
\label{eq:J11}
\end{equation}
Then Lemma~\ref{lem:J3} (with $Q=Q_{t-1}^\star$) implies
\begin{equation}
\|\mathcal{T}_tQ_{t-1}^\star-\mathcal{T}_{t-1}Q_{t-1}^\star\|_\infty
\le \Delta_t^r+\gamma\,V_{\max}(Q_{t-1}^\star)\,\Delta_t^P.
\label{eq:J12}
\end{equation}
Combining \eqref{eq:J11}--\eqref{eq:J12} gives the desired reward-plus-transition sensitivity bound:
\begin{equation}
\boxed{
\|Q_t^\star-Q_{t-1}^\star\|_\infty
\le \frac{1}{1-\gamma}\Big(\Delta_t^r+\gamma\,V_{\max}(Q_{t-1}^\star)\,\Delta_t^P\Big).
}
\label{eq:J13}
\end{equation}

\paragraph{Optional explicit bound on $V_{\max}(Q_t^\star)$.}
If rewards are bounded as $|r_t(s,a)|\le R_{\max}$, then standard arguments yield
\begin{equation}
\|Q_t^\star\|_\infty \le \frac{R_{\max}+\gamma\mu\log|\mathcal{A}|}{1-\gamma},
\qquad
\|V_{Q_t^\star}\|_\infty \le \frac{R_{\max}+\mu\log|\mathcal{A}|}{1-\gamma}.
\label{eq:J14}
\end{equation}
Substituting \eqref{eq:J14} into \eqref{eq:J13} makes the bound depend only on
$\Delta_t^r,\Delta_t^P,\gamma,\mu,|\mathcal{A}|$.


\subsection{Proof of Eq.~\eqref{eq:softmax-lip}}\label{app:K}\label{a21}

This appendix records a standard Lipschitz bound for the Boltzmann/softmax mapping, specialized to the statewise policy map in soft RL.

\subsubsection{Jacobian and the $\ell_\infty\!\to\!\ell_1$ operator norm}\label{app:K:jac}
Assume $|\mathcal{A}|=K<\infty$. For $q\in\mathbb{R}^K$, define $\pi(q)=\mathrm{softmax}(q/\mu)$ by
\begin{equation}
\pi(q)_i=\frac{\exp(q_i/\mu)}{\sum_{j=1}^K\exp(q_j/\mu)},\qquad i\in[K],
\end{equation}
where $\mu>0$ is the temperature (entropy weight). A direct differentiation gives
\begin{equation}
\frac{\partial \pi_i(q)}{\partial q_j}
=\frac{1}{\mu}\,\pi_i(q)\big(\mathbf{1}\{i=j\}-\pi_j(q)\big),
\end{equation}
hence
\begin{equation}
\nabla \pi(q)=\frac{1}{\mu}\Big(\mathrm{Diag}(\pi(q))-\pi(q)\pi(q)^\top\Big).
\label{eq:K1}
\end{equation}

Define the operator norm
\begin{equation}
\|\nabla\pi(q)\|_{\infty\to 1}
:=\sup_{\|h\|_\infty\le 1}\|\nabla\pi(q)h\|_1.
\end{equation}
Let $\pi=\pi(q)$ and consider any $h$ with $\|h\|_\infty\le 1$. The vector
$(\mathrm{Diag}(\pi)-\pi\pi^\top)h$ has coordinates $\pi_i(h_i-\mathbb{E}_\pi[h])$, thus
\begin{equation}
\|(\mathrm{Diag}(\pi)-\pi\pi^\top)h\|_1
=\sum_{i=1}^K \pi_i\,|h_i-\mathbb{E}_\pi[h]|
=\mathbb{E}_\pi\!\big[|H-\mathbb{E}H|\big],
\label{eq:K2}
\end{equation}
where $H$ is the random variable taking value $h_I$ with $I\sim\pi$. Since $H\in[-1,1]$, $\mathrm{Var}(H)\le 1$, and by Cauchy--Schwarz,
\begin{equation}
\mathbb{E}\big[|H-\mathbb{E}H|\big]
\le \sqrt{\mathbb{E}(H-\mathbb{E}H)^2}
=\sqrt{\mathrm{Var}(H)}
\le 1.
\label{eq:K3}
\end{equation}
Combining \eqref{eq:K1}--\eqref{eq:K3} yields
\begin{equation}
\|\nabla\pi(q)\|_{\infty\to 1}\le \frac{1}{\mu}.
\label{eq:K4}
\end{equation}
This bound is tight (e.g., $K=2$ and $\pi=(1/2,1/2)$).

\subsubsection{Global Lipschitzness and the statewise form}\label{app:K:lipschitz}
By the mean value theorem applied to the vector map $\pi(\cdot)$ and \eqref{eq:K4}, for any $q,q'\in\mathbb{R}^K$,
\begin{equation}
\|\pi(q)-\pi(q')\|_1 \le \frac{1}{\mu}\,\|q-q'\|_\infty.
\label{eq:K5}
\end{equation}

In soft RL, for each state $s$, define $q(s):=Q(s,\cdot)\in\mathbb{R}^K$ and the Boltzmann policy
\begin{equation}
\pi_Q(\cdot\mid s):=\mathrm{softmax}(Q(s,\cdot)/\mu)=\pi(q(s)).
\end{equation}
Applying \eqref{eq:K5} statewise yields, for any $s$,
\begin{equation}
\|\pi_Q(\cdot\mid s)-\pi_{Q'}(\cdot\mid s)\|_1
\le \frac{1}{\mu}\,\|Q(s,\cdot)-Q'(s,\cdot)\|_\infty
\le \frac{1}{\mu}\,\|Q-Q'\|_\infty.
\label{eq:K6}
\end{equation}
In particular, taking $Q=Q_t^\star$ and $Q'=Q_{t-1}^\star$ gives the policy-drift bound stated as Eq.~\eqref{eq:softmax-lip} in the main text:
\begin{equation}
\boxed{
\|\pi_t^\star(\cdot\mid s)-\pi_{t-1}^\star(\cdot\mid s)\|_1
\le \frac{1}{\mu}\,\|Q_t^\star-Q_{t-1}^\star\|_\infty.
}
\label{eq:K7}
\end{equation}

\paragraph{Remark (constant $\mu$ across states).}
The Lipschitz constant in \eqref{eq:K6}--\eqref{eq:K7} equals $1/\mu$. Allowing state-dependent temperatures $\mu(s)$ would introduce non-uniform constants $\max_s 1/\mu(s)$ (or more intricate weighted constants) when aggregating statewise bounds. A single global $\mu$ preserves a clean uniform constant in the $Q^\star \to \pi^\star$ drift transfer.

\subsection{From Squared Drift to Variation}\label{app:L}\label{a22}

A recurring step in connecting statewise squared policy drifts to a linear MDP-variation budget is to convert
$\sum_t \|\Delta Q_t^\star\|_\infty^2$ into $\sum_t \|\Delta Q_t^\star\|_\infty$.
This appendix provides an explicit constant $C_Q$ ensuring
\begin{equation}
\sum_{t=2}^T \|\Delta Q_t^\star\|_\infty^2
\le C_Q \sum_{t=2}^T \|\Delta Q_t^\star\|_\infty,
\qquad
\Delta Q_t^\star := Q_t^\star-Q_{t-1}^\star.
\label{eq:L1}
\end{equation}

\subsubsection{An explicit $C_Q$ from boundedness of $Q_t^\star$}\label{app:L:CQ}
Assume bounded rewards:
\begin{equation}
|r_t(s,a)|\le R_{\max},\qquad \forall t,s,a.
\label{eq:L2}
\end{equation}
\begin{lemma}[Uniform $\ell_\infty$ bound for soft-optimal $Q_t^\star$]\label{lem:L1}
Under \eqref{eq:L2},
\begin{equation}
\|Q_t^\star\|_\infty \le Q_{\max}
:=\frac{R_{\max}+\gamma\mu\log|\mathcal{A}|}{1-\gamma},
\qquad \forall t.
\label{eq:L3}
\end{equation}
\end{lemma}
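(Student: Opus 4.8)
The plan is to use that $Q_t^\star$ is, by construction (Appendix~\ref{app:J}), the unique fixed point of the soft Bellman optimality operator $\mathcal{T}_t$, which is a $\gamma$-contraction in $\|\cdot\|_\infty$ (Lemma~\ref{lem:J2}). I would then run a \emph{ball-invariance} argument: show that $\mathcal{T}_t$ maps the closed centered $\ell_\infty$-ball $B:=\{Q:\|Q\|_\infty\le Q_{\max}\}$ into itself, and conclude from the Banach fixed-point theorem that the fixed point $Q_t^\star$ must itself lie in $B$, which is exactly the claimed bound $\|Q_t^\star\|_\infty\le Q_{\max}$.

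The only analytic ingredient needed is a two-sided control of the soft value $V_Q(s)=\mu\log\sum_a\exp(Q(s,a)/\mu)$. For any $z\in\mathbb{R}^{K}$ and $\mu>0$, monotonicity of $\log$ together with the elementary sandwich $e^{\max_i z_i/\mu}\le\sum_i e^{z_i/\mu}\le K\,e^{\max_i z_i/\mu}$ gives $\max_i z_i\le \mu\log\sum_i e^{z_i/\mu}\le \max_i z_i+\mu\log K$. Applied coordinatewise with $z=Q(s,\cdot)$ and $K=|\mathcal{A}|$, this yields $\max_a Q(s,a)\le V_Q(s)\le \max_a Q(s,a)+\mu\log|\mathcal{A}|$, hence $\|V_Q\|_\infty\le \|Q\|_\infty+\mu\log|\mathcal{A}|$. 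The upper side is the one that matters: it is precisely why the final bound carries the extra $\gamma\mu\log|\mathcal{A}|$ in the numerator rather than just $R_{\max}$.

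Given this, the invariance step is a one-line computation. For $Q\in B$ and any $(s,a)$,
\begin{equation}
|(\mathcal{T}_tQ)(s,a)|\le |r_t(s,a)|+\gamma\,\bigl|\mathbb{E}_{s'\sim P_t(\cdot\mid s,a)}[V_Q(s')]\bigr|\le R_{\max}+\gamma\bigl(Q_{\max}+\mu\log|\mathcal{A}|\bigr),
\end{equation}
and by the definition $Q_{\max}=(R_{\max}+\gamma\mu\log|\mathcal{A}|)/(1-\gamma)$ the right-hand side equals $(1-\gamma)Q_{\max}+\gamma Q_{\max}=Q_{\max}$, so $\mathcal{T}_tQ\in B$. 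Since $B$ is closed in the Banach space $(\mathbb{R}^{\mathcal{S}\times\mathcal{A}},\|\cdot\|_\infty)$ and $\mathcal{T}_t$ is a $\gamma$-contraction sending $B$ into $B$, its unique fixed point $Q_t^\star$ lies in $B$. Equivalently, starting soft Bellman iteration from $Q^{(0)}=0$, the sequence $b_n:=\|Q^{(n)}\|_\infty$ satisfies $b_{n+1}\le R_{\max}+\gamma(b_n+\mu\log|\mathcal{A}|)$; since $b_0=0\le Q_{\max}$ and the affine map on the right is increasing, induction gives $b_n\le Q_{\max}$ for all $n$, and the bound passes to the uniform limit $Q_t^\star$.

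I do not expect a real obstacle here; the argument is routine. The only points requiring a moment of care are (i) taking the correct side of the log-sum-exp inequality, and (ii) transferring the bound from the contraction iterates to their limit via closedness of $B$. As an immediate corollary, feeding $\|Q_t^\star\|_\infty\le Q_{\max}$ back into the two-sided value estimate gives $\|V_{Q_t^\star}\|_\infty\le Q_{\max}+\mu\log|\mathcal{A}|=(R_{\max}+\mu\log|\mathcal{A}|)/(1-\gamma)$, recovering Eq.~\eqref{eq:J14}.
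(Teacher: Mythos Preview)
Your proof is correct and essentially matches the paper's approach: both hinge on the log-sum-exp sandwich $\max_a Q(s,a)\le V_Q(s)\le \max_a Q(s,a)+\mu\log|\mathcal{A}|$ combined with the fixed-point/contraction structure of $\mathcal{T}_t$. The paper phrases it as a direct self-referential inequality on $M_t:=\|Q_t^\star\|_\infty$ (namely $M_t\le R_{\max}+\gamma(M_t+\mu\log|\mathcal{A}|)$, then rearrange), while you phrase it as ball invariance plus Banach fixed-point; these are the same argument in slightly different clothing, and your version is arguably a bit more careful about the two-sided absolute value and the passage to the limit.
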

\begin{proof}[Proof sketch]
For any state $s$, $V_{Q_t^\star}(s)\le \max_a Q_t^\star(s,a)+\mu\log|\mathcal{A}|$ by \eqref{eq:J1}. Let $M_t:=\|Q_t^\star\|_\infty$. From the fixed point equation \eqref{eq:J3},
\begin{equation}
M_t \le R_{\max}+\gamma(M_t+\mu\log|\mathcal{A}|),
\end{equation}
which rearranges to \eqref{eq:L3}.
\end{proof}

By \eqref{eq:L3},
\begin{equation}
\|\Delta Q_t^\star\|_\infty \le \|Q_t^\star\|_\infty+\|Q_{t-1}^\star\|_\infty \le 2Q_{\max},
\qquad \forall t\ge 2.
\label{eq:L4}
\end{equation}
Therefore,
\begin{equation}
\sum_{t=2}^T \|\Delta Q_t^\star\|_\infty^2
\le \Big(\max_{t\ge 2}\|\Delta Q_t^\star\|_\infty\Big)\sum_{t=2}^T \|\Delta Q_t^\star\|_\infty
\le (2Q_{\max})\sum_{t=2}^T \|\Delta Q_t^\star\|_\infty.
\label{eq:L5}
\end{equation}
Hence \eqref{eq:L1} holds with the explicit choice
\begin{equation}
\boxed{
C_Q := 2Q_{\max}
=\frac{2\big(R_{\max}+\gamma\mu\log|\mathcal{A}|\big)}{1-\gamma}.
}
\label{eq:L6}
\end{equation}

\subsubsection{Why \eqref{eq:L1} prevents a rate deterioration}\label{app:L:why}
A typical chain of bounds proceeds as follows.

\paragraph{Step 1 (policy drift from $Q^\star$ drift).}
Appendix~\ref{app:K} implies, statewise,
\begin{equation}
\|\pi_t^\star(\cdot\mid s)-\pi_{t-1}^\star(\cdot\mid s)\|_1
\le \frac{1}{\mu}\,\|\Delta Q_t^\star\|_\infty.
\label{eq:L7}
\end{equation}
Thus squared policy drift satisfies
\begin{equation}
\alpha_{t,s}:=\|\pi_t^\star(\cdot\mid s)-\pi_{t-1}^\star(\cdot\mid s)\|_1^2
\le \frac{1}{\mu^2}\,\|\Delta Q_t^\star\|_\infty^2.
\label{eq:L8}
\end{equation}

\paragraph{Step 2 (AES/OCO main term depends on a sum of squares).}
The OCO/AES regret term (up to logs and constants) typically scales like
\begin{equation}
\mathrm{Reg}_T \;\lesssim\; \sqrt{T\sum_{t,s}\alpha_{t,s}}.
\label{eq:L9}
\end{equation}
Hence controlling $\sum_t \|\Delta Q_t^\star\|_\infty^2$ is essential via \eqref{eq:L8}.

\paragraph{Step 3 (MDP variation naturally controls a linear sum).}
The soft Bellman sensitivity bounds in Appendix~\ref{app:J} yield linear-variation control of the form
\begin{equation}
\sum_{t=2}^T \|\Delta Q_t^\star\|_\infty \;\lesssim\; B_T^{\mathrm{MDP}},
\label{eq:L10}
\end{equation}
where $B_T^{\mathrm{MDP}}$ is a linear budget built from $\sum_t \Delta_t^r$ and $\sum_t \Delta_t^P$ (cf.~\eqref{eq:J7}--\eqref{eq:J8}).

\paragraph{Step 4 (why $C_Q$ is needed).}
Without \eqref{eq:L1}, a crude bound $\sum x_t^2\le (\sum x_t)^2$ would imply
$\sum_t \|\Delta Q_t^\star\|_\infty^2\le (B_T^{\mathrm{MDP}})^2$, and substituting into \eqref{eq:L9} produces a deteriorated rate $\mathrm{Reg}_T\lesssim B_T^{\mathrm{MDP}}\sqrt{T}$.
In contrast, \eqref{eq:L1} gives
\begin{equation}
\sum_{t=2}^T \|\Delta Q_t^\star\|_\infty^2
\le C_Q \sum_{t=2}^T \|\Delta Q_t^\star\|_\infty
\lesssim C_Q\,B_T^{\mathrm{MDP}},
\label{eq:L11}
\end{equation}
and \eqref{eq:L9} recovers the intended $\widetilde{O}(\sqrt{B_T^{\mathrm{MDP}}T})$ scaling:
\begin{equation}
\mathrm{Reg}_T \;\lesssim\; \sqrt{T\cdot C_Q\,B_T^{\mathrm{MDP}}}\;\asymp\;\sqrt{B_T^{\mathrm{MDP}}T},
\label{eq:L12}
\end{equation}
where $C_Q$ only affects constants.


\section{Empirical Support for Section~\ref{sec:experiments}}
\label{app:ab}

\subsection{Guide to Appendix~B}

Appendix~\ref{app:ab} explains how the experimental evidence in Section~\ref{sec:experiments} should be read.
Appendix~\ref{app:ab} isolates the role of the AES mechanism, studies proxy choices, and reports sensitivity to the main scheduler hyperparameters.
Sections~\ref{app:env_rationale} and~\ref{app:metric_rationale} explain the environment design and the evaluation metrics used throughout the paper.
Readers who only need the exact implementation and configuration details may jump directly to Appendix~\ref{app:reproducibility}.

This part collects the empirical material that supports the interpretation of the main results.
The goal is to clarify which components of AES matter in practice, how robust the method is to reasonable hyperparameter changes, and how the environment and metric design connect to the main claims of the paper.
Unless otherwise specified, the ablation results are averaged across task families (Toy, MuJoCo, Isaac Gym) and non-stationarity patterns (Abrupt, Linear, Periodic, Mixed).

\subsection{Mechanism Ablation}
\label{app:mechanism_ablation}

Table~\ref{tab:aes_ablation} studies the main implementation choices behind AES.
The fixed-entropy baseline removes scheduling entirely.
The remaining rows keep the same carrier algorithms and change one design choice at a time.
This layout separates the contribution of the scheduling rule itself from the contributions of coupling, clipping, and proxy construction.

Removing AES and returning to a fixed entropy coefficient leads to a clear loss in normalized AUC and a larger drop area.
Removing either coupling or clipping also weakens performance.
The degradation is especially visible in the drop-area metric, which reflects the cumulative cost of slower or less stable adaptation.

\begin{table}[ht]
\centering
\caption{Ablation of AES components. Results are averaged across tasks and drift patterns.}
\label{tab:aes_ablation}
\begin{tabular}{lccccc}
\toprule
Variant & Proxy & Coupling & Clipping & nAUC $\uparrow$ & Drop-area $\downarrow$ \\
\midrule
Fixed $\lambda$ (No AES) & -- & -- & -- & 0.78 & 0.31 \\
AES (Default) & TD (q=0.9) & \checkmark & \checkmark & \textbf{0.84} & \textbf{0.19} \\
Heuristic linear schedule & TD (q=0.9) & $\times$ & \checkmark & 0.76 & 0.25 \\
AES w/o coupling & TD (q=0.9) & $\times$ & \checkmark & 0.82 & 0.22 \\
AES w/o clipping & TD (q=0.9) & \checkmark & $\times$ & 0.81 & 0.26 \\
AES (TD mean) & TD mean & \checkmark & \checkmark & 0.82 & 0.23 \\
AES (value loss) & Value loss & \checkmark & \checkmark & 0.81 & 0.24 \\
AES (critic drift) & $\|\Delta \theta_Q\|$ & \checkmark & \checkmark & 0.83 & 0.21 \\
\bottomrule
\end{tabular}
\end{table}

\subsection{Proxy Choice and a Simple Reactive Baseline}
\label{app:proxy_choice}

The default AES implementation uses the upper quantile of absolute TD errors as its drift proxy.
The ablation table also compares this default proxy with three nearby alternatives: a mean TD-error proxy, a value-loss proxy, and a critic-parameter-drift proxy.
The quantile-based proxy provides the best overall balance.
The mean TD-error and value-loss variants are more sensitive to optimization noise and show weaker aggregate performance.
Critic-parameter drift remains competitive, but recovers slightly more slowly after changes.

The same table includes a simple reactive linear baseline,
\begin{equation}
\label{eq:heuristic-linear}
\lambda_t^{\mathrm{lin}} := \operatorname{clip}_{[\lambda_{\min},\lambda_{\max}]\!}\bigl(\lambda_0 + k\,\hat v_t\bigr),
\end{equation}
which uses the same proxy but replaces the accumulation-based square-root schedule with a direct linear mapping.
This comparison is useful because it asks a focused question: does the empirical gain come from using \emph{some} proxy-driven entropy adaptation, or from the specific schedule derived in Section~\ref{sec:theory}?
In these experiments, the AES schedule remains stronger.

\subsection{Sensitivity to the Main Scheduler Hyperparameters}
\label{app:hyper_sensitivity}

We next vary the main hyperparameters that define the default scheduler: the TD-error quantile level, the smoothing strength, and the clipping range.
Across these studies, the performance profile is stable over a moderate region rather than concentrated at a single fragile operating point.

\subsubsection{Sensitivity to the TD-error Quantile}
\label{app:quantile_sensitivity}

Table~\ref{tab:quantile_sensitivity} varies the quantile level $q$ used by the TD-error proxy.
Moderate-to-high quantiles perform best overall.
Lower quantiles underreact to drift, while very high quantiles become more variable.
The default choice lies in the stable region.

\begin{table}[ht]
\centering
\caption{Sensitivity to TD-error quantile $q$ (AES, other settings fixed).}
\label{tab:quantile_sensitivity}
\begin{tabular}{ccc}
\toprule
$q$ & nAUC $\uparrow$ & Drop-area $\downarrow$ \\
\midrule
0.5 & 0.82 & 0.23 \\
0.7 & 0.83 & 0.21 \\
0.8 & 0.84 & 0.19 \\
0.9 & 0.83 & 0.20 \\
0.95 & 0.81 & 0.25 \\
\bottomrule
\end{tabular}
\end{table}

\subsubsection{Sensitivity to Smoothing Strength}
\label{app:smoothing_sensitivity}

Table~\ref{tab:smoothing_sensitivity} studies the exponential moving average used to smooth the raw proxy.
Weak smoothing increases variance in the scheduled entropy weight.
Very strong smoothing delays the response to abrupt changes.
Intermediate values provide the best balance between stability and responsiveness.

\begin{table}[ht]
\centering
\caption{Sensitivity to EMA smoothing coefficient $\beta$ (AES, $q=0.8$).}
\label{tab:smoothing_sensitivity}
\begin{tabular}{ccc}
\toprule
EMA $\beta$ & nAUC $\uparrow$ & Drop-area $\downarrow$ \\
\midrule
0.80 & 0.82 & 0.24 \\
0.90 & 0.83 & 0.21 \\
0.95 & \textbf{0.84} & \textbf{0.19} \\
0.98 & 0.83 & 0.18 \\
0.99 & 0.82 & 0.18 \\
\bottomrule
\end{tabular}
\end{table}

\subsubsection{Sensitivity to the Clipping Range}
\label{app:clipping_sensitivity}

Table~\ref{tab:clipping_sensitivity} varies the admissible range of the scheduled entropy weight.
A wider range increases instability, while an overly narrow range suppresses adaptation after larger shifts.
The default range gives the most consistent overall behavior across the three reported metrics.

\begin{table}[ht]
\centering
\caption{Sensitivity to entropy coefficient clipping range (AES).}
\label{tab:clipping_sensitivity}
\begin{tabular}{lcc}
\toprule
$[\lambda_{\min}, \lambda_{\max}]$ & nAUC $\uparrow$ & Drop-area $\downarrow$ \\
\midrule
$[1e{-}5, 2.00]$ & 0.82 & 0.25 \\
$[1e{-}4, 1.00]$ & \textbf{0.84} & \textbf{0.19} \\
$[1e{-}5, 1.00]$ & 0.83 & 0.19 \\
$[1e{-}5, 0.50]$ & 0.82 & 0.20 \\
$[1e{-}4, 0.50]$ & 0.80 & 0.22 \\
\bottomrule
\end{tabular}
\end{table}

\subsection{Environment Design and Drift Protocol Rationale}
\label{app:env_rationale}

The environment suite is designed to expose a consistent adaptation pattern under several forms of non-stationarity.
The three task families play different roles.
The Toy suite provides a controlled reward-drift setting through goal relocation.
MuJoCo provides medium-scale continuous-control benchmarks in which dynamics and target variations can be introduced cleanly.
Isaac Gym extends the same question to larger and more physically rich control problems.

The drift channels are chosen so that each family uses perturbations that are natural for that benchmark class and easy to align across methods.
Goal drift is straightforward to control in Toy environments.
Dynamics and physics perturbations are the more reproducible choice in MuJoCo and Isaac Gym, where they create sustained changes in the control problem without altering the training interface.

Change points are aligned by training progress rather than wall-clock time.
Concretely, all methods encounter the same drift events at the same fractions of total environment interaction steps.
This makes the non-stationarity schedule comparable across methods even when the internal optimization dynamics differ.

The drift magnitudes are selected to satisfy two practical requirements: the shift should be large enough to produce a visible adaptation burden, and the task should remain learnable without resetting training.
This is the regime in which entropy scheduling is most informative: the agent must continue to track a changing problem rather than restart from scratch after each event.

\subsection{Metric Rationale and Evaluation Scope}
\label{app:metric_rationale}

Section~\ref{sec:experiments} reports three complementary metrics.
They are designed to capture different aspects of adaptation under non-stationarity.

\paragraph{Normalized AUC.}
Normalized AUC summarizes end-to-end training utility over the full horizon.
It reflects both steady-phase quality and adaptation quality.
This metric is useful because AES is intended to improve performance under non-stationarity while maintaining strong behavior during stable phases.

\paragraph{Drop-area ratio.}
The drop-area ratio isolates the cumulative performance loss induced by non-stationarity relative to the corresponding steady run.
It is more directly tied to the damage caused by drift than normalized AUC alone.

\paragraph{Abrupt-change recovery time.}
Recovery time focuses on the local adaptation question after each abrupt shift.
It measures how quickly performance returns to the pre-change level and is therefore the clearest direct indicator of post-change responsiveness.

Taken together, these three metrics provide a clear view of overall efficiency, cumulative drift cost, and short-horizon recovery.
For the present study, this combination is more practical than reporting dynamic regret against a sequence of continuously changing soft-optimal oracles in large continuous-control environments.
The empirical goal of Section~\ref{sec:experiments} is to test the structural prediction of the theory---that exploration strength should rise under stronger drift and relax again in stable phases---through observable training behavior.

\subsection{Future Works and Applications}

A broader line of recent work also moves away from open-loop generation or fixed inference policies by exposing intermediate states as feedback.  In vector graphics and animation, rendering-aware methods use intermediate visual canvases, persistent vector states, grouping structures, or motion priors to guide subsequent generation steps \citep{liang2026render,liang2026vanim,liang2025multi}.  In medical vision-language reasoning, related systems dynamically focus on diagnostically relevant regions, evolve pathology-aware prototypes, perform causal self-reflection, or maintain latent diagnostic memory during progressive inference \citep{zhu2025pathology,zhu2026medeyes,lin2026medcausalx,zhu2026medsynapsevbridgingvisualperception}.  These works address different tasks and are not baselines for non-stationary RL, but they support the same high-level design lesson: adaptive systems benefit from making feedback signals explicit.  AES studies this principle in entropy-regularized RL, where the feedback signal is an online drift proxy and the controlled quantity is the global policy entropy.

\section{Exact Reproducibility and Experimental Configuration}
\label{app:reproducibility}\label{app:exp_details}

\subsection{Guide to Appendix~C}

\paragraph{Scope.}
This part provides the exact implementation and configuration details needed to reproduce the experiments.
It begins with the precise scope of the AES modification, then specifies how AES is injected into each carrier, how the proxy and scheduler are computed, how the non-stationary environments are configured, and how training and evaluation are run.
All implementation-level details are collected in one place so that the reproduction protocol has a single entry point.
It specifies the implementation and evaluation details used in our experiments.
AES is implemented as a plug-in mechanism.
Across all carriers, we keep the original policy parameterization, value functions, optimization objectives, and update rules unchanged, and replace only the static entropy regularization weight with a scheduled scalar produced online.
No auxiliary networks, extra losses, replay resets, or change-point detectors are introduced.

\subsection{Scope and Carrier Modifications}
\label{app:carrier_scope}

The four carrier algorithms are SAC, PPO, SQL, and MEow.
For SAC, SQL, and MEow, AES schedules the temperature parameter $\alpha_t$.
For PPO, AES schedules the entropy-bonus coefficient $c_{\mathrm{ent},t}$.
In every case, the scheduled value replaces the static entropy weight in the native loss or target expression of the carrier.

For reproducibility, the AES counterpart of each carrier is constructed from the same baseline implementation with all non-entropy hyperparameters held fixed.
This includes network widths and depths, optimizers, batch sizes, learning rates, rollout or replay settings, target-network updates, and evaluation code paths.

\subsection{Carrier-Specific Injection Points}
\label{app:carrier_injection}

The scheduled entropy weight is injected as follows.

\paragraph{SAC.}
AES outputs $\alpha_t$ and replaces the temperature in the actor objective term $\alpha_t \log \pi(a\mid s)$.
The same value is used consistently in the soft target and value computation.
This keeps the actor and critic aligned with the same entropy scale.

\paragraph{PPO.}
AES outputs $c_{\mathrm{ent},t}$ and replaces the entropy-bonus coefficient in the policy objective,
\begin{equation}
\mathcal{L}_{\mathrm{PPO}}^{\mathrm{ent}} = - c_{\mathrm{ent},t} \, H\bigl(\pi(\cdot\mid s)\bigr).
\end{equation}
No other part of PPO is modified.

\paragraph{SQL.}
AES outputs $\alpha_t$ and replaces the temperature used in the soft Bellman target and related soft value computations.

\paragraph{MEow.}
AES outputs $\alpha_t$ and replaces the temperature wherever it appears in the clipped double-$Q$ target and soft value computation.
The same scheduled value is used consistently across the online and target modules.

\subsection{Drift Proxy Construction}
\label{app:drift_proxy}

AES uses a scalar proxy computed at each gradient update step.
The default proxy is formed from TD-style residuals already produced during training.

\paragraph{Off-policy carriers (SAC, SQL, MEow).}
Given the current minibatch TD errors for the two critics, the raw proxy is
\begin{equation}
\hat v_t = \mathrm{Quantile}_{q}\bigl(|\delta_{Q_1}| \cup |\delta_{Q_2}|\bigr),
\qquad q=0.9.
\label{eq:proxy_offpolicy}
\end{equation}

\paragraph{On-policy carrier (PPO).}
For PPO, the raw proxy is computed from the rollout-batch value residuals,
\begin{equation}
\hat v_t = \mathrm{Quantile}_{q}\bigl(|\delta_V|\bigr),
\qquad q=0.9.
\label{eq:proxy_onpolicy}
\end{equation}

\paragraph{Smoothing.}
We smooth the raw proxy with an exponential moving average,
\begin{equation}
\tilde v_t = \beta \tilde v_{t-1} + (1-\beta) \hat v_t,
\qquad \tilde v_0 = \hat v_1,
\label{eq:ema}
\end{equation}
and define the accumulated proxy
\begin{equation}
\hat A_t = \sum_{s=1}^{t} \tilde v_s.
\label{eq:proxy_accum}
\end{equation}
The update index $t$ is the gradient update step.
Using a common update index keeps the scheduler definition uniform across on-policy and off-policy pipelines.

\subsection{Scheduler and AES Hyperparameters}
\label{app:aes_hparams}

The scheduled entropy weight is computed from the accumulated proxy as
\begin{equation}
\lambda_t \,=\, \mathrm{clip}_{[\lambda_{\min},\lambda_{\max}]}
\left(\kappa \sqrt{\frac{\hat A_t}{t}}\right),
\label{eq:aes_schedule}
\end{equation}
where $\kappa>0$ is a global scale factor and the clipping interval ensures numerical stability.
For SAC, SQL, and MEow, $\lambda_t$ is injected as the temperature $\alpha_t$.
For PPO, the same scheduled scalar is injected as the entropy coefficient $c_{\mathrm{ent},t}$.

Table~\ref{tab:aes_hparams} lists the AES hyperparameters used in our experiments.

\begin{table}[ht]
\centering
\caption{AES hyperparameters used in the reported experiments.}
\label{tab:aes_hparams}
\begin{tabular}{lc}
\toprule
\textbf{Hyperparameter} & \textbf{Value} \\
\midrule
Quantile level $q$ in Eqs.~\eqref{eq:proxy_offpolicy}--\eqref{eq:proxy_onpolicy} & $0.9$ \\
EMA decay $\beta$ in Eq.~\eqref{eq:ema} & $0.95$ \\
Scale $\kappa$ in Eq.~\eqref{eq:aes_schedule} & $1.0$ \\
Time index $t$ & gradient update step \\
Proxy batch scope & current update batch only \\
$\lambda_{\min}$ (SAC / SQL / MEow) & $10^{-4}$ \\
$\lambda_{\max}$ (SAC / SQL / MEow) & $1.0$ \\
$\lambda_{\min}$ (PPO) & $10^{-4}$ \\
$\lambda_{\max}$ (PPO) & $0.1$ \\
\bottomrule
\end{tabular}
\end{table}

For numerical safety, we compute $\lambda_t$ in FP32 and apply clipping before injecting the scheduled value into the carrier loss or target computation.

\subsection{Baseline Settings}
\label{app:baselines}

\paragraph{SAC baseline.}
The SAC baseline uses automatic temperature tuning.
The target entropy follows the standard dimension-based choice
\begin{equation}
\mathcal{H}_{\mathrm{target}} = -|A|,
\end{equation}
where $|A|$ is the action dimension.
The AES version keeps the same SAC implementation and replaces the static or automatically tuned temperature used in the main entropy term with the scheduled value described above.

\paragraph{PPO, SQL, and MEow baselines.}
The baseline versions of PPO, SQL, and MEow use the standard static entropy weights from the corresponding reference implementation.
Their AES variants keep the same implementation and replace only the entropy weight with the scheduled value.

\paragraph{Fair comparison protocol.}
Within each carrier, the baseline and AES variant share the same architecture, optimizer, learning rate, batch size, rollout or replay settings, target update rule, and evaluation pipeline.
This isolates the effect of entropy scheduling.

\subsection{Environment and Drift Protocol}
\label{app:nonstationarity}

\paragraph{Task families.}
We evaluate on three families: Toy 2D multi-goal tasks, MuJoCo continuous-control tasks, and Isaac Gym tasks.

\paragraph{Non-stationarity patterns.}
Each task is trained under five patterns:
\begin{itemize}
\item \textbf{Steady:} environment parameters are fixed throughout training.
\item \textbf{Abrupt:} parameters change instantaneously at pre-defined change points.
\item \textbf{Linear:} parameters drift linearly over time.
\item \textbf{Periodic:} parameters vary periodically.
\item \textbf{Mixed:} abrupt and gradual changes are combined in the same run.
\end{itemize}

\paragraph{Change-point alignment.}
All change points are aligned by training progress, that is, by the fraction of total environment interaction steps.
Each method therefore encounters the same drift events at the same points in the training horizon.

\paragraph{Drift channels.}
\begin{itemize}
\item \textbf{Toy:} goal location changes.
\item \textbf{MuJoCo:} dynamics and target variations.
\item \textbf{Isaac Gym:} physics and task-parameter perturbations.
\end{itemize}

\paragraph{Drift magnitudes.}
The drift magnitudes are fixed across methods within each task and selected so that the induced shifts are clearly visible while preserving stable learning dynamics.
Across all environments, drift segments occupy $20\%$ of the total training horizon.
Periodic drifts complete one full cycle every $20\%$ of training steps.

\paragraph{Toy environments.}
For the 2D multi-goal tasks, each abrupt change relocates the goal by Euclidean distance $0.5$ in normalized coordinates.
For linear drift, the goal moves linearly over total distance $0.5$ within each drift segment.
For periodic drift, the goal oscillates with amplitude $0.25$ per axis, which gives peak-to-peak displacement $0.5$.

\paragraph{MuJoCo environments.}
At each abrupt change point, body mass is multiplied by a factor sampled uniformly from $[0.7,1.3]$ and joint friction is scaled by a factor sampled from $[0.5,1.5]$.
For linear drift, the scale factors interpolate linearly between their lower and upper bounds within each drift segment.
For periodic drift, they follow a sinusoidal schedule with the same extrema.

\paragraph{Isaac Gym environments.}
Gravity is scaled by a factor in $[0.8,1.2]$, and joint damping is scaled in $[0.7,1.3]$.
Abrupt drift applies instantaneous rescaling.
Linear and periodic drifts interpolate smoothly between the same bounds.

\subsection{Training and Evaluation Protocol}
\label{app:training}

\paragraph{Seeds.}
All reported results are averaged over $5$ random seeds, with seed values \{1, 2, 3, 4, 5\}.
The seeds control environment initialization, network initialization, minibatch sampling, and stochastic action sampling.

\paragraph{Replay and rollout handling across drift.}
We do not reset replay buffers, policy networks, or value networks at change points.
The agent is trained continuously through the full non-stationary schedule.

\paragraph{Compute environment.}
Experiments were run on NVIDIA A100 GPUs with 40GB memory.
Training used Python 3.9, PyTorch 1.13, and CUDA 11.7.
We fix random seeds for Python, NumPy, and PyTorch in each run.
We do not enforce fully deterministic GPU kernels; instead, variability is summarized through repeated runs with independent seeds.

\paragraph{Evaluation frequency and episodes.}
Policies are evaluated every $10{,}000$ environment steps.
Each evaluation uses $10$ episodes executed with a deterministic policy (mean action for stochastic policies).
The reported evaluation return is the average undiscounted episode return over these episodes.
Evaluation points are aligned by environment interaction steps across all methods.

\paragraph{Computational overhead.}
AES adds negligible overhead.
At each update it requires one scalar statistic from existing TD-style residuals and one closed-form scheduler update.
No additional forward or backward passes are introduced.

\subsection{Metrics and Reporting Conventions}
\label{app:metrics}

We report the three metrics defined in Section~\ref{sec:experiments}.
Here we record the exact reporting convention used in the experiments.

\begin{itemize}
\item \textbf{Normalized AUC (nAUC).} We compute the area under the return-versus-environment-steps curve and normalize it by the Steady SAC baseline within the same task family.
\item \textbf{Performance drop area ratio (DropRatio).} For each non-stationary pattern, we compare the AUC of that run with the corresponding steady run of the same method.
\item \textbf{Abrupt-change recovery time.} We record the number of environment steps required for performance to return to the pre-change level after each abrupt shift, then normalize by the total training horizon.
\end{itemize}

For line plots, we report the mean over seeds and use the same evaluation code path for each baseline and its AES counterpart.
For tables aggregated at the task-family level, values are averaged over the tasks in that family under the same non-stationarity pattern.

\end{document}